\renewcommand{\Re}{\mathbb{R}}
\newcommand{\norm}[1]{\left\|#1\right\|}
\newtheorem{theorem}{Theorem}
\newtheorem{lem}[theorem]{Lemma}
\newtheorem{sigmodel}[theorem]{Asssumption}%{Model}%
\newtheorem{corollary}[theorem]{Corollary}
\newtheorem{definition}[theorem]{Definition}
\newtheorem{remark}[theorem]{Remark}
\newtheorem{example}[theorem]{Example}
\newcommand{\bi}{\begin{itemize}}
\newcommand{\ei}{\end{itemize}}
\newcommand{\ben}{\begin{enumerate}}
\newcommand{\een}{\end{enumerate}}
\newcommand{\bean}{\begin{eqnarray*} }
\newcommand{\eean}{\end{eqnarray*} }
\newcommand{\bea}{\begin{eqnarray} }
\newcommand{\eea}{\end{eqnarray} }
\newcommand{\ba}{\begin{align*} }
\newcommand{\ea}{\end{align*} }
\newcommand{\nn}{\nonumber}
\newcommand{\n}{{\cal N}}
\newcommand{\nois}{g}
\newcommand{\epsbnd}{d(\alpha)}
\newcommand{\epsden}{d_{\mathrm{denom}}(\alpha)}
\newcommand{\rest}{\mathrm{rest}}
\newcommand{\xhat}{\bm{\hat{x}}}
\newcommand{\bl}{\begin{frame}}
\newcommand{\el} {\end{frame}}
\renewcommand\thetheorem{\arabic{section}.\arabic{theorem}}
\newcommand{\wt}{\bm{w}_t}
\newcommand{\mt}{\bm{m}_t}
\newcommand{\xt}{\bm{x}_t}
\newcommand{\x}{\bm{x}}
\renewcommand{\l}{\bm{\ell}}
\newcommand{\lt}{\bm{\ell}_t}
\newcommand{\lhat}{\hat{\bm{\ell}}}
\newcommand{\yt}{\bm{y}_t}
\newcommand{\y}{\bm{y}}
\newcommand{\w}{\bm{w}}
\renewcommand{\v}{{\bm{v}}}
\newcommand{\vt}{\bm{v}_t}
\renewcommand{\a}{\bm{a}}
\newcommand{\e}{\bm{e}} %{\bm{w}}  %
\newcommand{\et}{\bm{e}_t}%{\bm{w}_t}%
\newcommand{\at}{\bm{a}_t}
\newcommand{\I}{\bm{I}}
\newcommand{\Lam}{\bm{\Lambda}}
\newcommand{\T}{\mathcal{T}}
\newcommand{\D}{\bm{D}}
\newcommand{\A}{\bm{A}}
\renewcommand{\P}{\bm{P}}
\newcommand{\Phat}{\hat{\bm{P}}}
\newcommand{\Span}{\operatorname{span}} %{\operatorname{range}}
\newcommand{\E}{\mathbb{E}}
\newcommand{\SE}{\mathrm{SE}}
\newcommand{\M}{\bm{M}}
\renewcommand{\L}{\bm{L}}
\newcommand{\matr}[2]{ \left[\begin{array}{cc}
     #1 \\
     #2
   \end{array}
  \right]
  }
\newcommand{\B}{\bm{B}}
\renewcommand{\Re}{\mathbb{R}}
\newcommand{\iidsim}{\overset{\mathrm{iid}}{\sim}} 
\pgfplotsset{
        my legend style compare/.style={
            legend style={
                at={(2.35,.85)},
                anchor=north west,
            },
            legend columns=1,
	    legend style={font=\small},
        },
            x tick label style={font =\small, /pgf/number format/1000 sep=},
            y tick label style={font =\small, /pgf/number format/1000 sep=},
        cycle multi list={
        {red, line width=0.6pt, mark=o,mark size=2pt}, 
        {blue, line width=0.6pt, mark=square,mark size=1.5pt}, 
        {teal, line width=0.6pt, mark=triangle,mark size=2.5pt},
		},
}
\pgfplotsset{
  log y ticks with fixed point/.style={
      yticklabel={
        \pgfkeys{/pgf/fpu=true}
        \pgfmathparse{exp(\tick)}%
        \pgfmathprintnumber[fixed relative, precision=3]{\pgfmathresult}
        \pgfkeys{/pgf/fpu=false}
      }
  }
}
\newcommand{\Z}{\bm{Z}}
\newcommand{\bx}{\bm{x}}
\newcommand{\by}{\bm{y}}
\newcommand{\bz}{b}%{b_0}
\begin{document}

\title{Finite Sample Guarantees for PCA in Non-Isotropic and Data-Dependent Noise}
%\author{} 
\author{Namrata Vaswani and Praneeth Narayanamurthy \\
\texttt{\{namrata, pkurpadn\}@iastate.edu} \\
Department of Electrical and Computer Engineering \\
Iowa State University}

\date{}
\maketitle

\begin{abstract}
This work obtains novel finite sample guarantees for Principal Component Analysis (PCA). These hold even when the corrupting noise is non-isotropic, and a part (or all of it) is data-dependent. Because of the latter, in general, the noise and the true data are correlated. The results in this work are a significant improvement over those given in our earlier work where this ``correlated-PCA" problem was first studied. In fact, in certain regimes, our results imply that the sample complexity required to achieve subspace recovery error that is a constant fraction of the noise level is near-optimal. Useful corollaries of our result include guarantees for PCA in sparse data-dependent noise and for PCA with missing data. An important application of the former is in proving correctness of the subspace update step of a popular online algorithm for dynamic robust PCA.%
\end{abstract}

%\end{frontmatter}

\section{Introduction}
Principal Components Analysis (PCA) is among the most frequently used tools for dimension reduction for a wide variety of data analysis applications. Some examples include exploratory data analysis, data classification, image or video retrieval, face recognition, and recommendation system design. %It is often the first step in exploratory data analysis as well as for data classification, image or video retrieval, face recognition, and many more applications.
Given a matrix of observed data, the goal of PCA is to compute a small number of orthogonal directions that contain most of the variability of the data. %These directions are referred to as the principal components and their span is called the ``principal subspace".
These principal components are easily computed via singular value decomposition (SVD) on the observed data matrix. 
%To use PCA for dimension reduction, one projects the observed data onto this subspace.
%(noise is independent of the true data and is isotropic)
%%obtains finite sample (non-asymptotic) guarantees for the singular value decomposition (SVD) solution to the Principal Component Analysis (PCA) problem.
%
%This work studies the Principal Component Analysis (PCA) problem in a non-asymptotic setting. It obtains guarantees for the most commonly used PCA solution, {\em  singular value decomposition (SVD) on the observed data matrix (henceforth referred to as ``SVD" or ``simple SVD").}

PCA is a classical and very well-studied problem. There has been a large amount of work on analyzing PCA, however most existing results for PCA are asymptotic, e.g., see \cite{hong_balzano}, and references therein. 
%However, to our best knowledge, almost all existing guarantees for it are either asymptotic or assume that the data points follow an identical spiked covariance model \cite{spiked_cov} or both. Spiked covariance model means that true data (``signal") and noise are independent, or at least uncorrelated, and the noise is isotropic (its covariance matrix is a multiple of identity). A recent work \cite{hong_balzano} provides asymptotic analysis for one-dimensional PCA for the case when the identically distributed requirement is removed, but it still assumes the spiked covariance model.
While asymptotic analysis is useful because it provides limits on what can be done, it is less practically relevant. A very nice work by Nadler \cite{nadler} provides finite sample guarantees for one-dimensional PCA that hold under the spiked covariance model  \cite{spiked_cov}. Spiked covariance model means that true data (``signal") and noise are independent, or at least uncorrelated, and the noise is isotropic (noise power in all directions is equal). A simple example of isotropic noise is noise that is zero mean with a covariance matrix that is a scalar multiple of identity.
%
%Our recent work \cite{corpca_nips} went one step further and studied the case of $r$-dimensional PCA in data-dependent noise (noise and data are correlated).
%The only finite sample guarantees for the SVD solution that we are aware of are \cite{nadler}, and our recent work \cite{corpca_nips}.
There is also much new work on analyzing streaming solutions for PCA in the non-asymptotic setting, e.g. \cite{onlinePCA3,streamingPCA2}, however that is a different problem (places memory constraints on the algorithm) and we will not discuss it here. %(this list is not complete though since this is not the focus of the current work). %however streaming PCA is a different problem. %There is also a recent paper by Vershynin \cite{versh_cov_est} that only provides a concentration bound for the sample covariance matrix (and that can be used to get a guarantee for PCA by combining with the Davis-Kahan sin theta theorem).
All of the above works either assume the spiked covariance model \cite{spiked_cov,hong_balzano,nadler,onlinePCA3} or only analyze one-dimensional PCA \cite{streamingPCA2,hong_balzano,nadler} or both \cite{hong_balzano,nadler}.

Our work obtains novel finite sample (non-asymptotic) guarantees for $r$-dimensional PCA (with $r \ge 1$) that hold even when the corrupting noise is non-isotropic, and, a part, or all, of it is data-dependent. Because of the latter, in general, the data and noise are no longer independent (or even uncorrelated). A special case of this problem was first studied in \cite{corpca_nips} where we called it ``correlated-PCA". As we will explain, the current work significantly improves upon the results of \cite{corpca_nips}.%

% (the noise can be heteroscedastic

%and many also for streaming PCA, e.g., \cite{onlinePCA3}, , except our recent work \cite{corpca_nips},
%In this work, we obtain finite sample guarantees for PCA in a setting where the corrupting noise can be non-isotropic; and, a part, or all, of it is also data-dependent. Because of the latter, in general, the data and noise are no longer independent (or even uncorrelated). A special case of this problem was first studied in \cite{corpca_nips} where we called it ``correlated-PCA". As we will explain, the current work significantly improves upon the results of \cite{corpca_nips}.

%We state results under both a bounded-ness assumption on data and noise and under a more general sub-Gaussian (e.g. Gaussian) data and noise assumption.

%Henceforth we refer to the basic SVD strategy as  {\em simple SVD} or just {\em SVD}. %For the reader who is more familiar with eigenvalue decomposition (EVD),

% of the observed data. %empirical covariance matrix of the observed data. and mutual independence
%We referred to this as simple EVD in \cite{corpca_nips}.% where we first studied this problem.
%In fact the EVD interpretation makes it easier to explain the proof strategy.

{\em Notation. }
%We use the interval notation $[a, b]$ to mean all of the integers between $a$ and $b$, inclusive, and similarly for $[a,b)$ etc.
We use $\A'$ to denote transpose of a matrix $\A$.
We use $\|\cdot \|_p$ to denote the $l_p$ norm of a vector or the induced $l_p$ norm of a matrix. Most of this paper only uses $l_2$ norm. At a few places, even if the subscript is missing, it refers to the $l_2$ norm.
For a set of indices $\T$, $\I_{\mathcal{T}}$ refers to an $n \times |\mathcal{T}|$ matrix of columns of the identity matrix indexed by entries in $\mathcal{T}$. For a matrix $\bm{A}$,  $\bm{A}_{\mathcal{T}} := \bm{AI}_{\mathcal{T}}$. A tall matrix, $\P$, with orthonormal columns is referred to as a {\em basis matrix}. We use $\Span(\P)$ to denote the span of the columns of the basis matrix $\P$ and we use $\P_\perp$ to denote a basis matrix whose span is the orthogonal complement of $\Span(\P)$. Thus $\P\P' + \P_\perp\P_\perp{}' = \I$.
For two {\em basis matrices} $\Phat$, $\P$, we define the subspace recovery error (SE) as
\[
\SE(\Phat,\P):= \|(\I - \Phat \Phat')\P\|_2.
\]
This measures the sine of the principal angle between column spans of $\Phat$ and $\P$.%the basis matrices

We re-use the letters $C, c$ at various places to denote different numerical constants. Also, $\frac{1}{\alpha} \sum_t f(t)$ is often used instead of $\frac{1}{\alpha} \sum_{t=1}^\alpha f(t)$.

{\em Problem Setting. }
We study PCA in the following setting which assumes that the data-dependent component of the noise at each time $t$ depends linearly on the true data (signal) vector at time $t$.
For $t=1,2,\dots, \alpha$, we are given $n$-length observed data vectors, $\yt$, that satisfy
\bea \label{yt_mod}
%\vspace{-0.07in}
\y_t:= \lt + \wt + \vt, \text{ where } \lt = \P \at, \  \wt = \M_t \lt, \  \E[\lt \v_t{}']=0, %  \ \text{for all} \ t =1,2,\dots  m, %\ \bm{M}_t = {\bm{M}_{2,t}} {\bm{M}_{1,t}}
%\vspace{-0.07in}
\eea
$\P$ is an $n \times r$ basis matrix with $r \ll n$; $\lt$ is the true data (``signal") vector that lies in an $r$ dimensional subspace of $\Re^n$, $\Span(\P)$; $\at$ is its projection into this subspace; $\wt$ is the data-dependent noise component; and $\vt$ is the uncorrelated noise component, i.e., it satisfies $\E[\lt \v_t{}'] = 0$. The data-dependency matrices $\M_t$ are {\em unknown} and such that the signal-noise correlation $\E[\lt \wt{}'] \neq 0$. %(holds if $\M_t \neq 0$, or $\E[\M_t] \neq 0$ in case $\M_t$ is random).
Thus, we also often refer to $\wt$ as ``correlated" noise.
The goal is to estimate $\Span(\P)$. 
Since the matrices $\M_t$ are {\em time-varying}, observe that,  the $\wt$'s taken together, in general, do not lie in a lower dimensional subspace of $\Re^n$.
%$\| \E[\M_t] \| > b_{low} > 0$.

Data-dependent noise occurs in a large number of applications due to signal reflections or  signal leakage, e.g., in electro-encephalography (EEG) and magneto-encephalography (MEG). It is called interference in these settings. It also often occurs in molecular biology applications when the noise affects the measurement levels through the very same process as the interesting signal \cite{cor_noise_gillberg}. Two other examples where it occurs include PCA with missing data and the subspace update step of the Recursive Projected Compressive Sensing (ReProCS) solution to dynamic robust PCA \cite{rrpcp_perf,rrpcp_aistats}. In these last two examples, the noise also satisfies our required assumption on signal-noise correlation. We explain them in detail in Sec. \ref{apps_missing} and \ref{apps_dynrpca}. %As shown in \cite{rrpcp_aistats,rrpcp_dynrpca} and earlier works, ReProCS is known to significantly outperform existing robust PCA algorithms for a large class of video layering applications (those involving videos with large sized or slow moving foreground objects).%
%In these two examples the data-dependent noise is also sparse, and t is also sparse with changing support. This is one class of problems that
%(robust subspace tracking)

Non-isotropic noise is even more common. In signal processing literature, it is often referred to as ``colored" noise. One common example of this is the noise in different pixels of an image sequence \cite{beck2009fast}. The variance of the noise is often region-dependent and this is what results in the non-isotropy.
%, while isotropic noise is called white noise

%In this case the noise covariance matrix is diagonal but with different diagonal entries (heteroscedastic). %For other such examples, see \cite{balzano_hsced}.

%As we explain later, in these two examples, the noise is also sparse.% and this helps to satisfy the assumption on the data-dependency matrices $\M_t$ required by our results.

{\em The SVD solution. } %We study the {\em  singular value decomposition (SVD)} solution to PCA. %Since SVD is equivalent to eigenvalue decomposition (EVD) of the observed data's sample covariance matrix, some authors refer to the SVD solution as {\em simple EVD} or just {\em EVD}, e.g., this was done in \cite{corpca_nips}.
This computes $\Phat$ as the top $r$ left singular vectors of the observed data matrix $[\y_1, \y_2, \dots, \y_\alpha]$. Equivalently $\Phat$ is the matrix of top $r$ eigenvectors of the $n \times n$ matrix $\D:=\frac{1}{\alpha} \sum_{t=1}^\alpha \yt \yt{}'$. Hence this solution is often also referred to as {\em EVD (eigenvalue decomposition)}.
%Using terminology from \cite{corpca_nips}, we will henceforth refer to the above problem as ``correlated-PCA".

{\em Related Work. }
To our best knowledge, existing guarantees for PCA other than \cite{nadler,corpca_nips} are asymptotic. %A finite sample guarantee was obtained for the $r=1$ case under the spiked covariance model in \cite{nadler}.
Also, see the discussion in \cite[Section 1]{onlinePCA3}. %which studies an algorithm for streaming PCA.
We discuss these and two other tangentially related works \cite{rel_perturb,versh_cov_est} in Sec. \ref{rel_work}.%
%Our older work provided a finite sample guarantee for $r$-PCA and data-dependent noise \cite{corpca_nips}.
%Spiked covariance model assumes isotropic noise and independence of data and noise. This is valid in practice often, but not always. Some examples are given above.
%Our recent work \cite{corpca_nips} studied the case of data-dependent noise, but as we explain below its result had many limitations that we now remove.
%These and some other somewhat related works are discussed in detail in Sec. \ref{rel_work}.

%finite sample guarantees for the simple SVD solution to PCA, other than \cite{corpca_nips}, assume that the true data and the corrupting noise are independent, or, at least uncorrelated, e.g., see \cite{nadler,versh_cov_est} and references therein, and, also see the summary of existing batch PCA guarantees given in Section 1 of \cite{onlinePCA3}.

%(Theorem \ref{mainthm})
{\em Contributions. }
This work, which builds on work in \cite{corpca_nips}, is the first to study PCA in a non-isotropic and data-dependent noise setting. Our main result (Theorem \ref{mainthm}) shows that it is possible to recover the signal subspace, $\Span(\P)$, with error at most $\varepsilon$ as long as (a) a simple assumption on signal-noise correlation holds, (b)  the ratio between the maximum signal-noise correlation and the minimum signal subspace eigenvalue  is upper bounded; (c) the ratio between the noise power outside the signal subspace and the minimum signal subspace eigenvalue is upper bounded; and (d) the sample complexity, $\alpha$, is lower bounded. All the required bounds depend on $\varepsilon$.
We obtain such a result in two settings -  bounded signal and noise and sub-Gaussian (e.g., Gaussian) signal and noise. In most applications, boundedness is a more practical assumption than Gaussianity since data acquisition devices usually have bounded power.
%In this setting, we show that the required sample complexity is $O(\max(r_v,r)\log n)$. Here $r_v$ is the effective dimension of the uncorrelated noise. If this is small, e.g. if it is $O(r)$, then clearly the required sample complexity is near optimal.
%We also obtain a parallel result under a general  ``nice'' sub-Gaussian (e.g. Gaussian) data and noise assumption.
 %As with earlier results, all of which assumed Gaussian noise, this requires the sample complexity to be $O(n)$.

%In recent work \cite{corpca_nips}, the authors studied the correlated-PCA problem described above. %We obtained guarantees for the simple SVD solution and we also analyzed its improvement, called cluster-EVD (cluster-SVD).  (nearly-optimal) of the current work (Theorem \ref{mainthm})
As compared to the result of \cite{corpca_nips}, our results holds under a much weaker signal-noise correlation assumption {\em and} needs a sample complexity lower bound that is much better than the one given in \cite{corpca_nips}.  In fact, for the only data-dependent noise case studied in \cite{corpca_nips}, our sample complexity bound is near-optimal.
Secondly, we generalize the observed data model to also include an uncorrelated, but possibly non-isotropic, noise term. This is a more practically valid noise model since the noise/corruption is usually not fully data-dependent. Moreover, this allow us to obtain the existing isotropic noise results as special cases.
Lastly, we also provide a simple provably correct method for automatic subspace dimension estimation that does not use knowledge of any model parameter (see Theorem \ref{r_unknown_2}).
%we provide a parallel result that holds under a general sub-Gaussian assumption. Finally,

%also provide another provably correct approach to estimate rank that does not assume knowledge of the minimum eigenvalue of the true data's covariance matrix. the minimum eigenvalue of the true data covariance matrix  in most real applications

%We compare simple SVD with two popular robust PCA solutions for a problem involving small magnitude sparse outliers in Table \ref{table_real}.%(small $q$) (PCA in the presence of additive sparse outliers)
%We compare simple SVD with two popular robust PCA solutions for a problem involving small magnitude sparse outliers in Table \ref{table_real}.%(small $q$)
%We should mention that there are some very recent works on fast robust PCA \cite{rpca_gd, rmc_gd} that have the same order of computational complexity as  simple SVD. However, these still require denseness of columns of $\P$, and will be slower than SVD in practice (since their initialization step itself involves an SVD).

{\em Paper Organization. }
%The rest of the paper is organized as follows.
We state and discuss the main results in Sec. \ref{main_res}. Related works are discussed in Sec. \ref{rel_work}. In Sec. \ref{apps}, we show how our result can be applied to the problem of PCA in sparse data-dependent noise (PCA-SDDN) and to its two special cases  - PCA in missing data, and the subspace update step of ReProCS for dynamic robust PCA. Numerical experiments backing our theoretical claims are shown in Sec. \ref{expts}. We conclude in Sec. \ref{conclude}.%and discuss open questions for future work

% and explain how our results improve upon, or are different from, some of these

\section{Main Results} \label{main_res}
In Sec. \ref{assus}, we state our basic assumptions and define a few quantities. In Sec. \ref{uncor} and \ref{ddn}, we state and discuss corollaries for the two special cases - data and only uncorrelated noise and data and only data-dependent noise. We give the most general version of our result (Theorem \ref{mainthm}) in Sec. \ref{mostgen}. This and its corollaries assume that the subspace dimension, $r$, is known. We show how to provably correct estimate $r$ (Theorems \ref{r_unknown_1} and \ref{r_unknown_2}) in Sec. \ref{main_res_unknown_r}.%

\subsection{Basic assumptions} \label{assus}
%We obtain results in the following two settings. % As we discuss later, our results for the sub-Gaussian case can possibly be improved. %The first assumes bounded data and noise and the second assumes general sub-Gaussian (e.g., Gaussian) distributions on both. We precisely state the two assumptions before proceeding further.
In this entire paper, we assume the following.
\begin{sigmodel}\label{basic}
The $\lt$'s satisfy $\lt = \P \at$ with $\at$'s being zero mean and mutually independent random variables (r.v.), with diagonal covariance matrix, $\Lam:=\E[\at \at']$.

The $\vt$'s are zero mean and mutually independent  r.v.'s, with covariance matrix $\bm\Sigma_v:= \E[\vt \vt']$. Also, $\E[\lt \v_t{}'] = 0$ for all $t$, i.e., they are uncorrelated.
%All entries of $\bm\Sigma_v$ are at most constant ($O(1)$) w.r.t. $n$.

(Notice that the model on $\lt$ automatically imposes a model on the data-dependent noise component $\wt:= \M_t \lt$.)
\end{sigmodel}

We define a few quantities to state our results compactly.
\begin{definition}
Let
\ben
\item $\lambda^- :=\lambda_{\min}(\Lam)$, $\lambda^+:=\lambda_{\max}(\Lam)$ and
\[
f:=\frac{\lambda^+}{\lambda^-}.
\]

\item Define the following functions of $\bm\Sigma_v$:
\[
\lambda_{v,\P}^-:= \lambda_{\min}(\P'\bm\Sigma_v \P),  \ \lambda_{v,\rest}^+:= \lambda_{\max}(\bm\Sigma_v - \P \P'\bm\Sigma_v \P \P'), \  \lambda_{v,\P,\P_\perp}:= \|\P_\perp{}'\bm\Sigma_v \P\|_2
\]
and
\[
\lambda_v^+:=\|\bm\Sigma_v\|_2.
\]
It is easy to see that $\lambda_{v,\P,\P_\perp} \le  \lambda_{v,\rest}^+$. Also, $\lambda_{v,\rest}^+ \le \lambda_{v}^+$,  $\lambda_{v,\P}^- \le \lambda_{v}^+$.

\item The following factor will used at various places in our results:
\[
\nois: = \max\left( \frac{\lambda_v^+}{\lambda^-}, \sqrt{ \frac{\lambda_v^+}{\lambda^-} f} \right).
\]

\een
We assume that $\lambda_v^+$ and $\lambda^+$ are at most constant ($O(1)$) with $n$.
\label{def1}
\end{definition}

\begin{remark}\label{timevar_Lam}
For notational simplicity, we have let $\Lam$ and $\bm\Sigma_v$ be constant with $t$. However, all our proofs will go through with minor changes if these are time-varying. We explain the changes needed in Remark \ref{timevar_Lam_changes}.%below Theorem \ref{mainthm}. See
\end{remark}

% %(each entry $(\at)_j$ has a constant bound. Thus,  with the bounds being constant (w.r.t. $n$). Thus

\begin{sigmodel}[Bounded signal and noise]  %[mutual independence and bounded-ness] noise-data correlation subspace projection coefficients,  random vectors (r.v.)
\label{lt_mod}
Assumption \ref{basic} holds and the $\at$'s are element-wise bounded r.v.'s, i.e., there exists a numerical constant, $\eta$, such that,
\[
\max_{j=1,2, \dots r}  \max_t \frac{( \at)_j^2}{\lambda_j(\Lam)} \le \eta.
\]
For example, if $\at$'s are uniformly distributed, then $\eta = 3$. Throughout this paper, $\eta$ will be treated as a numerical constant.

The $\vt$'s are bounded r.v.'s, i.e., there exists an integer $r_v \le C n$ such that
\[
\max_t \|\vt\|_2^2 \le r_v \lambda_v^+.
\]
Here $r_v$ can be interpreted as the {\em ``effective noise dimension" of $\vt$.}
%(Notice that the model on $\lt$ automatically imposes a model on the data-dependent noise component $\wt:= \M_t \lt$.)
\end{sigmodel}

%Since bounded random variables are trivially sub-Gaussian, Assumption \ref{lt_mod} is in fact a special case of Assumption \ref{lt_mod_subg}.% They are trivially sub-Gaussian but there is no reason to believe that their sub-G norm is bounded by $C \sqrt{\|\bm\Sigma_v\|_2}$, unless you assume element-wise finite bounds. In particular for example v_t could be uniform over the 2n coordinate vectors {\pm  \sqrt{n}e_i}. Such a v_t is zero mean, identity covariance, but r_v = n for it. For this Versh proves it is sub-G with norm \sqrt{n} which is too large.

\begin{sigmodel}[Sub-Gaussian signal and noise]  %[mutual independence and bounded-ness] noise-data correlation subspace projection coefficients,  random vectors (r.v.)
\label{lt_mod_subg}
Assumption \ref{basic} holds and $\at$'s and $\vt$'s are sub-Gaussian r.v.'s with sub-Gaussian norms bounded by $C \sqrt{\lambda^+}$ and $C \sqrt{\lambda_v^+}$ respectively. Recall from Definition \ref{def1} that both these are assumed to be $O(1)$ w.r.t. $n$ and so $\at$'s and $\vt$'s are ``nice" sub-Gaussians~\cite{vershynin}.%
\end{sigmodel}

\begin{remark}
We should point out that
Assumption \ref{lt_mod} is not always a special case of Assumption \ref{lt_mod_subg} even though all bounded r.v.'s are formally sub-Gaussian. It is a special case if we assume that $\vt$ is also element-wise bounded by a constant (w.r.t. $n$).
%This is because we do not require $\vt$ to be element-wise bonded. If %$\vt$ is also assumed to be {\em element-wise} bounded, then  Assumption \ref{lt_mod} is {\em is} a special case of Assumption \ref{lt_mod_subg}.
%A sub-Gaussian distribution with sub-Gaussian norm that is a constant is often referred to as a ``nice" sub-Gaussian \cite{vershynin}. We will use this terminology.
%\end{remark}
%\begin{remark}
%All bounded r.v.'s are formally sub-Gaussian, however their sub-Gaussian norm  may not always be constant.
Without this, when $r_v = n$, it is possible that the sub-Gaussian norm of $\vt$ is as large as $\sqrt{n}$ \cite{vershynin}. One example for which this happens is the coordinate distribution \cite[Example 5.25]{vershynin}: $\vt$ is equally likely to take one of  $2n$ possible values $\{\pm \sqrt{n} \e_i \}_{i=1,2,\dots,n}$ where $\e_i$ is the $i$-th column $\I$.
%the following
\label{bnd_subg}
\end{remark}

\subsection{Result for only uncorrelated noise case}\label{uncor}
Before stating the most general result, we state its corollaries for only uncorrelated and only correlated noise.  Also, for simplicity, the results in this and the next subsection assume that the subspace dimension $r$ is known. We explain how to provably correctly estimate $r$ automatically in Sec. \ref{main_res_unknown_r}.

\begin{corollary}[uncorrelated non-isotropic noise]
%Given an $1 > \varepsilon_\SE  > 0$ and
Given data vectors $\yt := \lt + \vt$ for $t=1,2, \dots, \alpha$ with $\lt, \vt$ uncorrelated. Let $\Phat$ denote the matrix of top $r$ eigenvectors of $\D:= \frac{1}{\alpha} \sum_t \yt \yt'$ and define
\[
\epsbnd:= c  \nois \sqrt{\frac{ \max(r_v, r) \log n}{\alpha}} \text{ and } \epsden: = c f \sqrt{\frac{r + \log n}{\alpha}}.
\]
\ben
\item If Assumption \ref{lt_mod} holds, $\alpha^3 > \max(r_v,r) \log n$, and
 $\frac{\lambda_{v,\rest}^+ - \lambda_{v,\P}^-}{\lambda^-} +  \epsbnd  + \epsden < 1$,
%\\ (a simpler sufficient condition is $3 \sqrt{\bz} q f  + \frac{\lambda_v^+}{\lambda^-}  d \le 1$)
then, w.p. at least $1 - 10n^{-10}$,
\[
\SE(\Phat,\P) \le \frac{\frac{\lambda_{v,\P,\P_\perp}}{\lambda^-}   + \epsbnd }{1  -  \frac{\lambda_{v,\rest}^+ - \lambda_{v,\P}^-}{\lambda^-}    - \epsbnd - \epsden }.
\]

\item If  Assumption \ref{lt_mod_subg} holds (instead of Assumption \ref{lt_mod}), then we have the same result as above but with $\epsbnd = \epsbnd_{sG}: = c  \max\left(f,\frac{\lambda_v^+}{\lambda^-}\right) \sqrt{\frac{n}{\alpha}}$.
 %\max\left(\frac{\lambda_v^+}{\lambda^-}, f \right).
Also the result now holds w.p. greater than $ 1- 10 \exp(-cn)$ and we do not need $\alpha^3 \ge r \log n$.
\een
\label{only_uncor}
\end{corollary}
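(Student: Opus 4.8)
\emph{Reduction via the $\sin\theta$ theorem.} The plan is to apply the Davis--Kahan $\sin\theta$ theorem in block form to $\D$, decomposed in the orthonormal basis $[\,\P\ \ \P_\perp\,]$. Taking the ``unperturbed'' matrix to be the block-diagonal part $\P\P'\D\P\P'+\P_\perp\P_\perp{}'\D\P_\perp\P_\perp{}'$ (whose top-$r$ invariant subspace is $\Span(\P)$ whenever $\lambda_{\min}(\P'\D\P)>\lambda_{\max}(\P_\perp{}'\D\P_\perp)$) and the perturbation to be the off-diagonal blocks, and combining with Weyl's inequality for the perturbed gap, one gets, provided the denominator is positive,
\[
\SE(\Phat,\P)\ \le\ \frac{\|\P_\perp{}'\D\P\|_2}{\lambda_{\min}(\P'\D\P)-\lambda_{\max}(\P_\perp{}'\D\P_\perp)-\|\P_\perp{}'\D\P\|_2}.
\]
It then remains to upper-bound the numerator and to bound the two terms in the denominator in the right directions.

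\emph{Splitting $\D$.} I would write $\D=\D_L+\D_{\mathrm{cr}}+\D_V$, with $\D_L=\tfrac1\alpha\sum_t\lt\lt'=\P\big(\tfrac1\alpha\sum_t\at\at'\big)\P'$, $\D_{\mathrm{cr}}=\tfrac1\alpha\sum_t(\lt\vt'+\vt\lt')$, and $\D_V=\tfrac1\alpha\sum_t\vt\vt'$. Since $\P_\perp{}'\lt=0$, we have $\P_\perp{}'\D_L=0$ and $\P_\perp{}'\D_{\mathrm{cr}}\P_\perp=0$, hence $\P_\perp{}'\D\P_\perp=\P_\perp{}'\D_V\P_\perp$, $\P_\perp{}'\D\P=\P_\perp{}'\D_V\P+\P_\perp{}'\D_{\mathrm{cr}}\P$, and $\P'\D\P=\tfrac1\alpha\sum_t\at\at'+\P'\D_V\P+\P'\D_{\mathrm{cr}}\P$. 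Moreover $\P_\perp{}'\bm\Sigma_v\P_\perp=\P_\perp{}'(\bm\Sigma_v-\P\P'\bm\Sigma_v\P\P')\P_\perp$, so $\lambda_{\max}(\P_\perp{}'\bm\Sigma_v\P_\perp)\le\lambda_{v,\rest}^+$, while $\|\P_\perp{}'\bm\Sigma_v\P\|_2=\lambda_{v,\P,\P_\perp}$ and $\lambda_{\min}(\P'\bm\Sigma_v\P)=\lambda_{v,\P}^-$. Thus everything is reduced to controlling three fluctuation quantities: $\|\D_V-\bm\Sigma_v\|_2$, $\|\D_{\mathrm{cr}}\|_2$, and $\|\tfrac1\alpha\sum_t\at\at'-\Lam\|_2$.

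\emph{Concentration and assembly.} Since $\at,\vt$ are uncorrelated ($\E[\lt\vt']=0$), the summands of $\D_{\mathrm{cr}}$ are zero-mean; I would apply a matrix Bernstein inequality to $\D_{\mathrm{cr}}$ and to $\D_V-\bm\Sigma_v$ (using the deterministic bounds $\|\vt\|_2^2\le r_v\lambda_v^+$, $(\at)_j^2\le\eta\lambda_j$ so $\|\lt\|_2^2\le\eta r\lambda^+$, together with $\|\bm\Sigma_v\|_2=\lambda_v^+$), and a covariance-concentration bound to $\tfrac1\alpha\sum_t\at\at'-\Lam$. Using $\lambda^-\nois=\max(\lambda_v^+,\sqrt{\lambda_v^+\lambda^+})$, $f\lambda^-=\lambda^+$, and the regime hypothesis $\alpha^3>\max(r_v,r)\log n$ to absorb the sub-exponential (higher-order) terms of the Bernstein bounds into the leading $\sqrt{\cdot/\alpha}$ terms, these yield, on an event of probability at least $1-10n^{-10}$, $\|\D_V-\bm\Sigma_v\|_2+\|\D_{\mathrm{cr}}\|_2\le\lambda^-\epsbnd$ and $\|\tfrac1\alpha\sum_t\at\at'-\Lam\|_2\le\lambda^-\epsden$. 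Feeding these in via Weyl's inequality gives $\|\P_\perp{}'\D\P\|_2\le\lambda_{v,\P,\P_\perp}+\lambda^-\epsbnd$, $\lambda_{\max}(\P_\perp{}'\D\P_\perp)\le\lambda_{v,\rest}^++\lambda^-\epsbnd$, and $\lambda_{\min}(\P'\D\P)\ge\lambda^-+\lambda_{v,\P}^--\lambda^-(\epsbnd+\epsden)$; the hypothesis $\tfrac{\lambda_{v,\rest}^+-\lambda_{v,\P}^-}{\lambda^-}+\epsbnd+\epsden<1$ makes the $\sin\theta$ denominator at least $\lambda^-\big(1-\tfrac{\lambda_{v,\rest}^+-\lambda_{v,\P}^-}{\lambda^-}-\epsbnd-\epsden\big)>0$, and dividing numerator and denominator by $\lambda^-$ gives exactly the stated bound. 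For part~2 the only change is that $\|\vt\|_2^2\le r_v\lambda_v^+$ is unavailable, so the matrix Bernstein steps are replaced by sub-Gaussian covariance/cross-moment concentration (\cite{vershynin}); this trades every $\sqrt{\cdot\log n/\alpha}$ by $\sqrt{n/\alpha}$, collapses all fluctuations into $\epsbnd_{sG}=c\max(f,\lambda_v^+/\lambda^-)\sqrt{n/\alpha}$, upgrades the failure probability to $10e^{-cn}$, and removes the need for $\alpha^3>r\log n$.

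\emph{Main obstacle.} The delicate step is the matrix Bernstein bound on the cross term $\D_{\mathrm{cr}}$: although its summands are mean-zero, $\|\lt\vt'\|_2$ can be of order $\sqrt{r\,r_v\,\lambda^+\lambda_v^+}$, so one must compute the matrix-variance parameter carefully --- this requires bounding $\E[(\vt'\lt)\lt\vt']$, $\E[\|\vt\|^2\lt\lt']$ and $\E[\|\lt\|^2\vt\vt']$ for $\lt,\vt$ that are only \emph{uncorrelated}, not independent --- and then invoke $\alpha^3>\max(r_v,r)\log n$ so that the sub-exponential Bernstein contribution is dominated; getting the dimension factor to come out as $\max(r_v,r)$ rather than $r\,r_v$, and tracking the numerical constants so the result matches $\epsbnd$ verbatim, is where the real work lies.
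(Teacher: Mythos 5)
Your strategy is correct and rests on the same two pillars as the paper's proof --- the Davis--Kahan $\sin\theta$ theorem (Lemma \ref{sintheta}) combined with Weyl, followed by matrix Bernstein for the $\vt\vt{}'$ and $\lt\vt{}'$ fluctuations and Vershynin's sub-Gaussian bound for $\frac{1}{\alpha}\sum_t \at\at{}'-\Lam$ --- but you apply $\sin\theta$ to a different reference matrix. The paper takes $\D_0=\P(\frac{1}{\alpha}\sum_t\at\at{}'+\P'\bm\Sigma_v\P)\P'$, which is rank $r$, so $\lambda_{r+1}(\D_0)=0$ and the entire out-of-subspace noise enters the denominator only once, through $\lambda_{\max}(\E[\D-\D_0])\le\lambda_{v,\rest}^+$, a single quantity that already accounts for both the $\P_\perp{}'\bm\Sigma_v\P_\perp$ block and the cross block. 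You instead take the block-diagonal part of $\D$ in the $[\P\ \ \P_\perp]$ basis, which forces you to subtract $\lambda_{\max}(\P_\perp{}'\D\P_\perp)$ and $\|\P_\perp{}'\D\P\|_2$ separately. Carrying your own estimates through, the denominator you obtain is $1-\frac{\lambda_{v,\rest}^+-\lambda_{v,\P}^-}{\lambda^-}-\frac{\lambda_{v,\P,\P_\perp}}{\lambda^-}-3\epsbnd-\epsden$ rather than the stated $1-\frac{\lambda_{v,\rest}^+-\lambda_{v,\P}^-}{\lambda^-}-\epsbnd-\epsden$: the extra $2\epsbnd$ is absorbable into the unspecified constant $c$, but the extra $\lambda_{v,\P,\P_\perp}/\lambda^-$ is a model quantity and is not, so the claim that you recover the stated bound ``exactly'' is slightly off. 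Since $\lambda_{v,\P,\P_\perp}\le\lambda_{v,\rest}^+$ and the corollary is only useful when $\lambda_{v,\P,\P_\perp}\ll\lambda^-$, this is a cosmetic loss rather than an error, but it shows what the rank-$r$ choice of $\D_0$ buys: it avoids charging the cross-covariance to the spectral gap a second time. Everything else checks out against the paper: your block identities ($\P_\perp{}'\D\P_\perp=\P_\perp{}'\D_V\P_\perp$, etc.) are right, your three concentration targets coincide with those in Lemma \ref{hp_bnds} --- including the $\max(r_v,r)$ dimension factor for the cross term, which indeed comes from the matrix-variance parameter $\max(\alpha\eta r\lambda^+\lambda_v^+,\ \alpha r_v\lambda^+\lambda_v^+)$ while the $\sqrt{r\,r_v}$ worst-case norm of $\lt\vt{}'$ only enters the sub-exponential tail and is neutralized by $\alpha^3>\max(r_v,r)\log n$ --- and your description of the sub-Gaussian modification for part 2 is exactly what the paper does in Appendix \ref{proof_mainthm_subg}.
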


\begin{proof}[Proof Outline]
This is a corollary of Theorem \ref{mainthm} given later and proved in the Appendix. We give the main proof idea here. It relies on the Davis-Kahan $\sin \theta$ theorem \cite{davis_kahan} which states the following.
\begin{lem}[Davis-Kahan $\sin \theta$ theorem]\label{sintheta}
Let $\D_0$ be a Hermitian matrix whose span of top $r$ eigenvectors equals $\Span(\P)$. Let $\D$ be the Hermitian matrix with top $r$ eigenvectors  $\Phat$. Then,
\begin{align}
\SE(\Phat,\P) \le \frac{\|(\D-\D_0)\P\|_2}{\lambda_r(\D_0) - \lambda_{r+1}(\D)}  \le  \frac{\|(\D-\D_0)\P\|_2}{\lambda_r(\D_0) - \lambda_{r+1}(\D_0) - \lambda_{\max}(\D-\D_0)}
%\label{sintheta_bnd}
\label{sintheta_bnd_2}
\end{align}
as long as the denominator is positive. The second inequality follows from the first using Weyl's inequality.
\end{lem}
%To get Corollary \ref{only_uncor},
We apply the above result with $\D_0= \P(\frac{1}{\alpha} \sum_t \at \at' + \P'\bm\Sigma_v \P)\P'$ and use
$\lambda_r(\D_0) \ge \lambda_{v,\P}^- + \lambda^- - \|\frac{1}{\alpha} \sum_t \at \at' - \Lam\|_2 $, $\lambda_{r+1}(\D_0)=0$, and $\D-\D_0 = \E[\D-\D_0] + (\D-\D_0 - \E[\D- \D_0])$ to simplify the bound. We then use appropriate concentration inequalities to upper bound $\|\D-\D_0 - \E[\D-\D_0]\|_2$ and $\|\frac{1}{\alpha} \sum_t \at \at' - \Lam\|_2$. Matrix Bernstein \cite{tail_bound} is used for the former and Vershynin's sub-Gaussian result \cite[Theorem 5.39]{vershynin} is used for the latter. Since the latter involves $r \times r$ matrices, this gives a better bound - $\epsden$ defined above - than matrix Bernstein would give for this term.
%than matrix Bernstein for it. The $\epsden$ term appears because of this.
%
%We get a slightly stronger version of Corollary \ref{only_uncor} by letting $\D_0= \P(\Lam + \P'\bm\Sigma_v \P)\P'$, writing out $\D-\D_0$ as $(\E[\D] - \D_0) + (\D-\E[\D])$ and simplifying the above bound. We then apply appropriate concentration inequalities - matrix Bernstein \cite{tail_bound} or Vershynin's sub-Gaussian result \cite[Theorem 5.39]{vershynin} -  to show that $\|\D - \E[\D]\|$ is small with high probability (whp) as long as $\alpha$ is large enough.
%To get the actual result of Corollary \ref{only_uncor}, we use $\D_0= \P(\frac{1}{\alpha} \sum_t \at \at' + \P'\bm\Sigma_v \P)\P'$, write out $\D-\D_0$ as $(\E[\D-\D_0]) + (\D-\D_0 - \E[\D- \D_0])$ and apply concentration inequalities to upper bound $\| (\D-\D_0 - \E[\D-\D_0]) \|_2$ and lower bound $\lambda_{\min}(\D_0)$.
%(sample complexity bound holds).
%
\end{proof}

A further corollary of the above result essentially recovers the subspace error bound given in \cite{nadler} for the case of isotropic independent noise (spiked covariance model). This follows because, when $\bm\Sigma_v = \lambda_v^+ \I$, $\lambda_{v,\P,\P_\perp}=0$ and $\lambda_{v,\rest}^+ = \lambda_{v,\P}^- = \lambda_v^+$.
The result of \cite{nadler} also assumed $r=1$ and Gaussianity.

\begin{corollary}[uncorrelated isotropic noise]
%Given an $1 > \varepsilon_\SE  > 0$ and
In the setting of Corollary \ref{only_uncor}, if $\bm\Sigma_v =  \lambda_v^+ \I$, then the following simpler result holds: if $\alpha^3 > \max(r_v,r) \log n$, and $ \epsbnd + \epsden< 0.95$,
then, w.p. at least $1 - 10n^{-10}$, $\SE(\Phat,\P) \le \frac{\epsbnd }{1 - \epsbnd - \epsden}.$ In the sub-Gaussian case, $\epsbnd \equiv \epsbnd_{sG}$.%
%
%Given data vectors $\yt := \lt + \vt$ for $t=1,2, \dots, \alpha$. Let $\Phat$ denote the matrix of top $r$ eigenvectors of $\D:= \frac{1}{\alpha} \sum_t \yt \yt'$.
%If Assumption \ref{lt_mod} holds with $\bm\Sigma_v = \lambda_v^+ \I$, and if $\alpha^3 > \max(r_v,r) \log n$, and $ \epsbnd + \epsden< 0.95$,
%then, w.p. at least $1 - 10n^{-10}$,
%$\SE(\Phat,\P) \le \frac{\epsbnd }{1 - \epsbnd - \epsden}.$
%If  Assumption \ref{lt_mod_subg} holds with $\Sigma_v =  \lambda_v^+ \I$, and if $ \epsbnd_{sG} < 0.95$, then w.p. at least $1 - 10 \exp(-cn)$,  $\SE(\Phat,\P) \le \frac{\epsbnd_{sG} }{1 - \epsbnd_{sG} }$.
%Here $\epsbnd, \epsden$ and $\epsbnd_{sG}$ are as defined in Corollary \ref{only_uncor}.
\label{spiked_cov}
\end{corollary}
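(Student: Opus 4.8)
The plan is to derive Corollary~\ref{spiked_cov} as an immediate specialization of Corollary~\ref{only_uncor} to the case $\bm\Sigma_v = \lambda_v^+ \I$; the entire content is to re-evaluate the three $\bm\Sigma_v$-dependent quantities of Definition~\ref{def1} under this choice, after which the bound reduces algebraically. First I would use that $\P$ is a basis matrix, so $\P'\P = \I_r$, to get $\P'\bm\Sigma_v\P = \lambda_v^+ \I_r$ and hence $\lambda_{v,\P}^- = \lambda_{\min}(\lambda_v^+\I_r) = \lambda_v^+$. Next, $\bm\Sigma_v - \P\P'\bm\Sigma_v\P\P' = \lambda_v^+(\I - \P\P'\P\P') = \lambda_v^+(\I - \P\P') = \lambda_v^+\,\P_\perp\P_\perp{}'$; since $\P_\perp\P_\perp{}'$ is an orthogonal projector of rank $n-r$, its largest eigenvalue is $1$ (here $r<n$, the case $r=n$ being vacuous), so $\lambda_{v,\rest}^+ = \lambda_v^+$. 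Finally $\lambda_{v,\P,\P_\perp} = \|\P_\perp{}'\bm\Sigma_v\P\|_2 = \lambda_v^+\|\P_\perp{}'\P\|_2 = 0$ because $\Span(\P_\perp) \perp \Span(\P)$. Note also that $\lambda_v^+ = \|\bm\Sigma_v\|_2$ is unchanged, so $\nois$, and hence $\epsbnd$ and $\epsden$, are exactly as in Corollary~\ref{only_uncor}.

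Substituting these identities into Corollary~\ref{only_uncor}, we get $\frac{\lambda_{v,\rest}^+ - \lambda_{v,\P}^-}{\lambda^-} = 0$ and $\frac{\lambda_{v,\P,\P_\perp}}{\lambda^-} = 0$. The hypothesis $\frac{\lambda_{v,\rest}^+ - \lambda_{v,\P}^-}{\lambda^-} + \epsbnd + \epsden < 1$ therefore collapses to $\epsbnd + \epsden < 1$, which is implied by the stated $\epsbnd + \epsden < 0.95$ (the slightly stronger numerical constant is just a convenience, keeping the denominator of the conclusion bounded below by $0.05$); the remaining conditions $\alpha^3 > \max(r_v,r)\log n$ and the failure probability $10n^{-10}$ carry over verbatim. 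The conclusion $\SE(\Phat,\P) \le \frac{(\lambda_{v,\P,\P_\perp}/\lambda^-) + \epsbnd}{1 - (\lambda_{v,\rest}^+-\lambda_{v,\P}^-)/\lambda^- - \epsbnd - \epsden}$ becomes $\SE(\Phat,\P) \le \frac{\epsbnd}{1 - \epsbnd - \epsden}$, which is the claim.

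For the sub-Gaussian statement, one invokes part~2 of Corollary~\ref{only_uncor} instead: the same three identities hold (they only use $\bm\Sigma_v = \lambda_v^+\I$ and $\P'\P=\I_r$, not any distributional assumption), so the bound becomes $\SE(\Phat,\P) \le \frac{\epsbnd_{sG}}{1 - \epsbnd_{sG} - \epsden}$ with $\epsbnd_{sG} = c\max(f,\lambda_v^+/\lambda^-)\sqrt{n/\alpha}$, the probability improves to $1 - 10\exp(-cn)$, and the $\alpha^3 \ge r\log n$ requirement is dropped — exactly the ``$\epsbnd \equiv \epsbnd_{sG}$'' replacement asserted in the corollary.

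There is essentially no obstacle: the result is a bookkeeping consequence of three one-line linear-algebra identities plus a reference to the already-established Corollary~\ref{only_uncor}. The only points deserving a word of care are (i) the harmless gap between the ``$<1$'' appearing in Corollary~\ref{only_uncor} and the ``$<0.95$'' stated here, and (ii) recording that $r<n$ so that $\lambda_{\max}(\P_\perp\P_\perp{}') = 1$; in the degenerate case $r=n$ there is no noise component outside $\Span(\P)=\Re^n$ and the statement is trivial.
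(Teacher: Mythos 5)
Your proposal is correct and matches the paper's own (very brief) justification, which simply notes that when $\bm\Sigma_v = \lambda_v^+ \I$ one has $\lambda_{v,\P,\P_\perp}=0$ and $\lambda_{v,\rest}^+ = \lambda_{v,\P}^- = \lambda_v^+$, so the non-isotropy terms in Corollary \ref{only_uncor} vanish. Your additional remarks on the $0.95$ versus $1$ threshold and the $r<n$ caveat are harmless elaborations of the same one-line specialization.
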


From Corollary \ref{spiked_cov}, it is clear that, in case of isotropic noise, to achieve subspace error below $\epsilon$ we  only need a lower bound on sample complexity $\alpha$. However, in the general case (Corollary \ref{only_uncor}), we need this sample complexity bound {\em and} an extra assumption such as the following:
%Now consider the result of Corollary \ref{only_uncor} for the general case when $\bm\Sigma_v \neq c \I$.  To achieve error below a given $\epsilon$ this needs more assumptions than just a sample complexity bound.
%One sufficient conditions is
\begin{align}
& \lambda_{v,\rest}^+ - \lambda_{v,\P}^- < 0.5 \lambda^-, \text{ and }  2 \lambda_{v,\P, \P_\perp} < 0.5 \epsilon \lambda^-.
\label{noise_bnd_v}
\end{align}
To understand why more assumptions are needed in the general case, observe that $\E[\D] = \P \Lam \P' + \bm\Sigma_v$. If $\bm\Sigma_v = c \I$ (isotropic noise), then the span of top $r$ eigenvectors of $\E[\D]$ is equal to $\Span(\P)$. Thus, as long as $\alpha$ is large enough (sample complexity bound holds), by the $\sin\theta$ theorem stated above, the same will be approximately true for $\Span(\Phat)$ which is the span of top $r$ eigenvectors of $\D$. However, when the noise is not isotropic, this is no longer the case. Without extra assumptions, the span of top eigenvectors of $\E[\D]$ can be very different from $\Span(\P)$. We give a simple example below.
\begin{example}
Suppose that $\bm\Sigma_v = (1.2\lambda^-) (\P_\perp)_1 (\P_\perp)_1{}'$ where $(\P_\perp)_1$ is any one direction from $\Span(\P_\perp)$; thus $\P'(\P_\perp)_1=0$. With this,
\[
\E[\D] = [\P \ (\P_\perp)_1] \matr{\Lam & \ }{ \ & 1.2\lambda^-} \matr{\P'}{(\P_\perp)_1{}'}
\]
(in the above expression, eigenvalues are not in decreasing order).
Since $1.2\lambda^- > \lambda^-$, it is clear that the top $r$ eigenvectors of $\E[\D]$ will be $[\P_1, \P_2, \dots, \P_{r-1}, (\P_\perp)_1]$ (this statement assumes $\lambda_{r-1}(\Lam) > \lambda^-$). Thus their span will be orthogonal to $\P_r$.
As a result the $\SE$ between this span and $\Span(\P)$ will be one. Hence when $\alpha$ is large enough so that $\|\D- \E[\D]\|_2$ is small with high probability (whp), then $\SE(\Phat,\P)$ will also be close to one. To be precise, the following can be shown.

Suppose that $\|\D-\E[\D]\| \le \epsilon \lambda^-$ for any $\epsilon < 0.01$ and that $\lambda_{r-1}(\Lam) \ge 1.1 \lambda^-$. Then $\SE(\Phat,\P) \ge 1 - 11.1 \epsilon$.

To see why this holds, let $\P_{\E D}:=[\P_1, \P_2, \dots, \P_{r-1}, (\P_\perp)_1]$ denote the top $r$ eigenvectors of $\E[\D]$.
Notice that $\lambda_r(\E[\D]) = \max(1.2 \lambda^-, \lambda_{r-1}(\Lam)) \ge 1.1 \lambda^-$ and $\lambda_{r+1}(\E[\D]) = \lambda^-$.
Using Davis-Kahan,
\[
\SE(\Phat,\P_{\E D}) \le \frac{\|\D - \E[\D]\|_2}{\lambda_r(\E[\D]) - \lambda_{r+1}(\E[\D]) - \|\D - \E[\D]\|_2}
\le \frac{\epsilon\lambda^-}{1.1 \lambda^- - \lambda^- - \epsilon\lambda^-} \le 11.1 \epsilon
\]
Using this, triangle inequality, $\P_{\E D}{}'\P_r = 0$ and $\SE(\P_{\E D}{},\Phat) = \SE(\Phat,\P_{\E D}{})$ (this holds since both $\Phat$ and $\P_{\E D}$ have the same dimension),
\bea
&& \SE(\Phat,\P) \ge \SE(\Phat, \P_r) \ge 1 - \|\Phat'\P_r\|_2 = 1 - \|\Phat'(\I - \P_{\E D}{} \P_{\E D}{}') \P_r\|_2  \nn \\
&& \ge 1 - \SE(\P_{\E D}{},\Phat) = 1 - \SE(\Phat,\P_{\E D}{}) \ge 1 - 11.1 \epsilon. \nn
\eea
%Thus, for this simple example, it is not possible to recover the correct span.
\label{bad_Sigmav}
\end{example}

For the above example, $\lambda_{v,\P}^- = 0$ while $\lambda_{v,\rest}^+ = 1.2 \lambda^-$ and hence \eqref{noise_bnd_v} does not hold. Because of this, the expected value of the average energy of $\yt$'s in a direction outside $\Span(\P)$ is larger than that in a direction that is in $\Span(\P)$ and this is what causes $\SE(\Phat,\P)$ to be large.
Assuming  \eqref{noise_bnd_v} helps ensure that the above does not happen. It also ensures that the maximal correlation between a component of the projection of $\yt$'s in $\Span(\P)$ and that in $\Span(\P_\perp)$ is small (bound on $\lambda_{v,\P,\P_\perp}$).
%This condition helps ensure that the expected value of the average energy of $\yt$'s outside $\Span(\P)$ is smaller than that in $\Span(\P)$ and the correlation between the two projections is small. This, in turn, helps ensure that the span of top eigenvectors of $\E[\D]$ will be close to $\Span(\P)$, and hence whp, the same will be true for the recovered span. %Clearly, both condition \ref{noise_bnd_v} and its simplification disallow the $\bm\Sigma_v$ in the above example.

{\em Sample complexity. } Consider the required lower bound on $\alpha$ to achieve error $\epsilon$. In the bounded case, our result needs \\ $\alpha \ge C \max\left( \frac{\nois^2}{\epsilon^2}  \max(r_v,r) \log n, f^2 (r + \log n) \right)$. In the sub-Gaussian case, it needs $\alpha \ge C \frac{\nois^2}{\epsilon^2} n$.
%Consider the bounded case.
Since the subspace dimension is $r$, the minimum number of samples required in any setting is $r$ (this number suffices only when there is no noise outside $\Span(\P)$ and all observations are linearly independent).
Thus, if $r_v$ is small, e.g., if $r_v \in O(r)$, the sample complexity $\alpha$ required to achieve error $\epsilon$ that is a constant fraction of $\nois$ is only $(\log n)$ times more, i.e., it is nearly optimal. %{Moreover, if $\epsilon > \nois/f$, the second term in the $\max$ dominates. This is even smaller and can be further reduced to the order-optimal value of $C f^2 r$ if we are satisfied with a constant probability of success.}.

On the other hand, $r_v$ can be as larger as $n$. If $r_v = C n$, and if $\vt$ is also {\em element-wise} bounded, then it is a ``nice" sub-Gaussian and we can use the sub-Gaussian case result to conclude that, in this case, the required sample complexity is $C n$ (and not $C n \log n$ as predicted by the bounded case result).
However, if $r_v = Cn$ and no other assumption is placed on $\vt$, then, it is not guaranteed to be a ``nice" sub-Gaussian (see Remark \ref{bnd_subg}). In this case, the required sample complexity will indeed be $C (n \log n)$. We discuss this point further in Sec. \ref{rel_work}, where we connect it to the results of \cite{versh_cov_est}.%e.g., this would allow the case of the coordinate distribution discussed earlier,

\begin{remark}\label{tighten_subg}
It may be possible to reduce the required sample complexity lower bound for the sub-Gaussian case in settings where $r_v \ll n$, e.g., if $r_v = C r$, or in the data-dependent noise case discussed below in Sec. \ref{ddn}. For unbounded noise, we can define $r_v$ as $\E[\|\vt\|_2^2]/\lambda_v^+$. In our current proof for the sub-Gaussian case, we use Vershynin's sub-Gaussian result for obtaining all the concentration bounds. However we can try to replace this by an approach motivated by the proof of \cite[Theorem 4.1]{pr_altmin} that relies on the intuition that sub-Gaussian r.v.'s are bound whp. Thus, one can first work with a truncated sub-Gaussian, in our case, truncate each entry to $\sqrt{\log n}$ and use matrix Bernstein, and then deal with the extra errors introduced by this truncation. %involves first dealing with truncated sub-Gaussians which allows us to apply matrix Bernstein.
%a different approach:  truncate each sub-Gaussian vector's entry to $\sqrt{\log n}$, apply matrix Bernstein on this event; followed by showing that the event on which the actual and truncated r.v. do not match has small probability and that the expected values of the quantity of interest, e.g., $\lt \wt{}'$, differs by only a factor of $n^{-c}$ from the expected value when $\lt$ and $\wt$ are truncated. This idea adapts the idea used in the proof of \cite[Theorem 4.1]{pr_altmin} or of \cite[Lemma 6.17]{lrpr_tsp}. %In these cases the authors truncated to $\sqrt{\log \alpha}$ and that worked because $\alpha$ was larger than $n$
%The problem with truncating to $\sqrt{\log n}$ would be that  we may get an extra factor of $n^{-c}$ in the numerator of the $\SE$ bound.
With this approach, we will get an extra factor of $n^{-c}$ in the $\SE$ bound. For large enough $n$, this extra factor is negligible. The advantage will be that we may only need $\alpha \ge C r_v \log^3 n$ instead of $\alpha \ge C n$.%for the new bound to hold instead of $\alpha \ge C n$ needed in the current result.
%pr_altmin
\end{remark}

\subsection{Result for only data-dependent noise case} \label{ddn} \
%We state and discuss the corollary for the case of only data-dependent noise.

\begin{corollary}[Only data-dependent noise]
Given data vectors $\yt := \lt + \wt$ with $\wt = \M_t \lt$, $t=1,2, \dots, \alpha$. Let $\Phat$ denote the matrix of top $r$ eigenvectors of $\D:= \frac{1}{\alpha} \sum_t \yt \yt'$ and, for a scalar $q$, let
\[
\epsbnd:= c  q f \sqrt{\frac{r \log n}{\alpha}} \text{ and } \epsden: = c f \sqrt{\frac{r + \log n}{\alpha}}.
\]
\ben \item
If Assumption \ref{lt_mod} holds, $\alpha^3 > (r \log n)$, and if, for scalars $q, \bz < 1$ satisfying $3\sqrt{\bz} q f  + \epsbnd + \epsden < 1$, the  matrices $\M_t$ can be decomposed as $\M_t = \M_{2,t} \M_{1,t}$  with
$\M_{1,t}$ being such that
\bea
\max_t \|\M_{1,t} \P\|_2 \le q
\label{M1t_bnd}
\eea
and
$\M_{2,t}$ being such that $\|{\bm{M}_{2,t}}\|_2 \le 1$ but %for any $\alpha > \alpha_w:= ,
\bea
% \left\|\frac{1}{\alpha} \sum_{t =1}^\alpha {\bm{M}_{2,t}} \bm{A}_t {\bm{M}_{2,t}}' \right\|_2  \le \bz \max_{t \in [1,\alpha]} \|\bm{A}_t\|.
 \left\|\frac{1}{\alpha} \sum_{t =1}^\alpha {\bm{M}_{2,t}} {\bm{M}_{2,t}}' \right\|_2  \le \bz < 1,
\label{M2t_bnd}
\eea
\label{Mt_cond}
then, w.p. at least $1 - 10n^{-10}$,
\[
\SE(\Phat,\P) \le \frac{ \sqrt{\bz} (2q + q^2) f  + \epsbnd }{1
 - \sqrt{\bz} (2q + q^2) f  - \epsbnd - \epsden}.
\]
\item If  Assumption \ref{lt_mod_subg} holds (instead of Assumption \ref{lt_mod}), then we have the same result as above but with $\epsbnd = \epsbnd_{sG}: = c f \sqrt{\frac{n}{\alpha}}.$
%and we do not need $\alpha^3 \ge r \log n$. Also, the result now holds w.p. greater than $ 1- 10 \exp(-cn)$.
\een
\label{only_cor}
\end{corollary}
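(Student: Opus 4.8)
The plan is to follow the Davis--Kahan route used for Corollary~\ref{only_uncor}: invoke Lemma~\ref{sintheta} with the reference matrix $\D_0 := \P\big(\frac{1}{\alpha}\sum_t \at\at'\big)\P'$, whose range is exactly $\Span(\P)$ once $\frac{1}{\alpha}\sum_t\at\at'\succ0$ (guaranteed by the concentration bound below), so that $\lambda_{r+1}(\D_0)=0$ and $\lambda_r(\D_0) = \lambda_{\min}\big(\frac{1}{\alpha}\sum_t\at\at'\big)\ge\lambda^- - \big\|\frac{1}{\alpha}\sum_t\at\at'-\Lam\big\|_2$. Writing $\lt=\P\at$ and $\yt\yt' = (\I+\M_t)\P\at\at'\P'(\I+\M_t)'$ gives
\[
\D-\D_0 = \frac{1}{\alpha}\sum_t\big(\M_t\P\at\at'\P' + \P\at\at'\P'\M_t' + \M_t\P\at\at'\P'\M_t'\big),
\]
so everything reduces to bounding the numerator $\|(\D-\D_0)\P\|_2$ and the denominator term $\lambda_{\max}(\D-\D_0)$ in \eqref{sintheta_bnd_2}, exploiting the factorization $\M_t=\M_{2,t}\M_{1,t}$.

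Split $\D-\D_0 = \E[\D-\D_0] + \big((\D-\D_0)-\E[\D-\D_0]\big)$. For the first (deterministic) part the key device is the operator-norm Cauchy--Schwarz inequality $\big\|\frac{1}{\alpha}\sum_t A_tB_t\big\|_2\le\big\|\frac{1}{\alpha}\sum_t A_tA_t'\big\|_2^{1/2}\big\|\frac{1}{\alpha}\sum_t B_t'B_t\big\|_2^{1/2}$ applied with $A_t=\M_{2,t}$: e.g.\ $\frac{1}{\alpha}\sum_t\M_t\P\Lam\P'=\frac{1}{\alpha}\sum_t\M_{2,t}(\M_{1,t}\P\Lam\P')$ has norm at most $\sqrt{\bz}\,q\lambda^+$ by \eqref{M1t_bnd}--\eqref{M2t_bnd}, the transpose term is bounded the same way, and $\frac{1}{\alpha}\sum_t\M_t\P\Lam\P'\M_t'=\frac{1}{\alpha}\sum_t\M_{2,t}(\M_{1,t}\P\Lam\P'\M_{1,t}')\M_{2,t}'$ has norm at most $q^2\lambda^+\bz\le q^2\lambda^+\sqrt{\bz}$ (using $\bz<1$). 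Hence $\|\E[\D-\D_0]\|_2\le\sqrt{\bz}(2q+q^2)f\lambda^-$, and, since $\|\P\|_2=1$, the same bound holds for $\|\E[(\D-\D_0)]\P\|_2$; this is exactly the $\sqrt{\bz}(2q+q^2)f$ term in the claim, and the hypothesis $3\sqrt{\bz}qf+\epsbnd+\epsden<1$ together with $q<1$ (so $q^2<q$) keeps the Davis--Kahan denominator positive.

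For the second (random) part, note that because $\D_0$ uses the sample matrix $\frac{1}{\alpha}\sum_t\at\at'$ rather than $\Lam$, the $\P(\at\at'-\Lam)\P'$ piece cancels exactly, so
\[
(\D-\D_0)-\E[\D-\D_0] = \frac{1}{\alpha}\sum_t\big(\M_t\P(\at\at'-\Lam)\P' + \P(\at\at'-\Lam)\P'\M_t' + \M_t\P(\at\at'-\Lam)\P'\M_t'\big),
\]
i.e.\ every summand carries an $\M_t$ factor; hence the uniform summand bound is $O(q\,\eta r\lambda^+)$ (using $\|\M_t\P\|_2\le q$ and $\|\at\|_2^2\le\eta r\lambda^+$) and, via the trailing $\M_{2,t}$ factors, the matrix-Bernstein \cite{tail_bound} variance is $O\!\big(q^2 r(\lambda^+)^2\bz/\alpha\big)$. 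With $\bz<1$ this yields $\big\|(\D-\D_0)-\E[\D-\D_0]\big\|_2\le\epsbnd\lambda^-$ for $\epsbnd=cqf\sqrt{r\log n/\alpha}$, the extra condition $\alpha^3>r\log n$ absorbing the lower-order $R\log n/\alpha$ Bernstein term. Separately, $\big\|\frac{1}{\alpha}\sum_t\at\at'-\Lam\big\|_2\le\epsden\lambda^-$ for $\epsden=cf\sqrt{(r+\log n)/\alpha}$ follows from Vershynin's sub-Gaussian result \cite[Theorem 5.39]{vershynin} applied to the $r\times r$ matrix $\frac{1}{\alpha}\sum_t\at\at'$ (hence $r+\log n$, not $r\log n$). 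Plugging $\|(\D-\D_0)\P\|_2\le\lambda^-(\sqrt{\bz}(2q+q^2)f+\epsbnd)$, $\lambda_r(\D_0)\ge\lambda^-(1-\epsden)$ and $\lambda_{\max}(\D-\D_0)\le\lambda^-(\sqrt{\bz}(2q+q^2)f+\epsbnd)$ into \eqref{sintheta_bnd_2} and cancelling $\lambda^-$ gives the stated bound; for the sub-Gaussian case (Assumption~\ref{lt_mod_subg}) one repeats the argument using \cite[Theorem 5.39]{vershynin} for every concentration step, which replaces $\epsbnd$ by $\epsbnd_{sG}=cf\sqrt{n/\alpha}$.

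I expect the main obstacle to be the matrix-Bernstein bookkeeping for the random part: expanding the three summand types into the nine cross-products, checking that the total variance scales as $q^2 r(\lambda^+)^2\bz/\alpha$, and controlling the uniform bound $R$ tightly enough that $\alpha^3>r\log n$ suffices. The conceptually clean ingredients are (i) choosing $\D_0$ with the \emph{sample} matrix $\frac{1}{\alpha}\sum_t\at\at'$ so that the random part contains only $\M_t$-hit terms, and (ii) the operator-norm Cauchy--Schwarz manipulation that turns hypotheses \eqref{M1t_bnd}--\eqref{M2t_bnd} into the $\sqrt{\bz}$ factors; the remainder is the machinery already used for Corollary~\ref{only_uncor} and presumably packaged by the general Theorem~\ref{mainthm}.
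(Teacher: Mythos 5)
Your proposal matches the paper's proof essentially step for step: the same choice $\D_0=\P\big(\frac{1}{\alpha}\sum_t\at\at'\big)\P'$ in Davis--Kahan, the same split into $\E[\D-\D_0]$ plus deviation, the same Cauchy--Schwarz-for-matrix-sums device extracting the $\sqrt{\bz}$ factors from \eqref{M1t_bnd}--\eqref{M2t_bnd}, matrix Bernstein for the deviation terms, and Vershynin's sub-Gaussian theorem for the $r\times r$ matrix $\frac{1}{\alpha}\sum_t\at\at'$ (and for everything in the sub-Gaussian case). The only (harmless) discrepancy is your claimed extra $\bz$ factor in the Bernstein variance, which does not hold for the $\sum_t\E[\|\wt\|_2^2\,\lt\lt']$ direction, but since you immediately discard it via $\bz<1$ you recover exactly the paper's $\epsbnd$.
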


%$\alpha \ge C f^2 (r + \log n)$. The former clearly dominates (unless $\epsilon$ is very large which is un-interesting).

\begin{proof}[Proof Outline]
This is a corollary of Theorem \ref{mainthm}. The proof idea is similar to that of Corollary \ref{only_uncor}. In this case we apply the $\sin \theta$ theorem with $\D_0 = \P(\frac{1}{\alpha}\sum_t \at \at{}') \P'$.
Also, we need to carefully bound $\|\frac{1}{\alpha} \sum_{t=1}^\alpha \E[\lt \wt{}']\|_2$ and $\|\frac{1}{\alpha} \sum_{t=1}^\alpha \E[\wt \wt{}']\|_2$; this is done in \eqref{bnd_avg_sig_noise_cor} below.%
%A slightly stronger version of this result follows by applying the Davis-Kahan $\sin \theta$ theorem \cite{davis_kahan} given earlier with $\D_0 = \P \Lam \P'$; using the bounds on $\|\frac{1}{\alpha} \sum_{t=1}^\alpha \E[\lt \wt{}']\|_2$ and $\|\frac{1}{\alpha} \sum_{t=1}^\alpha \E[\wt \wt{}']\|_2$ given below in \eqref{bnd_avg_sig_noise_cor} to simplify the expressions; and applying appropriate concentration inequalities to bound $\|\D - \E[\D]\|_2$.
%
% It follows by applying the Davis-Kahan $\sin \theta$ theorem \cite{davis_kahan} given earlier with $\D_0 = \P (\frac{1}{\alpha} \sum_t \at \at') \P'$; rewriting $\D-\D_0$ as $\E[\D-\D_0] + (\D-\D_0 - \E[\D-\D_0])$;
%using the bounds on $\|\frac{1}{\alpha} \sum_{t=1}^\alpha \E[\lt \wt{}']\|_2$ and $\|\frac{1}{\alpha} \sum_{t=1}^\alpha \E[\wt \wt{}']\|_2$ given below in \eqref{bnd_avg_sig_noise_cor} to simplify the expressions; and applying concentration inequalities to upper bound $\| (\D-\D_0 - \E[\D-\D_0]) \|_2$ and lower bound $\lambda_{\min}(\D_0)$.%
\end{proof}

{\em Data-dependent noise. }
Observe a few things about data-dependent noise. First, it is clearly non-isotropic and hence a condition that ensures that the noise is small compared to $\lambda^-$ is needed.
%component of noise orthogonal to $\Span(\P)$ is small compared to $\lambda^-$ is needed. A simpler way to ensure this is to just assume that the
%
Second, because of the assumed linear dependency on $\lt$, and because \eqref{M1t_bnd} holds, the noise power depends linearly on maximum signal power, $\lambda^+$: we have $\|\E[\wt \wt']\|_2 \le q^2 \lambda^+$. Here $q^2$ can be interpreted as a bound on the {\em noise-to-signal ratio}.
Third, the signal-noise correlation is nonzero. In fact, its bound also linearly depends on $\lambda^+$: we have $\|\E[\lt \wt']\|_2 \le q \lambda^+$. Since $q<1$, the latter is, in fact, larger than the noise power bound. %If we apply the Davis-Kahan $\sin \theta$ theorem (specified in the previous subsection) with $\D_0 = \P \Lam \P'$,
From the proof outline, for large enough $\alpha$, the subspace error bound essentially depends on the ratio $\|\E[\D - \D_0]\|_2 / \lambda^-$. Since the signal-noise correlation is nonzero, this ratio is now bounded by $2 \|\frac{1}{\alpha} \sum_{t=1}^\alpha \E[\l_t \w_t{}']\|_2 + \|\frac{1}{\alpha} \sum_{t=1}^\alpha \E[\w_t \w_t{}']\|_2$ instead of just the second term in the only uncorrelated noise case.
%$\alpha \ge C f^2 (r + \log n)$. The former clearly dominates (unless $\epsilon$ is very large which is un-interesting).
Thus, without an  assumption such as \eqref{M2t_bnd} on the signal-noise correlation, one would require $(2q + q^2) \lambda^+$ to be smaller than $0.45 \epsilon  \lambda^-$ to achieve subspace error below $\epsilon$. %Of course in any case, a lower bound on $\alpha$ that ensures $\epsbnd < 0.1 \epsilon$ and $c  f \sqrt{\frac{r + \log n}{\alpha}} < 0.01$ will also be needed.
This is a hard requirement since it implies that $\epsilon$ can never be made smaller than the noise level, $q$. %Moreover, unlike the isotropic but uncorrelated noise case, in this case, the signal-noise correlation is nonzero. In fact, it is also linearly dependent on signal power $\lambda^+$: we have $\|\E[\lt \wt']\|_2 \le q \lambda^+$. This, in fact, is now the most dominant term in the perturbation matrix,  $\D - \P \Lam \P'$, and, because of it, without \eqref{M2t_bnd}, we would require $q \lambda^+$ to be small compared to be $\epsilon_\SE \lambda^-$. This is even more stringent.
%e bound on $(2q + q^2) \lambda^+$
%Of course, in addition, a a

%The assumption \eqref{M2t_bnd} helps resolve the above issue.
Assuming \eqref{M2t_bnd} resolves the above issue. Observe that the subspace error depends on the time-averaged signal-noise correlation and time-averaged noise power, and not on their instantaneous values. The assumption \eqref{M2t_bnd} ensures that the bounds on the time-averaged values of both these are at least $\sqrt{\bz}$ times smaller than the bounds on their instantaneous values: by a careful application of Cauchy-Schwartz inequality, it is not hard to see that (see \eqref{bnd_avg_sig_noise_cor_bnd_1} and \eqref{bnd_avg_sig_noise_cor_bnd_2} in the Appendix):
\begin{align}
& \left\|\frac{1}{\alpha} \sum_{t=1}^\alpha \E[\lt \wt{}'] \right\|_2 \le \sqrt{\bz} q \lambda^+, \text{ and} \\
&  \left\|\frac{1}{\alpha} \sum_{t=1}^\alpha \E[\w_t \w_t{}'] \right\|_2 \le \sqrt{\bz} q^2 \lambda^+.
\label{bnd_avg_sig_noise_cor}
\end{align}
Thus, because \eqref{M2t_bnd} holds, to achieve error $\epsilon$, other than a sample complexity lower bound,
our result just needs
$\sqrt{\bz} (2q + q^2) \lambda^+ < 0.45 \epsilon \lambda^-$. With this, it is possible to achieve subspace recovery error $\epsilon$ that is smaller than $q$ by assuming that $\bz$ is small enough. Of course a lower bound on $\alpha$ that ensures $\epsbnd < 0.1 \epsilon$ and $c  f \sqrt{\frac{r + \log n}{\alpha}} < 0.01$ will also be needed (discussed below).
%(for simplicity, $(2q+q^2)$ is bounded by $3q$ in the stated bound on $\SE$ in our result)

{\em Examples where \eqref{M1t_bnd} and \eqref{M2t_bnd} holds. }
One class of example situations where \eqref{M2t_bnd} would hold is when the data-dependent noise $\wt$ is sparse (PCA in sparse data-dependent noise). Let $\T_t$ denote its support set. Then, in this case, $\wt = \I_{\T_t} \M_{s,t} \lt$ where $\M_{s,t}$ is a $|\T_t| \times n$ data-dependency matrix.
If we pick $\M_{2,t} = \I_{\T_t}$, then $\sum_t \M_{2,t}\M_{2,t}{}'$ will be a diagonal matrix with $(i,i)$-th entry being equal to the number of time instants $t$ for which the index $i$ is part of the support $\T_t$. Hence $\bz$ will equal the maximum fraction of non-zeros in any row of the matrix $[\w_1, \w_2, \dots, \w_\alpha]$. Thus, in this case, \eqref{M2t_bnd} holds as long as this fraction is smaller than one. It holds with a small enough $\bz$ if this fraction is small enough. %With $\M_{2,t}$ chosen this way,  $\M_{1,t} = \M_{s,t}$. With this
Moreover, \eqref{M1t_bnd}  will hold as long as $\|\M_{1,t} \P\|_2 = \|\M_{s,t} \P\|_2$ is small. Two examples where this happens are given next.

A special case of the above problem is PCA in missing data. Let $\T_t$ denote the set of missing entries at time $t$. By setting the missing entries to zero, we can write out the observed data vector as $\yt = \lt - \I_{\T_t} \I_{\T_t}{}' \lt$. Thus, in this case,  $\M_{s,t} = - \I_{\T_t}{}'$ and so, $q$ is a bound on $\| \I_{\T_t}{}' \P\|_2$. Thus, for PCA-missing, $q$ will be small if columns of $\P$ are dense vectors and the number of missing entries at each time, $|\T_t|$, is small. We discuss this case further in Sec. \ref{apps_missing}.
Another special case occurs in the subspace update step of ReProCS for dynamic robust PCA \cite{rrpcp_aistats}. We explain this in detail in Sec. \ref{apps_dynrpca}. Briefly, in this case, $\M_{s,t} = \B (\I - \Phat \Phat')$ where $\B$ is a matrix satisfying $\|\B\| \le 1.2$ and $\Phat$ is a previous estimate of $\Span(\P)$ satisfying $\SE(\Phat, \P) < q/1.2  \ll 1$.
%PCA in missing data is another related instance. As explained earlier, in this case, $\M_{2,t} = \I_{\T_t}$ and $\M_{1,t} = - \I_{\T_t}{}'$.

Another related class of problems where the above assumptions would hold is if $\wt$ is sparse in a basis or dictionary $\bm{Q}$. Then, $\wt = \bm{Q} \I_{\T_t} \M_{s,t} \lt$. In this we can use $\M_{2,t} = \bm{Q} \I_{\T_t} / \|\bm{Q}\|_2$ and $\M_{1,t} =\|\bm{Q}\|_2  \M_{s,t} $. %With this, $\bz$ will again be the fraction of

{\em Sample complexity. }
To achieve error below $\epsilon$, in the bounded case, the sample complexity needed is $\alpha \ge C  \max( \frac{q^2f^2}{\epsilon^2} (r \log n), f^2 (r + \log n) )$. %Clearly the first term dominates (except when $\epsilon$ is much larger than $q$).
%In the typical regime where the noise level is not very small, i.e., whe$q^2 >(r+ \log n)/ (r \log n)$, the first term dominates.
Thus, if $\epsilon$ is a constant fraction of $q$, the sample complexity needed is just $\alpha \ge C  f^2 (r \log n)$. This is nearly optimal. For constant $f$, this is only $O(\log n)$ times the minimum number of samples required to even define an $r$-dimensional subspace. In the sub-Gaussian case, according to our current result, $O(n)$ samples are needed, however as explained earlier, this can possibly be improved in certain settings.

{\em Comparing Corollaries \ref{only_uncor} and \ref{only_cor}.  }
In both results, the bound on $\SE$ depends on the condition number $f=\lambda^+/\lambda^-$, however, the dependence is much weaker in the uncorrelated noise case. In this case, $f$ only appears in terms that contain the sample complexity $\alpha$. Thus, any $f$ can be dealt with by picking a proportionally larger $\alpha$.
However in the data-dependent noise case, $f$ also appears in a term other than $\epsbnd$ or $\epsden$. To get $\SE$ below $\epsilon$, in this case, one needs $\sqrt{\bz}(2q + q^2) f < 0.45\epsilon$. This is a much stronger requirement since it cannot be ensured just by using more samples $\alpha$. This is needed because the data-dependent noise power depends on $\lambda^+$.

%Second, the data-dependent noise case requires a bound on $\sqrt{\bz}q f$ which is the worst case ratio between the time-averaged signal-noise correlation and minimum signal space eigenvalue. The uncorrelated noise case requires a similar bound on a quantity that can be upper bounded by $ \lambda_v^+/\lambda^-$. This is an upper bound on the ratio between the noise power and the  minimum signal space eigenvalue. Since noise power is assumed to be smaller than maximum signal power ($q<1$ and $\lambda_v^+ < \lambda^- \le \lambda^+$), the signal-noise correlation is larger than the noise power. %This is another reason why the data-dependent noise case is harder.

On the other hand, consider the sample complexity  $\alpha$ required to achieve error $\epsilon$ that is a constant fraction of the noise level in the bounded data and noise setting.
In the uncorrelated noise case, this is $C \max(r_v,r)\log n$ whereas in the data-dependent noise case, this is only $C (r \log n)$. %The reason is that, for each $t$, $\|\wt\|_2^2 \le \eta r \lambda^+$ (even though all the $\wt$'s together only lie in $\Re^n$ and not in a lower dimensional subspace of it) while $\|\vt\|_2^2 \le r_v \lambda_v^+$.
% %since the effective noise dimension of $\wt$ is $r$.
If $r_v$ is larger than $r$, the former will need more samples.%In the sub-Gaussian case, both need the same sample complexity.

\subsection{The general result} \label{mostgen}
We now give the most general result.
\begin{theorem}
%Given an $1 > \varepsilon_\SE  > 0$ and
Given data vectors $\yt := \lt + \wt + \vt$ with $\wt = \M_t \lt$ and $\vt$ and $\lt$ uncorrelated. Let $\Phat$ denote the matrix of top $r$ eigenvectors of $\D:= \frac{1}{\alpha} \sum_t \yt \yt'$ and define
\[
\epsbnd:=
c \sqrt{\eta}
 \max \left( q f \sqrt{\frac{r \log n}{\alpha}} ,  \nois \sqrt{\frac{ \max(r_v, r) \log n}{\alpha}} \right), %\text{ and } \epsden: = c \eta f \sqrt{\frac{r + \log n}{\alpha}}.
\]
and
\[
\epsden: = c \eta f \sqrt{\frac{r + \log n}{\alpha}}.
\]
% \max \left( q f \sqrt{\frac{r \log n}{\alpha}} ,  \sqrt{\frac{\lambda_v^+}{\lambda^-} f} \sqrt{\frac{ \max(r_v, r) \log n}{\alpha}}, \frac{\lambda_v^+}{\lambda^-} \sqrt{\frac{r_v \log n}{\alpha}} \right) %c\sqrt{\eta} f \sqrt{\frac{r + \log n}{\alpha}}

\ben
\item If Assumption \ref{lt_mod} holds, $\alpha^3 > \max(r_v,r) \log n$,
\ben
\item if, for a $\bz<1$ and a $q < 1$, the data-dependency matrices $\M_t$ satisfy the assumption given in Corollary \ref{only_cor},
\item and if $\frac{\lambda_{v,\rest}^+ - \lambda_{v,\P}^-}{\lambda^-} + 3\sqrt{\bz} q f + \epsbnd  + \epsden < 1$,
%\\ (a simpler sufficient condition is $3 \sqrt{\bz} q f  + \frac{\lambda_v^+}{\lambda^-}  + \epsbnd \le 1$)
\een
then, w.p. at least $1 - 10n^{-10}$,
\[
\SE(\Phat,\P) \le \frac{\frac{\lambda_{v,\P,\P_\perp}}{\lambda^-} + \sqrt{\bz} (2q + q^2) f + \epsbnd }{1  -  \frac{\lambda_{v,\rest}^+ - \lambda_{v,\P}^-}{\lambda^-}  - \sqrt{\bz} (2q + q^2) f  - \epsbnd - \epsden}.
\]
%(we have given a simplified expression for $\epsbnd$ here; for the exact expression, please see \eqref{def_epsbnd} in the proof)
%in the large $n,r$ setting where $(r\log n)$ dominates $(r+\log n)$.
\item If  Assumption \ref{lt_mod_subg} holds (instead of Assumption \ref{lt_mod}), then we have the same result as above but with $\epsbnd = \epsbnd_{sG}: = c \max\left(\frac{\lambda_v^+}{\lambda^-},f \right)  \sqrt{\frac{n}{\alpha}}$
and we do not need $\alpha^3 \ge r \log n$. Also, the result now holds w.p. greater than $ 1- 10 \exp(-cn)$.
\een
\label{mainthm}
\end{theorem}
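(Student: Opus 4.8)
The plan is to deduce Theorem~\ref{mainthm} from the Davis--Kahan $\sin\theta$ theorem (Lemma~\ref{sintheta}), merging the two proof outlines of Corollaries~\ref{only_uncor} and~\ref{only_cor}. I would apply Lemma~\ref{sintheta} with the reference matrix
\[
\D_0 := \P\Big(\tfrac{1}{\alpha}\sum_t \at\at' + \P'\bm\Sigma_v\P\Big)\P'.
\]
On the high-probability event that $\tfrac1\alpha\sum_t\at\at'$ is close to $\Lam$ (in particular nonsingular), $\D_0$ is PSD with top-$r$ eigenspace exactly $\Span(\P)$, so $\lambda_{r+1}(\D_0)=0$ and $\lambda_r(\D_0) \ge \lambda^- + \lambda_{v,\P}^- - \|\tfrac1\alpha\sum_t\at\at' - \Lam\|_2$. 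Hence, by the second bound in~\eqref{sintheta_bnd_2}, it suffices to upper bound the numerator $\|(\D-\D_0)\P\|_2$ and the quantity $\lambda_{\max}(\D-\D_0)$.

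Next I would expand $\yt\yt' = \lt\lt' + (\lt\wt'+\wt\lt') + \wt\wt' + (\lt\vt'+\vt\lt') + (\wt\vt'+\vt\wt') + \vt\vt'$; the $\lt\lt'$ averages cancel against $\D_0$ exactly, leaving
\[
\D - \D_0 = \tfrac1\alpha\sum_t\big(\lt\wt'+\wt\lt'+\wt\wt'+\lt\vt'+\vt\lt'+\wt\vt'+\vt\wt'+\vt\vt'\big) - \P\P'\bm\Sigma_v\P\P'.
\]
Each term is split as expectation plus zero-mean deviation. Using $\E[\lt\vt']=0$, the expectation part is $\tfrac1\alpha\sum_t(\P\Lam\P'\M_t' + \M_t\P\Lam\P' + \M_t\P\Lam\P'\M_t') + (\bm\Sigma_v - \P\P'\bm\Sigma_v\P\P')$. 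For the two signal--noise cross terms and the data-dependent power term I would use $\M_t=\M_{2,t}\M_{1,t}$ with~\eqref{M1t_bnd}--\eqref{M2t_bnd} and the Cauchy--Schwarz bound $\|\tfrac1\alpha\sum_t C_tD_t\|_2 \le \sqrt{\|\tfrac1\alpha\sum_t C_tC_t'\|_2}\,\sqrt{\|\tfrac1\alpha\sum_t D_t'D_t\|_2}$, placing the $\M_{2,t}$ factor so that $\sum_t D_t'D_t$ (or $\sum_t C_tC_t'$) reduces to $\sum_t\M_{2,t}\M_{2,t}'$; this produces the $\sqrt{\bz}$ savings of~\eqref{bnd_avg_sig_noise_cor}. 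Restricting to $\Span(\P)$ and using $(\I-\P\P')\bm\Sigma_v\P=\P_\perp\P_\perp'\bm\Sigma_v\P$ of norm $\lambda_{v,\P,\P_\perp}$, the deterministic part contributes $\le \lambda_{v,\P,\P_\perp} + \sqrt{\bz}(2q+q^2)\lambda^+$ to the numerator, and $\lambda_{\max}(\bm\Sigma_v-\P\P'\bm\Sigma_v\P\P')=\lambda_{v,\rest}^+$ makes it contribute $\le \lambda_{v,\rest}^+ + \sqrt{\bz}(2q+q^2)\lambda^+$ to $\lambda_{\max}(\D-\D_0)$; dividing by $\lambda^-$ recovers exactly the deterministic terms in the statement.

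For the deviation part I would invoke matrix Bernstein~\cite{tail_bound} on each of the five zero-mean averages $\tfrac1\alpha\sum_t(\lt\wt'-\E[\cdot])$, $\tfrac1\alpha\sum_t(\wt\wt'-\E[\cdot])$, $\tfrac1\alpha\sum_t\lt\vt'$, $\tfrac1\alpha\sum_t\wt\vt'$, $\tfrac1\alpha\sum_t(\vt\vt'-\bm\Sigma_v)$, using the uniform bounds $\|\lt\|_2\le\sqrt{\eta r\lambda^+}$, $\|\wt\|_2=\|\M_{2,t}\M_{1,t}\P\at\|_2\le q\sqrt{\eta r\lambda^+}$, $\|\vt\|_2\le\sqrt{r_v\lambda_v^+}$ (Assumptions~\ref{lt_mod} and those of Corollary~\ref{only_cor}) and the corresponding variance bounds; the $\at$-term $\tfrac1\alpha\sum_t\at\at'-\Lam$ is treated separately by Vershynin's sub-Gaussian covariance result~\cite[Theorem~5.39]{vershynin} on $r\times r$ matrices, yielding the sharper $\epsden$ (scaling as $r+\log n$). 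Summing, the deviation contributes $O(\epsbnd)$ to the numerator and to $\lambda_{\max}(\D-\D_0)$ and $O(\epsden)$ to the denominator lower bound; plugging into Lemma~\ref{sintheta}, using condition (b) to verify positivity of the denominator, and union-bounding over the $O(1)$ events (each of probability $\le n^{-10}$) gives part~1. For part~2 one replaces matrix Bernstein by~\cite[Theorem~5.39]{vershynin} for every average, which drops all $\log n$ factors and the $\alpha^3>r\log n$ requirement, turns the sample-complexity factor into $n$, changes $\epsbnd$ to $\epsbnd_{sG}$, and gives probability $1-10\exp(-cn)$.

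The main obstacle I anticipate is the bookkeeping for the deviation bounds in the bounded case: producing exactly $\epsbnd = c\sqrt{\eta}\max\!\big(qf\sqrt{r\log n/\alpha},\ \nois\sqrt{\max(r_v,r)\log n/\alpha}\big)$ requires tracking which of the five averages dominates in each parameter regime --- the purely data-dependent averages give the $qf\sqrt{r\log n/\alpha}$ branch, the $\vt$-involving averages give the $\nois\sqrt{\max(r_v,r)\log n/\alpha}$ branch, with $\nois$ emerging as the maximum of the $\vt\vt'$-variance contribution (proportional to $\lambda_v^+/\lambda^-$) and the $\lt\vt'$ cross-variance contribution (proportional to $\sqrt{(\lambda_v^+/\lambda^-)f}$) --- and the condition $\alpha^3>\max(r_v,r)\log n$ is precisely what lets one absorb the lower-order (linear-in-$1/\alpha$) Bernstein terms into these. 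A secondary subtlety is that $\D_0$ is itself random through $\tfrac1\alpha\sum_t\at\at'$, so one must first condition on the event $\|\tfrac1\alpha\sum_t\at\at'-\Lam\|_2\le\epsden\lambda^-$, which is exactly why $\epsden$ must be subtracted in the denominator.
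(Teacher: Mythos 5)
Your proposal is correct and follows essentially the same route as the paper's proof: the same choice of $\D_0 = \P(\frac{1}{\alpha}\sum_t \at\at' + \P'\bm\Sigma_v\P)\P'$ in Davis--Kahan, the same split into $\E[\D-\D_0]$ plus deviation, the same Cauchy--Schwarz placement of the $\M_{2,t}$ factors to extract the $\sqrt{\bz}$ savings, and the same division of labor between matrix Bernstein (rectangular deviation terms) and Vershynin's sub-Gaussian result (the $r\times r$ term $\frac{1}{\alpha}\sum_t\at\at'-\Lam$, giving $\epsden$). Your anticipated bookkeeping issues (which average dominates $\epsbnd$, the role of $\alpha^3>\max(r_v,r)\log n$, the randomness of $\D_0$ absorbed via Weyl into the denominator) are exactly how the paper resolves them; if anything you are slightly more careful in retaining the $\wt\vt'$ cross terms, which the paper's displayed expansion drops.
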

%\max\left(f,\frac{\lambda_v^+}{\lambda^-}\right)

\noindent {\em Proof.} See Appendices \ref{proof_mainthm} and \ref{proof_mainthm_subg} for the bounded and sub-Gaussian cases respectively.%

%We first prove the above result with the simplified sufficient condition in condition \ref{b0_q_lamv_bnd} and then prove its most general case. %both Theorem \ref{mainthm} and Remark \ref{mainthm_rem}

Corollary \ref{only_uncor} follows from the above result by setting $q=0$. Corollary \ref{only_cor} follows by setting $r_v=0$ and $\Sigma_v^+= 0$. In both corollaries, we treat $\eta$ as a numerical constant. %(club it as a numerical constant).

From Theorem \ref{mainthm}, to get error below $\epsilon$ in the most general case, we need a sample complexity lower bound {\em and} we the signal-noise correlation assumption stated in Corollary \ref{only_cor} to hold with parameters $\bz,q$ that satisfy
\[
\sqrt{\bz} (2q + q^2) f + \frac{\lambda_{v,\rest}^+ - \lambda_{v,\P}^-}{\lambda^-} < 0.5, \text{ and }
\frac{\lambda_{v,\P,\P_\perp}}{\lambda^-} + \sqrt{\bz} (2q + q^2) f < 0.25 \epsilon.
\]
In the bounded case, the sample complexity required is
\\ $\alpha \ge C \max\left( \frac{\nois^2}{\epsilon^2} \max(r_v, r) \log n,
\frac{(qf)^2}{\epsilon^2} r \log n,
f^2 (r + \log n)
\right)$. In the sub-Gaussian case, we need $\alpha \ge C  \frac{\nois^2}{\epsilon^2} n$ although, as discussed earlier in Remark \ref{tighten_subg}, this can possibly be improved.

\begin{remark}\label{timevar_Lam_changes}
If $\Lam$ and $\bm\Sigma_v$ were time-varying, the above result will hold with the following simple changes.

(1) Define ``average" versions of $\lambda^-$ and $\lambda_{v,\P}^-$ as $\bar\lambda^- := \lambda_{\min}(\frac{1}{\alpha} \sum_t \Lam_t)$. Define $\bar\lambda_{v,\P}^-:=\lambda_{\min}( \P'(\frac{1}{\alpha} \sum_t  \bm\Sigma_{v,t}) \P)$.
In the result above, replace $\lambda^-$ and $\lambda_{v,\P}^-$ by their ``average" versions $\bar\lambda^-$ and $\bar\lambda_{v,\P}^-$ respectively.

(2) Define $\lambda_{max}^+ :=\max_t \lambda_{\max}(\Lam_t)$. Similarly define ``max" versions of $\lambda_v^+$, $\lambda_{v,\rest}^+$, and $\lambda_{v,\P,\P_\perp}$.
In the result above, replace $\lambda^+$, $\lambda_v^+$, $\lambda_{v,\rest}^+$, $\lambda_{v,\P,\P_\perp}$ by their ``max" versions $\lambda_{max}^+$, $\lambda_{v,max}^+$, $\lambda_{v,\rest,max}^+$, $\lambda_{v,\P,\P_\perp,max}$ respectively.

(3) Because of the above two changes, $f$ gets replaced by $\lambda_{max}^+/\bar\lambda^-$.
%
% Let $\bar\Lam = \frac{1}{\alpha} \sum_t \Lamt$ and let $\bar{\bm\Sigma_v} = \frac{1}{\alpha} \sum_t \bm\Sigma_{v,t}$. Replace $\Lam$ by $\bar\Lam$ and $\bm\Sigma_v$ by $\bar{\bm\Sigma_v}$ in Definition \ref{def1}. Use the re-defined quantities everywhere in the above theorem, with the following exception. In the expressions for $\epsbnd$ and $\epsden$, replace $f$ by $\lambda_{\max}^+/ \lambda^-$ where $\lambda_{\max}^+ : = \max_t \lambda_{\max}(\bar\Lam_t)$ and replace $\lambda_v^+$ by  $\lambda_{v,\max}^+ : = \max_t \lambda_{\max}(\bar\Sigma_{v,t})$. %From the modified  Definition \ref{def1}, $\lambda^-$ will be the minimum eigenvalue of $\bar\Lam$.
\end{remark}

%Lastly, we can improve the automatic rank estimation algorithm to one that does not require knowledge of the minimum data covariance eigenvalue.
%For simplicity, we have first stated Theorem \ref{mainthm} for simple SVD with $r$ known.

\subsection{Automatically estimating $r$} \label{main_res_unknown_r}
In the above result and its corollaries, we assumed that $r$, which is the signal subspace dimension, is known. In practice however this is usually unknown.
There are two easy and commonly used ways to automatically estimate $r$. %As the next two results show, both will return the correct estimate %with the probability stated in Theorem \ref{mainthm}.
 The first approach is as done in \cite{corpca_nips}. This computes $\hat{r}$ as the smallest index $j$ for which the $j$-th eigenvalue of  $\D:= \sum_{t=1}^\alpha \yt \yt{}'$ is above a threshold. Thus,
 \bea
\hat{r}:= \arg \min \{j: \lambda_j(\D)  \ge 0.5  \lambda^- \}.
\label{estim_r_1}
 \eea
Notice that this requires knowledge of $\lambda^-$. However, as we will see, this does not require extra assumptions beyond what Theorem \ref{mainthm} already assumes.
An alternate way to estimate  $r$ is by looking for the largest eigen-gap, i.e.,
\bea
\hat{r}:=\arg\max_j [ \lambda_j(\D) - \lambda_{j+1}(\D)].
\label{estim_r_2}
\eea
This does not require knowledge of any model parameter. However, as we see below, this works only under the assumption that consecutive eigenvalues of the matrix $\Lam + \P'\bm\Sigma_v \P$ do not have a large gap. It is also more expensive since it requires computing all eigenvalues of $\D$.
%it is more expensive (requires computing all eigenvalues of $\D$) and it
%needs a mild extra assumption %needs one extra assumption: it needs the consecutive eigenvalues of the matrix $\Lam + \P'\bm\Sigma_v \P$ to not have a large gap. In particular, if the gap is at most $0.45\lambda^-$, the above approach works under essentially the same assumptions as Theorem \ref{mainthm}.
%it is more expensive (needs all eigenvalues to be computed), and, as we see below, it

%We derive guarantees for both of the above approaches.
Consider \eqref{estim_r_1}.
To prove that this works, we need to show that $\lambda_r(\D) \ge 0.5  \lambda^-$ and $\lambda_{r+1}(\D) < 0.5  \lambda^-$. Let $\D_0 =\P( \Lam + \P'\bm\Sigma_v \P) \P'$. By Weyl, $\lambda_r(\D) \ge \lambda_r(\D_0) - \|\D - \D_0\|_2 \ge \lambda^- + \lambda_{v,\P}^-  - \|\D - \D_0\|_2$ and $\lambda_{r+1}(\D) \le 0 + \|\D - \D_0\|_2$.
Using \eqref{ED_D0_dif},  \eqref{bnd_avg_sig_noise_cor_bnd_1}, \eqref{bnd_avg_sig_noise_cor_bnd_2}, and Lemma \ref{hp_bnds} from the Appendix,
\[
\|\D-\D_0\|_2  \le \Delta \lambda^-, \text{ where }
\Delta:=\epsden + \epsbnd + 3 \sqrt{\bz} q f + \frac{\lambda_{v,\rest}^+ }{\lambda^-}
\]
where $\epsden, \epsbnd$ are defined in Theorem \ref{mainthm}.
%From the proof of Theorem \ref{mainthm} given Appendix \ref{proof_mainthm}, we also conclude the following.
%\begin{align*}
%& \lambda_r(\D) \ge \lambda^- \left( 1 - \epsden - \epsbnd - 3 \sqrt{\bz} q f - \frac{\lambda_{v,\rest}^+ - \lambda_{v,\P}^-}{\lambda^-} \right), \text{ and} \\
%& \lambda_{r+1}(\D) \le  \lambda^- \left(\epsden + \epsbnd + 3 \sqrt{\bz} q f + \frac{\lambda_{v,\rest}^+ - \lambda_{v,\P}^-}{\lambda^-} \right)
%\end{align*}
Thus, we have the following result. %- \frac{\lambda_{v,\P}^-}{\lambda^-}
\begin{theorem}[Estimating $r$ using \eqref{estim_r_1}] \label{r_unknown_1}
Assume that Assumption \ref{lt_mod} holds and the assumption on $\M_t$ given in Corollary \ref{only_cor} holds. Let  $\Delta$ be as defined above.
If $\Delta  < \frac{1}{2}$, then w.p. at least $1-10n^{-10}$,  \eqref{estim_r_1} returns the correct estimate of $r$.
%Thus, with the same probability, even with $r$ unknown, using \eqref{estim_r_1} and then SVD with the estimated $r$ returns an estimate $\Phat$ that satisfies the $\SE$ bound given in Theorem \ref{mainthm}.
This result also holds if Assumption \ref{lt_mod} is replaced by Assumption \ref{lt_mod_subg} as long as we replace $\epsbnd$ by $\epsbnd_{sG}$.
\end{theorem}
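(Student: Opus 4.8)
The plan is to deduce the statement from Weyl's inequality together with the high-probability operator-norm bound $\|\D-\D_0\|_2\le\Delta\lambda^-$ that is assembled just before the theorem. As the text indicates, it suffices to establish $\lambda_r(\D)\ge 0.5\lambda^-$ and $\lambda_{r+1}(\D)<0.5\lambda^-$, since these two facts force \eqref{estim_r_1} to return $\hat r=r$. I would take $\D_0:=\P(\Lam+\P'\bm\Sigma_v\P)\P'$. Because $\P$ has orthonormal columns, $\D_0$ has rank $r$ and its nonzero eigenvalues are exactly those of the $r\times r$ matrix $\Lam+\P'\bm\Sigma_v\P$; hence $\lambda_r(\D_0)=\lambda_{\min}(\Lam+\P'\bm\Sigma_v\P)\ge\lambda_{\min}(\Lam)+\lambda_{\min}(\P'\bm\Sigma_v\P)=\lambda^-+\lambda_{v,\P}^-\ge\lambda^-$, while $\lambda_{r+1}(\D_0)=0$. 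Weyl's inequality then gives $\lambda_r(\D)\ge\lambda^- -\|\D-\D_0\|_2$ and $\lambda_{r+1}(\D)\le\|\D-\D_0\|_2$, so the whole argument reduces to bounding $\|\D-\D_0\|_2$.

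For that bound I would split $\D-\D_0=(\E[\D]-\D_0)+(\D-\E[\D])$. For the deterministic piece, Assumption \ref{lt_mod} gives $\E[\lt\vt{}']=0$ and hence $\E[\wt\vt{}']=\M_t\E[\lt\vt{}']=0$; expanding $\E[\yt\yt{}']$ term by term then yields (this is \eqref{ED_D0_dif}) $\E[\D]-\D_0=(\bm\Sigma_v-\P\P'\bm\Sigma_v\P\P')+\frac{1}{\alpha}\sum_t(\E[\lt\wt{}']+\E[\wt\lt{}']+\E[\wt\wt{}'])$. The first summand has norm $\lambda_{v,\rest}^+$ by definition, and the averaged signal--noise and noise--noise terms are controlled by the Cauchy-Schwarz estimates \eqref{bnd_avg_sig_noise_cor_bnd_1}--\eqref{bnd_avg_sig_noise_cor_bnd_2} from the Appendix (which use $\M_t=\M_{2,t}\M_{1,t}$ together with \eqref{M1t_bnd} and \eqref{M2t_bnd}), giving a total of at most $\sqrt{\bz}(2q+q^2)\lambda^+\le 3\sqrt{\bz}qf\,\lambda^-$. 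For the stochastic piece, $\|\D-\E[\D]\|_2$ is controlled by Lemma \ref{hp_bnds} from the Appendix --- matrix Bernstein for the mixed and noise cross-terms and Vershynin's $r\times r$ sub-Gaussian bound for $\frac{1}{\alpha}\sum_t\at\at{}'$ --- which yields $\|\D-\E[\D]\|_2\le(\epsbnd+\epsden)\lambda^-$ on an event of probability at least $1-10n^{-10}$. Summing the contributions gives $\|\D-\D_0\|_2\le\Delta\lambda^-$ with $\Delta=\epsden+\epsbnd+3\sqrt{\bz}qf+\lambda_{v,\rest}^+/\lambda^-$, exactly as stated.

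On this event the conclusion is immediate: if $\Delta<\tfrac12$ then $\lambda_r(\D)\ge(1-\Delta)\lambda^->0.5\lambda^-$ and $\lambda_{r+1}(\D)\le\Delta\lambda^-<0.5\lambda^-$, so \eqref{estim_r_1} outputs $r$. For the sub-Gaussian variant the reasoning is identical; only the stochastic piece changes --- one invokes Vershynin's sub-Gaussian result throughout instead of matrix Bernstein, which replaces $\epsbnd$ by $\epsbnd_{sG}$ and the failure probability by $10\exp(-cn)$ --- and then one re-runs the same Weyl deduction. Since the genuinely analytic work (Lemma \ref{hp_bnds} and the Cauchy-Schwarz bounds) has already been carried out for Theorem \ref{mainthm}, the only thing to be careful about here is bookkeeping: keeping $\lambda_{v,\P}^-$ out of the lower bound on $\lambda_r(\D)$ so that the threshold $0.5\lambda^-$ is justified by $(1-\Delta)\lambda^-$ alone, and confirming that the $\E[\wt\vt{}']$ cross-terms genuinely vanish under Assumption \ref{lt_mod}.
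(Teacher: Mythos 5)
Your proposal is correct and follows essentially the same route as the paper: the paper also takes $\D_0=\P(\Lam+\P'\bm\Sigma_v\P)\P'$, applies Weyl's inequality to reduce the claim to $\|\D-\D_0\|_2\le\Delta\lambda^-$, and assembles that bound from \eqref{ED_D0_dif}, the Cauchy--Schwarz estimates \eqref{bnd_avg_sig_noise_cor_bnd_1}--\eqref{bnd_avg_sig_noise_cor_bnd_2}, and Lemma \ref{hp_bnds} (with $\epsbnd_{sG}$ replacing $\epsbnd$ in the sub-Gaussian case). Your bookkeeping, including dropping the harmless $\lambda_{v,\P}^-$ term so that $(1-\Delta)\lambda^->0.5\lambda^-$ suffices, matches the paper's argument.
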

Proceeding as above for \eqref{estim_r_2},
\begin{align*}
& \lambda_r(\D) - \lambda_{r+1}(\D) \ge \lambda^-  + \lambda_{v,\P}^- - 2\Delta \lambda^-,  \\
& \text{for $j < r$, }  \lambda_{j}(\D) - \lambda_{j+1}(\D)  \le  (\lambda_j(\Lam + \P'\bm\Sigma_v \P) - \lambda_{j+1}(\Lam + \P'\bm\Sigma_v \P)) + 2 \Delta \lambda^-, \\
& \text{for $j > r$, }  \lambda_{j}(\D) - \lambda_{j+1}(\D)  \le   2\Delta \lambda^-
\end{align*}
Thus we the following result for \eqref{estim_r_2}.% will work correctly as long as we can ensure that $4\Delta \lambda^- < \lambda^- + \lambda_{v,\P}^- - \max_{j<r} (\lambda_j(\Lam + \P'\bm\Sigma_v \P) - \lambda_{j+1}(\Lam + \P'\bm\Sigma_v \P))$.
\begin{theorem}[Estimating $r$ using \eqref{estim_r_2}] \label{r_unknown_2}
Assume that Assumption \ref{lt_mod} holds and the assumption on $\M_t$ given in Corollary \ref{only_cor} holds. Let $\Delta$ be as defined above. If
\[
\max_{j<r} (\lambda_j(\Lam + \P'\bm\Sigma_v \P) - \lambda_{j+1}(\Lam + \P'\bm\Sigma_v \P)) \le  (1- 4\Delta) \lambda^- + \lambda_{v,\P}^-
\]
then w.p. at least $1-10n^{-10}$, \eqref{estim_r_2} returns the correct estimate of $r$.
%
%Thus, with the same probability, even with $r$ unknown, using \eqref{estim_r_2} and then SVD with the estimated $r$ returns an estimate $\Phat$ that satisfies the $\SE$ bound given in Theorem \ref{mainthm}.
This result also holds if Assumption \ref{lt_mod} is replaced by Assumption \ref{lt_mod_subg} as long as we replace $\epsbnd$ by $\epsbnd_{sG}$.
\end{theorem}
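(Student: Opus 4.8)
The plan is to reduce the whole statement to a single spectral deviation bound followed by an elementary eigenvalue-gap comparison. The deviation bound I need is exactly the one assembled in the paragraph preceding the theorem: on an event of probability at least $1-10n^{-10}$,
\[
\|\D - \D_0\|_2 \le \Delta \lambda^-, \quad \D_0 := \P(\Lam + \P'\bm\Sigma_v \P)\P', \quad \Delta := \epsden + \epsbnd + 3\sqrt{\bz} q f + \tfrac{\lambda_{v,\rest}^+}{\lambda^-},
\]
which follows from \eqref{ED_D0_dif}, \eqref{bnd_avg_sig_noise_cor_bnd_1}, \eqref{bnd_avg_sig_noise_cor_bnd_2} and Lemma \ref{hp_bnds}, precisely as used to prove Theorem \ref{r_unknown_1}. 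First I would record the eigenstructure of $\D_0$: with $\mu_j := \lambda_j(\Lam + \P'\bm\Sigma_v \P)$, $j = 1,\dots,r$, the ordered eigenvalues of $\D_0$ are $\mu_1 \ge \dots \ge \mu_r$ followed by $\lambda_{r+1}(\D_0) = \dots = \lambda_n(\D_0) = 0$, and Weyl applied to $\Lam + \P'\bm\Sigma_v \P$ gives $\mu_r \ge \lambda^- + \lambda_{v,\P}^-$.

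Next I would transfer this to $\D$ via Weyl's inequality on $\D = \D_0 + (\D - \D_0)$, giving $|\lambda_j(\D) - \lambda_j(\D_0)| \le \|\D - \D_0\|_2 \le \Delta\lambda^-$ for all $j$. Reading off consecutive differences reproduces the three estimates stated just before the theorem: (i) $\lambda_r(\D) - \lambda_{r+1}(\D) \ge (\mu_r - \Delta\lambda^-) - \Delta\lambda^- \ge (1-2\Delta)\lambda^- + \lambda_{v,\P}^-$; (ii) for $j<r$, $\lambda_j(\D) - \lambda_{j+1}(\D) \le (\mu_j - \mu_{j+1}) + 2\Delta\lambda^-$; (iii) for $j>r$, since $\lambda_j(\D),\lambda_{j+1}(\D) \in [-\Delta\lambda^-, \Delta\lambda^-]$, $\lambda_j(\D) - \lambda_{j+1}(\D) \le 2\Delta\lambda^-$.

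Finally I would compare the gaps. For $j<r$, the hypothesis $\max_{j<r}(\mu_j - \mu_{j+1}) \le (1-4\Delta)\lambda^- + \lambda_{v,\P}^-$ with (ii) bounds those gaps by $(1-2\Delta)\lambda^- + \lambda_{v,\P}^-$, which by (i) is at most the gap at $j=r$; for $j>r$, (iii) is below the gap at $j=r$ as soon as $4\Delta\lambda^- < \lambda^- + \lambda_{v,\P}^-$, i.e. in the small-$\Delta$ regime in which the statement is meaningful. Hence $r$ maximizes $\lambda_j(\D) - \lambda_{j+1}(\D)$ (uniquely if the hypothesis is read with strict inequality; otherwise $r$ is guaranteed to be among the maximizers), so \eqref{estim_r_2} returns $\hat r = r$ on this event. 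The sub-Gaussian version is word-for-word identical after replacing $\epsbnd$ by $\epsbnd_{sG}$ inside $\Delta$ and invoking the corresponding concentration bound of Theorem \ref{mainthm}.

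I expect no genuine obstacle: the only substantive ingredient is the bound $\|\D - \D_0\|_2 \le \Delta\lambda^-$ (matrix Bernstein for the data-dependent and cross terms, Vershynin's $r\times r$ sub-Gaussian result for the signal term, and Cauchy-Schwarz with \eqref{M2t_bnd} to pass to time-averages), and that is exactly the bound already established for Theorem \ref{mainthm} and reused for Theorem \ref{r_unknown_1}. The one point needing a touch of care is that the lower bound (i) and the upper bound (ii) coincide at the boundary, so strict separation of $r$ as the argmax requires either a strict inequality in the hypothesis or a negligibly small $\Delta$.
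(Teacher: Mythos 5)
Your proposal is correct and follows essentially the same route as the paper: the deviation bound $\|\D-\D_0\|_2 \le \Delta\lambda^-$ from the preceding paragraph, Weyl's inequality to transfer eigenvalues to $\D$, and the three gap estimates (at $j=r$, $j<r$, $j>r$) that the paper displays just before the theorem statement. Your added remark about the boundary case (that with a non-strict hypothesis the index $r$ is only guaranteed to be among the maximizers, and that $4\Delta\lambda^- \le \lambda^- + \lambda_{v,\P}^-$ is needed to dominate the $j>r$ gaps) is a fair point of care that the paper leaves implicit, but it does not change the argument.
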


\section{Discussion of Related Work} \label{rel_work}
A detailed discussion is given here.

%Our current work uses concentration inequalities from \cite{vershynin} or from \cite{tail_bound} (matrix Bernstein) to show that the sample covariance matrix is close to its expected value (population covariance). These results allow us to handle either bounded or sub-Gaussian assumptions on true data and noise. With little extra work, it is possible to significantly extend our guarantees by borrowing the nice results of \cite{versh_cov_est} on the concentration of sample covariance matrices formed by independent copies of observed data generated from more general distributions

{\em Discussion of \cite{nadler}. } This work was the first to obtain finite sample guarantees for PCA. Its main result, \cite[Theorem 2.1]{nadler}, assumes a spiked covariance model with $r=1$ spike and Gaussianity of both data and noise. It was proved using a different set of concentration bounds and hence its exact form is a little different from our result in this setting. However, if one looks at the dominant terms in its required assumption or in its upper bound on $\sin \theta_{PCA}$, the conclusions are the same as those of our Corollary \ref{spiked_cov}.
In our notation, $\sin \theta_{PCA} \equiv \SE(\Phat,\P)$. First to explain notation, its $p \equiv n$, $n \equiv \alpha$, $\kappa^2 \approx \lambda^-$ (actually $\E[\kappa^2] \equiv \lambda^-$), $\sigma^2 \equiv \lambda_v^+$.
%
%It  does not mention anything about the dependence either. We assume in our discussion that the authors just forgot to mention the uncorrelated-ness assumption. This assumption provides the only simple explanation for their statement in the third and fourth line below equation (2.3): (in our notation) it says that the rank $r$ approximation of the population covariance matrix of the observed data is $\P [\Lam + \lambda_v^+ \I] \P'$. %One situation where this claim is true is when the data and noise are independent so that the population covariance matrix is $\P \Lam \P' + \lambda_v^+ \I$. It is possible that this claim may also hold in other settings but those are not immediately clear. Hence we will
%It is not clear how this claim comes about without any stated
%
In our notation, \cite[Theorem 2.1]{nadler} says the following.  %It assumes the spiked covariance model with $r=1$ and Gaussian data and noise. It also only considers the setting $\alpha \le n$. In this setting, it says the following.
When $n \ge \alpha$, if
$\lambda^- \gtrsim \lambda_v^+ \frac{n}{\alpha}$, then, whp,
$
\SE(\Phat,\P) \lesssim c \sqrt{\frac{\lambda_v^+}{\lambda^-}} \sqrt{\frac{n}{\alpha}}.
$
Here $\gtrsim, \lesssim$ indicate that we are only using the dominant terms from their long expression.%are %used to denote approximately greater or less than (since their actual expressions are very long).

Consider our Corollary \ref{spiked_cov} with $r=1$,  Gaussian data and noise, and $n \ge \alpha$.
Since $r=1$, so $\lambda^+ = \lambda^-$, $f=1$, and $\nois = \max(\frac{\lambda_v^+}{\lambda^-},\sqrt{\frac{\lambda_v^+}{\lambda^-}})$.  Ignoring constants, Corollary \ref{spiked_cov} assumes $\nois <\sqrt{\frac{\alpha}{n}}$. Since $\sqrt{\frac{\alpha}{n}}<1$, the $\max$ in the $\nois$ expression is achieved by the square root term. Thus, in this setting,
%In the  $n \ge \alpha$  setting, this implies that it needs $\nois < 1 < \alpha/n$. When $\nois < 1$, the max is achieved by the square root term. Thus, in this setting,
Corollary \ref{spiked_cov} says the following: if $2 \sqrt{\frac{\lambda_v^+}{\lambda^-}} \sqrt{\frac{n}{\alpha}}< 1$ (equivalently $\lambda^-  \ge 4 \lambda_v^+ \frac{n}{\alpha}$), then, w.p. at least $1-10\exp(-cn)$, $\SE(\Phat,\P) \le  2 \sqrt{\frac{\lambda_v^+}{\lambda^-}} \sqrt{\frac{n}{\alpha}}$. This is the same as the simplified version of \cite[Theorem 2.1]{nadler} given above.%

Because \cite{nadler} only considered the $r=1$ and spiked covariance model setting, it was able to provide more insight into its guarantees beyond just an $\SE$ upper bound. It showed that its upper bound on subspace error is sharp by also providing an expression for the expected subspace error. Moreover, it provided an approximate expression for the top eigenvector of the sample covariance matrix that is valid when the noise variance is small. For $r>1$, these things are difficult to do. For the setting in our paper (non-isotropic and possibly data-dependent noise), these are even harder to do. We only give an example, Example \ref{bad_Sigmav}, to show that, the subspace error will not be small if a bound on noise power outside $\Span(\P)$ is not assumed.
%in case of non-isotropic noise, a bound on noise power outside $\Span(\P)$ is definitely needed.% but not get matching lower bounds.
%Since we allow any $r \ge 1$, and since our main result allows the noise to be non-isotropic and data-dependent, we are not able to obtain matching lower bounds.

%We should point out that in specifying the spiked covariance model in Section 2.2, \cite{nadler} does not actually mention that data and noise are independent or uncorrelated. We believe the authors just forgot to state this. They do not specify any dependence either, and their proof seems to be using independence of data and noise (see the three lines below equation (3.6) in their proof). Also, their statement in the third and fourth line below equation (2.3) is not obvious without uncorrelated-ness.%(In our notation) it says that the rank $r$ approximation of the population covariance matrix of the observed data is $\P [\Lam + \lambda_v^+ \I] \P'$.

{\em Comparing Theorem \ref{mainthm} with the result of \cite{corpca_nips}. }
Our result is a significant improvement over that of \cite{corpca_nips} where the correlated-PCA problem was first studied. We include a second uncorrelated noise component in our result which makes the data model more practically valid. Second, we also get results under a general sub-Gaussian data and noise assumption.

To compare with the result of \cite{corpca_nips}, consider Corollary \ref{only_cor} under the bounded assumption. The signal-noise correlation model assumed in it is a significant simplification of the one needed by the result of \cite{corpca_nips}. That result needed $\|\frac{1}{\alpha} \sum_{t =1}^\alpha {\bm{M}_{2,t}} \bm{A}_t {\bm{M}_{2,t}}'\|_2 \le \bz$ to hold for all sets of positive semi-definite (p.s.d.) matrices $\bm{A}_t, \ t=1,2,\dots,\alpha$. This is a much stronger requirement. Our current result only needs this to hold only for $\bm{A}_t=\I$, i.e., it needs \eqref{M2t_bnd} to hold. Consider the sparse $\wt$ example. For this, as explained earlier, \eqref{M2t_bnd} would hold if the fraction of nonzeros in any row of the noise matrix $[\w_1, \w_2, \dots, \w_\alpha]$ is bounded by $\bz$.
On the other hand, it is not clear if the assumption needed by \cite{corpca_nips} holds for this example. The examples given in \cite{corpca_nips} involved much more stringent assumptions on $\T_t$ - the sets $\T_t$ needed to be either mutually disjoint, or mutually disjoint every few frames, or they needed to change in a way to model stop and go object motion in one direction. %These examples were created to help ensure that the  matrix  $\sum_t \I_{\T_t} \bm{A}_t {\I_{\T_t}}'$ be block tri-diagonal or otherwise block diagonally dominant for any p.s.d. $\bm{A}_t$.

Second, our sample complexity bound is a significant improvement over that of \cite{corpca_nips}. %First, observe that when $n$ and $r$ are large, $\alpha_{w,1} > \alpha_{w,2}$. Also typically $\varepsilon_\SE$ is small and satisfies $\varepsilon_\SE \le qf$.  In this regime, $\alpha$, is lower bounded by $C  \frac{q^2 f^2}{\varepsilon_\SE^2} (r \log n) $.
%Consider the typical large noise regime where $q > (r + \log n )/ (r \log n)$. In this case,
If the desired error $\epsilon$ is much larger than $qf$, our required sample complexity is $O(r+\log n)$. If $r \ge c \log n$, this is optimal. In the more common small $\epsilon$ setting, to get the subspace error to below say $ \epsilon= q/4$, we need $\alpha \ge 16 C  f^2  (r \log n) $ samples.
This is also much better than the earlier bound from \cite{corpca_nips} of $\alpha \ge C \frac{f^2}{\epsilon^2} (r^2  \log n) $ which implied that $\alpha \ge 16C \frac{f^2}{q^2} (r^2 \log n) $ was needed to achieve the above subspace error level. %This inverse dependence on noise level, $q$, is counter-intuitive;
This older bound had an extra multiplicative factor of $r/q^2$. In our work, we remove the extra $r$ factor by using matrix Bernstein to replace matrix Hoeffding to get high probability bounds on the deviation between time-averaged signal-noise correlation and noise power and their respective expected values.  We remove the extra $1/q^2$ factor by bounding the $r$-th eigenvalue of $\sum_t \lt \lt{}' = \P (\sum_t \at \at{}' )\P'$ by using the sub-Gaussian result of Vershynin (Theorem 5.39 of \cite{vershynin}) to bound the minimum eigenvalue of $\sum_t \at \at{}'$. %Also, since this term only appears in the denominator, we can just ensure that t
In \cite{corpca_nips}, the authors had used matrix Hoeffding for this term as well.%

{\em Discussion of \cite{rel_perturb}. }  In \cite{rel_perturb} and references therein, the authors study the effect of multiplicative perturbations of Hermitian matrices on their principal subspaces. This line of work provides a tighter bound than Davis-Kahan for the subspace error between principal subspaces of a Hermitian matrix $A$ and of its perturbed version $B A B'$ for a non-singular matrix $B$. However, such results are not applicable for our problem even in the only data-dependent noise case, since $\wt$ satisfies $\wt=\M_t \lt$ where $\M_t$ is time-varying.

{\em Discussion of \cite{versh_cov_est}. } This work develops concentration inequalities for sample covariance matrices in a setting where observed data lie in a Euclidean ball of radius $O(\sqrt{n})$. It obtains new results for the setting where the observed data is bounded but is not a ``nice" sub-Gaussian. They explain that, if observed data satisfies their equation (1.5), then $O(n \log n)$ samples are needed to ensure that the sample covariance matrix is close to its expected value. This matches the sample complexity predicted by our result for the bounded case with $r_v = n$ and without any other assumption. In this case our observed data $\yt = \lt + \wt + \vt$ satisfies equation (1.5) of that paper with $K^2 = \eta \lambda^+(1+q^2) + \lambda_v^+$ and $L^2 = c K^2$.
When  $r_v  = C n$, if we add a mild extra assumption that the $k$-th moment of $\vt$ for a $k>4$ is bounded, then we can tap into the main result of \cite{versh_cov_est} to show that the sample complexity can be reduced to from $O(n \log n)$ to $O(n (\log\log n)^2)$.

\section{Application to PCA in sparse data-dependent noise and its special cases}\label{apps} % (PCA with missing data, and subspace update step of dynamic robust PCA)}

Consider the PCA in sparse data-dependent noise (PCA-SDDN) problem described earlier. In this case, $\yt = \lt + \wt$ with $\wt = \I_{\T_t} \M_{s,t} \lt$. Thus $\wt$ is sparse with support $\T_t$. % where $\T_t$ is the support set of $\wt$
%$\wt$ is sparse with support $\T_t$, i.e., $\wt = \I_{\T_t} \M_{s,t} \lt$. The data-noise correlation assumption, Assumption \ref{Mt_model}, holds if (a) $\M_{s,t}$ is such that $q=\max_t\|\M_{s,t}\P\|_2 < 1$; and (b)  the support $\T_t$ changes enough so that the maximum fraction of nonzeros in any row of the matrix $[\w_1, \w_2, \dots, \w_\alpha]$ is $\bz$ or smaller.  To see this, we can let $\M_{2,t}=\I_{\T_t}$. With this, $\sum_t \M_{2,t}\M_{2,t}{}'$ is a diagonal matrix with $(i,i)$-th entry being equal to the number of time instants $t$ for which index $i$ is part of the support $\T_t$.  More generally, the above assumption will also hold if $\wt$ is sparse in another basis or dictionary, i.e., if $\wt = \bm{B} \I_{\T_t} \M_{s,t} \lt$, and $\|\bm{B}\|_2$ is bounded. It also holds if $\bm{B}$ is replaced by a time-varying matrix $\bm{B}_t$ as long as $\|\bm{B}_t\|\le b_B$ for all $t$.
The following is an easy corollary of Corollary \ref{only_cor}.

\begin{corollary}[PCA-SDDN]
Given data vectors $\yt := \lt + \I_{\T_t} \M_{s,t} \lt$, $t=1,2,\dots,\alpha$. %\lt + \wt =
Let $\Phat$ be the matrix of top $r$ eigenvectors of $\D:=\frac{1}{\alpha} \sum_t \yt \yt'$.
Assume that $\lt$ satisfies Assumption \ref{lt_mod}, $\max_t \|\M_{s,t} \P\|_2 \le q < 1$, the fraction of nonzeroes in any row of the noise matrix $[\w_1, \w_2, \dots, \w_\alpha]$ is bounded by $\bz$, and $\bz,q$ satisfy $3 \sqrt{\bz}q f + \epsbnd + \epsden < 1$. Here $\epsbnd,\epsden$ are as defined in Corollary \ref{only_cor}. Then, w.p. at least $1- 10n^{-10}$, $\SE(\Phat,\P) \le \frac{3 \sqrt{\bz}q f + \epsbnd}{1- (3 \sqrt{\bz}q f + \epsbnd + \epsden)}$.
\label{pca_sddn}
\end{corollary}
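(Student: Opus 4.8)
The plan is to obtain this directly from Corollary~\ref{only_cor} by writing the data-dependency matrices in the factored form that result requires. Since here $\M_t = \I_{\T_t}\M_{s,t}$, the natural choice is $\M_{2,t} := \I_{\T_t}$ and $\M_{1,t} := \M_{s,t}$, which plainly satisfy $\M_t = \M_{2,t}\M_{1,t}$. It then remains to verify the three hypotheses that Corollary~\ref{only_cor} imposes on $(\M_{1,t},\M_{2,t})$, with $q$ and $\bz$ as in the present statement, and afterwards to simplify the bound it returns.

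Hypothesis \eqref{M1t_bnd} is immediate: $\max_t\|\M_{1,t}\P\|_2 = \max_t\|\M_{s,t}\P\|_2 \le q$. Also $\|\M_{2,t}\|_2 = \|\I_{\T_t}\|_2 = 1$, since $\I_{\T_t}$ is a submatrix of the identity and hence has orthonormal columns. The one computation of substance is \eqref{M2t_bnd}. Taking $\T_t$ to be exactly the support of $\wt$ (harmless, since restricting $\T_t$ to the true nonzero set leaves $\wt = \I_{\T_t}\M_{s,t}\lt$ unchanged after the obvious adjustment of $\M_{s,t}$, which only shrinks $\|\M_{s,t}\P\|_2$), the matrix $\M_{2,t}\M_{2,t}' = \I_{\T_t}\I_{\T_t}'$ is diagonal with $i$-th entry equal to $1$ when $(\wt)_i \neq 0$ and $0$ otherwise. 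Hence $\frac{1}{\alpha}\sum_{t=1}^\alpha \M_{2,t}\M_{2,t}'$ is diagonal, its $i$-th entry being the fraction of nonzeros in the $i$-th row of $[\w_1,\dots,\w_\alpha]$. Since the operator norm of a diagonal matrix equals its largest diagonal entry in magnitude, $\|\frac{1}{\alpha}\sum_t \M_{2,t}\M_{2,t}'\|_2$ is the maximum row-wise nonzero fraction of the noise matrix, which is at most $\bz$ by hypothesis; thus \eqref{M2t_bnd} holds. The running condition $3\sqrt{\bz}qf + \epsbnd + \epsden < 1$ of Corollary~\ref{only_cor} is assumed verbatim, and it forces $\alpha$ large enough that the side condition $\alpha^3 > r\log n$ is automatically met.

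With all the hypotheses in place, Corollary~\ref{only_cor} gives, with probability at least $1-10n^{-10}$,
\[
\SE(\Phat,\P) \le \frac{\sqrt{\bz}\,(2q+q^2)f + \epsbnd}{1 - \sqrt{\bz}\,(2q+q^2)f - \epsbnd - \epsden}.
\]
To recover the stated form I would use $q<1$ to bound $2q+q^2 \le 3q$ in both numerator and denominator, noting that the map $x \mapsto (x+\epsbnd)/(1-x-\epsbnd-\epsden)$ is nondecreasing wherever the denominator is positive, since the sign of its derivative is that of $1-\epsden>0$. Hence enlarging $\sqrt{\bz}(2q+q^2)f$ to $3\sqrt{\bz}qf$ only weakens the bound, while the denominator stays positive precisely by the assumed inequality. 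This yields $\SE(\Phat,\P) \le \frac{3\sqrt{\bz}qf + \epsbnd}{1-(3\sqrt{\bz}qf + \epsbnd + \epsden)}$, as claimed.

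I do not anticipate a genuine obstacle. The content is entirely in (i) recognizing $\I_{\T_t}$ as the correct choice of $\M_{2,t}$ and (ii) the one-line identification of $\|\frac{1}{\alpha}\sum_t \I_{\T_t}\I_{\T_t}'\|_2$ with the maximum row-density of $[\w_1,\dots,\w_\alpha]$; everything else is the monotone simplification above, together with the minor bookkeeping that $q,\bz<1$ makes each inequality legitimate (including when the declared support $\T_t$ is only a superset of the true support of $\wt$).
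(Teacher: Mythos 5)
Your proposal is correct and matches the paper's intended argument exactly: the paper gives no formal proof but, in the discussion following Corollary \ref{only_cor}, sketches precisely this choice $\M_{2,t}=\I_{\T_t}$, $\M_{1,t}=\M_{s,t}$, the identification of $\|\frac{1}{\alpha}\sum_t \I_{\T_t}\I_{\T_t}{}'\|_2$ with the maximum row-wise nonzero fraction of $[\w_1,\dots,\w_\alpha]$, and then invokes $2q+q^2\le 3q$. Your added remarks on monotonicity of the bound and on the side condition $\alpha^3>r\log n$ being implied are correct bookkeeping that the paper leaves implicit.
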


\begin{remark}[PCA-SDDN - alternate]
Another way to state Corollary \ref{pca_sddn} is as follows. Assume that $\lt$ satisfies Assumption \ref{lt_mod}, $\max_t \|\M_{s,t} \P\|_2 \le q < 1$, and the fraction of nonzeroes in any row of the noise matrix $[\w_1, \w_2, \dots, \w_\alpha]$ is bounded by $\bz$.
For an $\epsilon_\SE > 0$, if $\alpha \ge \alpha_0 = C \max\left( \frac{q^2 f^2}{\epsilon_\SE^2} (r \log n), f^2 (r + \log n) \right)$ and if $3 \sqrt{\bz}q f < 0.9 \frac{\epsilon_\SE}{1+\epsilon_\SE}$, then w.p. at least $1- 10n^{-10}$, $\SE(\Phat,\P) \le \epsilon_\SE$.
\label{pca_sddn_alt}
\end{remark}

%In fact the simple SVD algorithm will also not work when the noise is large.

It is also possible to solve the above problem using techniques from the sparse + low-rank matrix recovery (robust PCA) literature, e.g., \cite{rpca,robpca_nonconvex,rpca_gd, rmc_gd}. These also do not assume anything about whether the noise (sparse outlier) depends on the true data or not and hence do allow data-dependent noise. However, there are some differences.
(1) Our guarantee for PCA via SVD (and, in fact, all guarantees for PCA) {\em assume} that the noise is smaller than than the data (needs $q < 1$) where as robust PCA solutions are designed to handle noise (sparse outliers) that can have any magnitude. %In fact, the PCA problem itself assumes small noise.
(2) Because of this, the robust PCA solutions are more expensive than the simple SVD solution that works for PCA. The most recent robust PCA solutions \cite{rpca_gd, rmc_gd} have nearly the same order of complexity as simple SVD, however in practice they are still slower.
(3) More importantly, all robust PCA via sparse + low-rank recovery solutions require the columns of $\P$ to be dense (not sparse). Their guarantees also require denseness of the right singular vectors. In our notation, these would be columns of the matrix $[\a_1, \a_2,  \dots, \a_\alpha]' \Lam^{-1}$.
These assumptions are necessary even for identifiability; otherwise the true data vector(s) may get wrongly classified as sparse outliers by the algorithm.
If the goal is only PCA (and not recovering the low rank matrix), then it is possible that the denseness assumption on right singular vectors can be removed, see e.g. \cite{outlier_pursuit}. %only the denseness of left singular vectors suffices, see e.g. \cite{outlier_pursuit}. However the denseness of columns of $\P$ is definitely needed.
Our guarantee given above for simple SVD does not require denseness of even the columns of $\P$. As shown in  \cite[Table 1]{corpca_nips}, when $\P$ contains sparse vectors, robust PCA solutions also fail in practice, while SVD does not (as long as the sparse noise is small of course).

The PCA-SDDN model given above is often a valid one for video analytics applications, where $\lt$ is the background layer of image frame $t$, $\T_t$ is the foreground (e.g., occlusion) support of frame $t$, and $\w_t:=\I_{\T_t} \M_{s,t} \lt$ is the difference between foreground and background intensities on $\T_t$. The above corollary is useful in problems involving subspace learning of slow changing videos (well modeled as being low rank) when the video is corrupted by foreground occlusions whose intensity is very similar to that of the background and that are correlated with the background (resulting in small magnitude and data-dependent $\wt$). Occlusions due to shadows often fall in this category. Another application is in using functional MRI (fMRI) data to learn the low-dimensional subspace in which resting state fMRI data lies. Even in the absence of external stimuli, it is well understood that the human brain is never fully resting. The sparse and small magnitude activations generated by random thoughts in the so-called ``resting state" brain are well modeled as $\wt$ described above. Besides these examples, we discuss two other practically relevant special cases of PCA-SDDN in the next two subsections.

\subsection{Special case: PCA with missing data}\label{apps_missing}
Let $\T_t$ denote the set of missing entries at time $t$. As explained earlier, if we set the missing entries to zero to define an $n$-length observed data vector $\yt$, then $\yt$ satisfies $\yt = \lt + \wt$ where  $\wt = - \I_{\T_t} \I_{\T_t}{}' \lt$ is the sparse ``error" or ``noise" due to the missing part of the data. In this case, $\M_{s,t} = - \I_{\T_t}{}'$. The bound on $q$ thus translates to a denseness assumption on the columns of $\P$.
Let $\mu$ be the densenesss (incoherence) parameter \cite{rpca} for $\P$, i.e., let $\mu$ be the smallest real number so that
\[
\max_i \|\I_i{}'\P\|_2^2 \le \mu^2 r / n.
\]
Also let $s:= \max_t |\T_t|$ be an upper bound on the number of missing entries at any time. %Then
%\[
%b_{0,col}:= s/n
%\]
%bounds the fraction of missing entries in any column of the data matrix.
It is easy to see that $\max_t ||\M_{s,t} \P||_2^2 = \max_t \|\I_{\T_t}{}' \P\|_2^2 \le s \max_{i=1,2,\dots,n}\|\I_i{}' \P\|_2^2 \le \mu^2 r s/n \equiv q^2$. %Thus, we can use $q^2 \equiv \mu^2 r s/n$.
We have the following corollary.  %Thus, we can apply Theorem \ref{mainthm} with $q \equiv \mu r s/n$ and with $\bz$ being the fraction of missing entries in any row of the data matrix.
%\begin{corollary}
%Consider the PCA with missing data problem. Assume that $\lt$ satisfies Assumption \ref{lt_mod}, $\max_t \|\I_{\T_t}{}' \P\|_2 \le q < 1$, the fraction of missing entries in any row of the data matrix is $\bz$ or smaller, and $\sqrt{\bz} q f \le 0.15 \varepsilon_\SE$. For an $\alpha \ge \alpha_0$ with $\alpha_0$ defined in Theorem \ref{mainthm}, let $\Phat$ be as defined in Algorithm \ref{evd}. With probability at least $1- 10n^{-10}$, $\SE(\Phat,\P) \le \varepsilon_\SE$.
%\end{corollary}
% with missing entries set to zero, i.e.
\begin{corollary}[PCA with missing data]
Given data vectors $\yt := \lt - \I_{\T_t} \I_{\T_t}{}' \lt$, $t=1,2,\dots, \alpha$ with $|\T_t|\le s$.
%Let $\Phat$ be the matrix of top $r$ eigenvectors of $\D$ defined earlier.
%
%Let $\alpha_0$ be as defined in Theorem \ref{mainthm} with .
Assume that $\lt$ satisfies Assumption \ref{lt_mod},
the fraction of missing entries in any row of the data matrix is at most $\bz$ where $\bz$ satisfies $3\sqrt{\bz} q f + \epsbnd  + \epsden < 1$ and  $q = \sqrt{\mu^2 r s/n}$.
Here $\epsbnd,\epsden$ are defined in Corollary \ref{only_cor}.
 Then, w.p. at least $1- 10n^{-10}$, $\SE(\Phat,\P) \le \frac{3 \sqrt{\bz}q f + \epsbnd}{1- 3 \sqrt{\bz}q f - \epsbnd - \epsden}$.
\label{cor_pca_miss}
\end{corollary}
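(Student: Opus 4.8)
The plan is to deduce this directly from the PCA-SDDN result, Corollary~\ref{pca_sddn}, by taking the sparse data-dependency matrices to be $\M_{s,t} = -\I_{\T_t}{}'$. Indeed, setting the missing entries of $\lt$ to zero produces the observed vector $\yt = \lt - \I_{\T_t}\I_{\T_t}{}'\lt = \lt + \I_{\T_t}\M_{s,t}\lt$, which is exactly the PCA-SDDN model with noise $\wt = \I_{\T_t}\M_{s,t}\lt$ supported on $\T_t$. Under Assumption~\ref{lt_mod}, which is assumed here, it then remains only to verify the two quantitative hypotheses of Corollary~\ref{pca_sddn}: (i) the bound $\max_t \|\M_{s,t}\P\|_2 \le q < 1$, and (ii) the bound $\bz$ on the fraction of nonzeros in any row of the noise matrix $[\w_1,\dots,\w_\alpha]$.

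For (i), since $\M_{s,t}\P = -\I_{\T_t}{}'\P$ merely extracts the rows of $\P$ indexed by $\T_t$, I would bound $\|\M_{s,t}\P\|_2^2 = \|\I_{\T_t}{}'\P\|_2^2 \le |\T_t|\,\max_{i}\|\I_i{}'\P\|_2^2 \le s\,\mu^2 r/n$, using $|\T_t| \le s$ and the definition of the incoherence parameter $\mu$ (this computation is already recorded just before the corollary). This gives $q = \sqrt{\mu^2 r s/n}$; the requirement $q < 1$, i.e.\ $\mu^2 r s < n$, is implicit in the corollary and is subsumed by the standing condition $3\sqrt{\bz}qf + \epsbnd + \epsden < 1$ in the regimes of interest. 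For (ii), the key observation is that $\supp(\wt) \subseteq \T_t$: the only entries where $\wt = -\I_{\T_t}\I_{\T_t}{}'\lt$ can be nonzero are the missing ones. Hence the number of nonzeros in row $i$ of $[\w_1,\dots,\w_\alpha]$ is at most the number of time instants $t$ with $i \in \T_t$, i.e.\ the number of times entry $i$ is missing; dividing by $\alpha$, the fraction of nonzeros in each row of the noise matrix is at most the fraction of missing entries in the corresponding row of the data matrix, which is $\le \bz$ by hypothesis.

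With (i) and (ii) established, Corollary~\ref{pca_sddn} applies verbatim and yields, w.p.\ at least $1 - 10n^{-10}$, the stated bound $\SE(\Phat,\P) \le \frac{3\sqrt{\bz}qf + \epsbnd}{1 - 3\sqrt{\bz}qf - \epsbnd - \epsden}$ with $q = \sqrt{\mu^2 r s/n}$. I do not expect any genuine obstacle: the statement is a repackaging of Corollary~\ref{pca_sddn}, and the only two points needing brief justification are the incoherence-based computation of $q$ and the containment $\supp(\wt) \subseteq \T_t$, which lets the per-row missing-data fraction control the per-row nonzero fraction of the noise matrix.
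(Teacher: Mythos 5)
Your proposal is correct and matches the paper's own (largely implicit) argument: the paper derives this corollary exactly by specializing Corollary~\ref{pca_sddn} with $\M_{s,t} = -\I_{\T_t}{}'$, using the incoherence computation $\max_t\|\I_{\T_t}{}'\P\|_2^2 \le s\,\mu^2 r/n \equiv q^2$ stated just before the corollary, and identifying the row-wise nonzero fraction of the noise matrix with the row-wise fraction of missing entries. Nothing further is needed.
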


%Thus the above result says the following. Suppose $\bz = b_{0,col}$ (we equate the fraction of outliers in any row and any column) and let $f$ be a constant. To achieve error $\epsilon$, we need the fraction of missing entries in any row or column to be below $c \epsilon / \sqrt{r}$ and we need number of data vectors (column size of data matrix) to be at least $\alpha = c (r \log n) q^2/\epsilons^2 =  c (r \log n) \bz^2 /\epsilons^2$.

Another way to solve the above problem would be to use techniques from the low-rank matrix completion (LRMC) literature to first complete the incomplete low-rank true data matrix $\L$ and then compute its left singular vectors. (1) This will need fewer observed entries because it relies on  more expensive techniques that are designed to actually deal with missing data instead of just SVD which treats the missing data as noise. However, the LRMC methods will also be much slower. In fact, SVD is often the initialization step for iterative LRMC solutions, e.g., \cite{lowrank_altmin}.
(2) It is hard to directly compare Corollary \ref{cor_pca_miss} with the LRMC guarantees since both use different assumptions, but the following can be said. LRMC results assume that observed entries are selected uniformly at random (or via a Bernoulli model), while Corollary \ref{cor_pca_miss} assumes a bound on the number of missing entries per row ($\bz$) and per column ($s/n$). %Because of this, most LRMC results do not allow a different maximum fraction of outliers per row versus per column (e.g., if there is a small sized but slow moving foreground occlusion, then $s/n$ will be small but $\bz$ will be large), while our result does.
(3) Moreover, LRMC results  require denseness of left and right singular vectors, while our Corollary \ref{cor_pca_miss} only needs denseness of left singular vectors.%
%the latter assume a fixed size data matrix (fixed $\alpha$ in our notation) where as our result looks for a lower bound on the minimum number of columns, $\alpha$ required. Also, LRMC results assume that observed entries are selected according a Bernoulli model with fixed probability $p$ at all times. Our result assumes a bound on the number of missing entries per row and per column (without assuming any probability model on how the observed or missing entries are selected).

\subsection{Special case: subspace update step of a dynamic robust PCA solution} \label{apps_dynrpca}

A very important special case of PCA-SDDN occurs in the subspace update step of the Recursive Projected Compressive Sensing (ReProCS) approach to dynamic robust PCA \cite{rrpcp_perf,rrpcp_aistats}. For dynamic robust PCA, the observed data vector $\mt$ satisfies $\mt:= \lt + \xt$, where $\xt$ is a sparse outlier with support denoted by $\T_t$ at time $t$, and $\lt: = \P_{t} \at$ is the true data vector that lies in a low dimensional subspace that is ``slowly" changing; the subspace is fixed for a while and then changes by a little. To be precise we assume that $\P_{t} = \P_{t_j}$ for $t \in [t_j, t_{j+1})$ with $\SE( \P_{t_{j-1}}, \P_{t_j}) \le b_P \ll 1$ and with $t_{j+1}-t_j$ is lower bounded. One generative model for this is given in \cite[Equation (2)]{rrpcp_dynrpca}.
The goal is to track this changing subspace over time. The columns of the matrices $\P_{t_j}$'s are assumed to be dense (not sparse).
For simplicity we often use $\P_j := \P_{t_j}$.% (even though this is bad notation).

%(can be replaced by another CS technique)
ReProCS proceeds as follows. Given an accurate estimate of the previous subspace, denoted $\Phat_{t-1}$, it first projects $\mt$ orthogonal to $\Phat_{t-1}$ to get $\tilde\mt:=(\I - \Phat_{t-1}\Phat_{t-1}{}')\mt$. Because of the slow subspace change assumption, it can be argued that this nullifies most of $\lt$ and gives projected measurements of $\xt$. The problem of recovering $\xt$ from $\tilde\mt$ is now a standard Compressive Sensing (CS) problem \cite{candes_rip} in small noise, $\bm\beta_t:= (\I - \Phat_{t-1} \Phat_{t-1}{}') \lt$. ReProCS uses ell-1 minimization followed by support estimation and Least Squares based debiasing to solve this CS problem. Once $\xt$ is recovered, it recovers $\lt$ by subtraction, $\lhat_t = \mt - \xhat_t$. The estimates $\lhat_t$ are used to update the subspace estimate every $\alpha$ frames by solving either a PCA or an incremental PCA problem. It is assumed that the subspace change is slow enough so that $t_{j+1}-t_j > K \alpha$. This allows the subspace to be updated $K$ times, each time with a new set of $\alpha$ frames of $\lhat_t$, before it changes. The intuitive reason why this works is, after each subspace update, $\bm\beta_t$ reduces and hence the CS step error reduces. Thus, the error in $\lhat_t$ reduces and this, in turn, helps reduce the subspace recovery error at the next update.

To understand in a simple fashion how PCA-SDDN fits in here, assume that the subspace change is detected exactly at $t_j$ and the subspace update times are also aligned so that the first subspace update is done at $t=t_j+\alpha-1$, the second is at $t=t_j+2\alpha-1$, and so on. Let $\Phat_{j,k}$ denote the updated subspace estimate after the $k$-th update with $\Phat_{j,0} = \Phat_{j-1}$.
Thus, in the interval $[t_j, t_j+\alpha)$, $\P_{t} = \P_{j} $ and $\Phat_{t-1} =\Phat_{j,0} = \Phat_{j-1}$. Assume that the previous subspace is recovered with $\epsilon$ error, i.e. $\SE(\Phat_{j-1},\P_{j-1}) \le \epsilon$ before $t_j$. Using this and the denseness of columns of $\P_{t}$, one can show that $\xt$ is recovered accurately for all $t \in [t_j, t_j+\alpha)$. The same analysis also shows that $\|\B_t\| \le 1.2$ where $\B_t:=  \I_{\T_t}{}' (\I - \Phat_{j,0} \Phat_{j,0}{}')  \I_{\T_t}$.
Then, using simple extra assumptions, one can argue that $\T_t$, which is the support of the outlier vector $\xt$, can be recovered exactly. With this, it can be shown that $\et:= \lhat_t - \lt = \xt - \xhat_t$ satisfies
\[
\et = \I_{\T_t} \B_t^{-1} \I_{\T_t}{}' (\I - \Phat_{j,0} \Phat_{j,0}{}') \l_t.
\]
Thus, the subspace update step is an instance of PCA-SDDN with $\yt \equiv \lhat_t$ and $\wt \equiv \et$ for $t = t_j, t_j+1, \dots t_j+\alpha-1$. We can apply Corollary \ref{pca_sddn} with $\bz$ being the maximum fraction of nonzeros in any row of $[\x_{t_j},\x_{t_j+1}, \dots, \x_{t_j+\alpha-1}]$ and with $\M_{s,t} =  \B_t^{-1} \I_{T_t}{}' (\I - \Phat_{j,0} \Phat_{j,0}{}')$.
Thus, $\|\M_{s,t} \P \|_2 \le 1.2 \SE(\Phat_{j,0}, \P_{j}) = 1.2 \SE(\Phat_{j-1},\P_j) \le 1.2 (\epsilon + b_P):= q_0$.
%By the slow subspace change assumption, this $\SE$ is assumed to be small.
Apply the PCA-SDDN result with $q=q_0$ and $\epsilon_\SE = q_0/4$. Thus, if $3 \sqrt{b} f < 0.2$, and if  $\alpha \ge \alpha_0 = C f^2 (r \log n)$, then, after the first subspace update at $t=t_j+\alpha-1$, $\SE(\Phat_{j,1},\P_j) \le q_0/4$. This, in turn ensures that the bound on the noise $\bm\beta_t$ seen by the CS steps for the next interval, $[t_j+\alpha, t_j+2\alpha-1)$, is significantly  smaller.
Using denseness and the bound on  $\SE(\Phat_{j,1},\P_j)$ and simple extra assumptions, one can then argue that the CS step gives a more accurate estimate of $\xt$ and that $\T_t$ is correctly recovered. Thus, for the interval $[t_j+\alpha, t_j+2\alpha)$, $\et$ again  the expression given above but with $\Phat_{j,0}$ replaced by $\Phat_{j,1}$. %but has a tighter bound on its norm.
%and that helps guarantee that $\xt$ is also recovered with a significantly smaller error bound in this interval.
Thus, for the second PCA update at $t=t_j+2\alpha-1$, $\|\M_{s,t} \P \|_2 \le 1.2  \SE(\Phat_{j,1}, \P_{j}) \le 1.2 q_0/4  := q_1$. Apply the PCA-SDDN result with $q= q_1$ and with $\epsilon_\SE = q_1/4 = 1.2 q_0/4^2$ to show that $\SE(\Phat_{j,2},\P_j) \le q_1/4 = 1.2 q_0/4^2$. Repeating this process, after $K$ updates, $\SE(\Phat_{j,K},\P_j) \le (1.2/4)^{K-1} 0.25 q_0$. By picking $K$ large enough, we ensure that this bound is below $\epsilon$. We set the final estimate $\Phat_{j} : = \Phat_{j,K}$. This serves as the starting point for estimating the next change.%

%Moreover slow subspace change and accurate previous subspace estimate ensures that $\|(I - \hat{P}_{t-1} \hat{P}_{t-1}')\P_{(t)}\|_2$ is small and hence $q$ is small.

In the above discussion, to explain things simply, in each subspace update, we used simple SVD. However, if we assume that only one (or only a few) direction(s) change at each subspace update time as was done in \cite{rrpcp_dynrpca}, one can get an improved result by first accurately estimating the new direction(s) that got added to the subspace by solving a problem of PCA with partial subspace knowledge repeated $K$ times. Finally, when $q$ is small enough (is below $2 \epsilon$), one can re-estimate the entire subspace by solving a standard PCA problem. This is done to delete the removed direction(s) from the subspace estimate. For the former problem, we replace standard SVD by a projection-SVD step. To analyze it, we develop a modification of the ideas from this work to prove a result for PCA with partial subspace knowledge when noise is data dependent, see \cite[Theorem 6.4]{rrpcp_dynrpca}. For the deletion, step we use the PCA-SDDN result given above.

%Thus, $\hat{l}_t = l_t + e_t$ where $\et$ depends linearly on $\l_t$. If outlier support $\T_t$ changes enough over time, all required assumptions of our main result will be satisfied for this problem.

\section{Numerical Experiments} \label{expts}

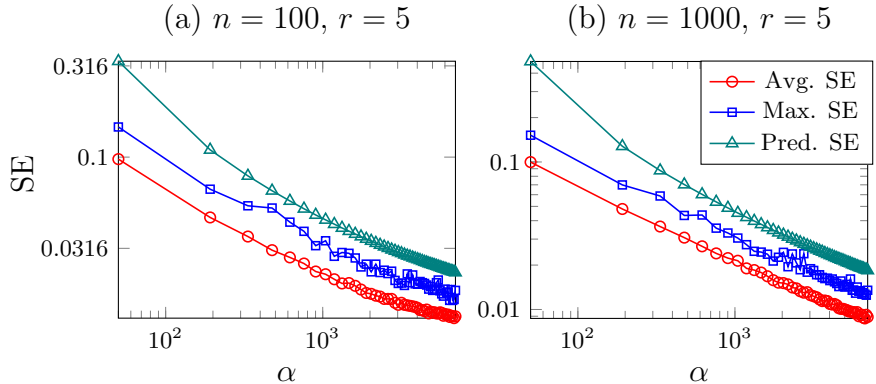
\begin{figure}[ht!]
%\begin{center}
%\begin{tikzpicture}
%	\begin{axis}[
%	    width=\textwidth,
%	    height=4cm,
%        xlabel={\large{$\alpha$}},
%		ylabel={\large{SE}},
%		my legend style compare,
%		legend entries={
%            	Avg. SE,
%            	Max. SE,
%            	SE from Theorem \ref{mainthm},
%            },
%			legend style={
%                at={([yshift=-5pt].595,1.073)},
%                anchor=north west,
%            },
%            ymode=log,
%            log y ticks with fixed point,
%            enlargelimits=false,
%        ]
%
%	\addplot table[x index = {0}, y index = {1}]{\SEbound};
%	\addplot table[x index = {0}, y index = {2}]{\SEbound};
%	\addplot table[x index = {0}, y index = {3}]{\SEbound};
%	\end{axis}
%\end{tikzpicture}
%\end{center}
\begin{center}
\begin{tikzpicture}
    \begin{groupplot}[
        group style={
            group size=2 by 1,
%            y descriptions at=edge left,
%            y descriptions at=edge left,
        },
        width=0.5\textwidth,
	    height=5cm,
		my legend style compare,
    	]
        \nextgroupplot[
	        xlabel={\large{$\alpha$}},
			ylabel={\large{SE}},
		    ymode=log,
		    xmode=log,
            log y ticks with fixed point,
            enlargelimits=false,
            title={\large{(a) $n=100$, $r=5$}},
                x label style={at={(axis description cs:0.5,0)},anchor=north},
                y label style={at={(axis description cs:-.01,0.45)},anchor=west},
        ]
        \node [text width=1em,anchor=north west] at (rel axis cs: 0.45,-1.2)
                {\subcaption{\label{fig:sebnd_smallnr}}};
	\addplot table[x index = {0}, y index = {1}]{\SEbound};
	\addplot table[x index = {0}, y index = {2}]{\SEbound};
	\addplot table[x index = {0}, y index = {3}]{\SEbound};
        \nextgroupplot[
        		legend entries={
            	Avg. SE,
            	Max. SE,
            	Pred. SE,
            },
			legend style={
                at={([yshift=-5pt]0.504,1.055)},
                anchor=north west,
            },
        xlabel={\large{$\alpha$}},
		            ymode=log,
		            xmode=log,
                    log y ticks with fixed point,
            enlargelimits=false,
                title={\large{(b) $n=1000$, $r=5$}},
                x label style={at={(axis description cs:0.5,0)},anchor=north},
        ]
\node [text width=1em,anchor=north west] at (rel axis cs: 0.45,-1.2)
                {\subcaption{\label{fig:sebnd_largenr}}};
	\addplot table[x index = {0}, y index = {1}]{\SEboundnew};
	\addplot table[x index = {0}, y index = {2}]{\SEboundnew};
	\addplot table[x index = {0}, y index = {3}]{\SEboundnew};
        \end{groupplot}
\end{tikzpicture}
\end{center}
\vspace{-0.5cm}
\caption{\small{Numerically computed mean and maximum values of $\SE(\Phat,\P)$ (over 100 trials) and its bound from Theorem \ref{mainthm}. The bound used $c=1$.}}
\vspace{-0.5cm}
\label{fig:SEbnd}
\end{figure}		

In our first experiment we numerically demonstrate the tightness of the bound of Theorem \ref{mainthm} by plotting the numerically computed subspace error and the bound suggested by the theorem. In the expressions for $\epsbnd$ and $\epsden$ in the bound, there is an unspecified constant $c$. We set $c=1$ while plotting the bound. We generated the data as $\lt = \P \at$, where $\P$ was generated by ortho-normalizing the columns of an $n \times r$ matrix with  independent identically distributed (iid) standard Gaussian entries.  We generated the coefficients $(\at)_i$ as iid $uniform(-6, 6)$. With this, $\lambda^+ = \lambda^- = 12$ and $f = 1$. We generated the uncorrelated noise as $\vt = \bm{B} \bm{c}_t$ where $\bm{B}$ is generated by orthonormalizing the columns of an $n \times r_v$ matrix with iid standard Gaussian entries and $(\bm{c}_t) \sim $\emph{unif}$(-q_i, q_i)$ with $q_i = 1.1 - 0.1i/r_v$. The data-dependent noise was generated as $\wt = \I_{\T_t} \M_{s,t} \frac{q}{\|\M_{s,t} \P\|} \lt$ and each entry of $\M_{s,t}$ was generated independently as the absolute value of a standard Gaussian r.v. (taking the absolute value ensures that $ \E[\M_{s,t}] \neq 0$). Further, $\T_t$ was generated to follow \cite[Model D.24]{rrpcp_dynrpca} with $s=5$, $\rho=1$ and $b_0 = 0.05$ (simulates a 1D moving object that moves every so often). We set $\yt = \lt + \wt + \vt$.
We used $r_v =r$, $q=0.001$. From the support change model, $\bz = b_0= 0.05$.
We varied $\alpha$ in the range of $[29, 7000]$ and computed $\Phat$ and $\SE(\Phat,\P)$ for each value of $\alpha$.  Fig \ref{fig:sebnd_largenr} used $n=1000$ and $r=10$ while Fig. \ref{fig:sebnd_smallnr} used $n = 100$, $r = 5$. We show the mean and maximum values of the numerically computed $\SE(\Phat,\P)$, and the bound predicted by Theorem \ref{mainthm}, as a function of $\alpha$ in Fig. \ref{fig:SEbnd}. The mean and max are computed over 100 Monte Carlo trials.
Notice that the bound appears quite tight in both figures. %The same was also observed for various other values of $r$ and $n$.

%\pgfplotsset{every axis title/.append style={at={(.1,0.85)}}}
\pgfplotsset{every axis title/.append style={at={(.5,0.95)}}}
\begin{figure}[ht!]
\begin{center}
\begin{tikzpicture}
    \begin{groupplot}[
        group style={
            group size=2 by 2,
            vertical sep=1.35cm,
            horizontal sep=1.2cm
            %y descriptions at=edge left,
        },
        width = .5\textwidth,
        height = 4cm
    	]
        \nextgroupplot[
        		view={0}{90},
                xlabel=\large{$r$},
                ylabel=\large{$\alpha$},
                colormap/blackwhite,
                title={\large{(a)}},
                x label style={at={(axis description cs:0.5,-0.13)},anchor=north},
        ]
        \node [text width=1em,anchor=north west] at (rel axis cs: 0,1)
                {\subcaption{\label{fig:phasetransvsr}}};
			    \addplot3[surf] file{figures_new_pca/PhaseTransvsr_bounded.dat};	

        \nextgroupplot[
		        view={0}{90},
                xlabel=\large{$n$},
                %ylabel=\large{$\alpha$},
                colormap/blackwhite,
                title={\large{(b)}},
				x label style={at={(axis description cs:0.5,-0.13)},anchor=north},
        ]
        \node [text width=1em,anchor=north west] at (rel axis cs: 0,1)
                {\subcaption{\label{fig:phasetransvsnbnd}}};

                	\addplot3[surf] file {figures_new_pca/PhaseTransvsn_bounded.dat};
                			
		            \nextgroupplot[
		       view={0}{90},
               xlabel=\large{$n$},
               ylabel=\large{$\alpha$},
               colormap/blackwhite,
               title={\large{(c)}},
               scaled y ticks=false, tick label style={/pgf/number format/fixed},
               x label style={at={(axis description cs:0.5,-0.13)},anchor=north},
        ]
%\addplot3[surf] file {figures_new_pca/PhaseTransvsn_gaussian.dat};
%        \node [text width=1em,anchor=north west] at (rel axis cs: 0,0)
%                {\subcaption{\label{fig:phasetransvsngausn}}};
    \node [text width=1em,anchor=north west] at (rel axis cs: 0,1)
                {\subcaption{\label{fig:phasetransvsngausr}}};

        	\addplot3[surf] file {figures_new_pca/PhaseTransvsn_gaussian_rv_r.dat};
        			            \nextgroupplot[
		       view={0}{90},
               xlabel=\large{$n$},
               %ylabel=\large{$\alpha$},
               colormap/blackwhite,
               title={\large{(d)}},
               scaled ticks=false, tick label style={/pgf/number format/fixed},
               x label style={at={(axis description cs:0.5,-0.13)},anchor=north},
        ]
\addplot3[surf] file {figures_new_pca/PhaseTransvsn_gaussian.dat};
        \node [text width=1em,anchor=north west] at (rel axis cs: 0,0)
                {\subcaption{\label{fig:phasetransvsngausn}}};
    %    \node [text width=1em,anchor=north west] at (rel axis cs: 0,1)
%                {\subcaption{\label{fig:phasetransvsngausr}}};
%
%        	\addplot3[surf] file {figures_new_pca/PhaseTransvsn_gaussian_rv_r.dat};
        	
        \end{groupplot}
\end{tikzpicture}
\end{center}
\vspace{-0.5cm}
\caption{\small{%Phase transition plots for $r$ versus $\alpha$ (with $n$ fixed) or $n$ versus $\alpha$ (with $r$ fixed).
The grey scale intensity represents the numerically computed probability (fraction of times) that $\SE(\Phat,\P) \le \epsilon$ for the $\epsilon$ value given in the text. Black is zero and white is one. Fig. \ref{fig:phasetransvsr}: displays the probability for $r$ versus $\alpha$ for $n = 100$, $r_v = r$ and bounded data and noise model; Fig. \ref{fig:phasetransvsnbnd}: $n$ versus $\alpha$ for $r_v=r = 1$ and bounded data and noise; Fig. \ref{fig:phasetransvsngausr}: $n$ versus $\alpha$ for $r_v =r= 1$ and Gaussian data and noise; Fig. \ref{fig:phasetransvsngausn}: $n$ versus $\alpha$ for $r = 1$, $r_v = n$ and Gaussian.%
}}
\vspace{-0.5cm}
\label{fig:phasetrans}
\end{figure}

In our second experiment, we use Monte Carlo to estimate the probability of $\SE(\Phat,\P) \le \epsilon$ for various values of $r$ and $\alpha$ with $n$ fixed or various values of $n$ and $\alpha$ with $r$ fixed. All our estimates used 100 Monte Carlo trails.
The probability is displayed in Fig. \ref{fig:phasetrans} as a grey scale intensity with black denoting zero and white denoting one. This helps to numerically compute the value of $\alpha$ needed for a given $n,r$ to ensure that the probability is close to one (smallest $\alpha$ for which the color is white).
Fig. \ref{fig:phasetransvsr} varies $r$ and $\alpha$ for $n=100$. All other parameters were the same as in the first experiment. So $n=100$, $\bz=0.05$, $q=0.001$, $f=1$, $\lambda^-=12$,  $r_v=r$, $\lambda_v^+ = 1.1$.
To generate the plot we used $\epsilon = 1.5 \left( \sqrt{\bz} (2q + q^2)f  + \frac{\lambda_{v, \bm{P}, \bm{P}_{\perp}} / \lambda^-}{1 - \frac{\lambda_{v, \rest^+} - \lambda_{v, \bm{P}}^-}{\lambda^-}} \right)$. As can be seen, the dependence of $\alpha$ on $r$ is linear. This matches what our guarantees claim about the sample complexity: $\alpha$ needs to be $C \max(r_v,r)\log n$. Here $r_v=r$.
% varied $\alpha$ in the range $[10, 500]$, $r$ in range $[1,10]$.

In the other three sub-figures we fix $r$ and $r_v$ and evaluate the dependence of $\alpha$ on $n$ in various settings. In Fig. \ref{fig:phasetransvsnbnd}, we set $r_v=r$, $r=1$, and other parameters were as above. Thus both true data and noise are bounded. We display the numerically estimated probability of $\SE(\Phat,\P) \le \epsilon$ for various values of $n$ and $\alpha$. Recall that for bounded data and noise, the required $\alpha$ is proportional to $\max(r_v,r)\log n$, i.e., it depends logarithmically on $n$. Logarithmic variation is hard to observe numerically unless a very large range of $n$ is used. This is also what is seen from Fig. \ref{fig:phasetransvsnbnd}. For the range of values of $n$, the required $\alpha$ seems nearly constant.
%This is why the numerically computed lower bound on the required $\alpha$ seems to have almost no variation with $n$ for the range of values used.
% to ensure high enough probability of small $\SE$ (smallest $\alpha$ for which the intensity is white)

For Fig. \ref{fig:phasetransvsngausr}, we replaced the boundedness assumption by a Gaussian assumption on data and noise. We still generated $\vt = \bm{B} \bm{c}_t$ where $\B$ is an $n \times r_v$ matrix generated as before and we set $r_v=r$. But now we generated $(\bm{c}_t)_i  \iidsim \mathcal{N}(0, q_i^2)$ with $q_i = 0.9 - 0.4i/r_v$.  We generated $\lt = \P \at$ where $\at \iidsim \n(0,100)$. Here $\n(0,\sigma^2)$ refers to a zero Gaussian distribution with variance $\sigma^2$.
Thus $\Lam=100$ and $f=1$. Everything else was the same as in the first experiment, thus $\wt = \I_{\T_t} \M_{s,t} \lt$ with $\T_t$ and $\M_{s,t}$ were generated as described earlier. Thus, $\bz=0.05, q=0.001$. Also, $\lambda^+=100 = \lambda^-$ and $f=1$ and $\lambda_v^+=0.9$. We fixed $r_v=r=1$ and vary $n$ and $\alpha$ as was also done in Fig. \ref{fig:phasetransvsnbnd}. Notice now that the required $\alpha$ to achieve high-enough (white) probability of success does increase with $n$. It is hard to say though whether the dependence is indeed linear as predicted by our theorem. As we pointed out earlier in Remark \ref{tighten_subg}, for this setting since $r_v=r \ll n$, %for the sub-Gaussian setting, if $r_v$ is less than $n$ (here $r_v$ can be defined as $\E[\|\vt\|_2^2]/\lambda_v^+$),
it may be possible to tighten the required sample complexity lower bound.

Finally, for Fig. \ref{fig:phasetransvsngausn}, we generated data exactly as for Fig. \ref{fig:phasetransvsngausr} but with $r_v=n$ (instead of $r_v=r$). As can be seen, now the required sample complexity does indeed increase linearly with $n$. This matches what is predicted by our main result for the $r_v=n$ case.
Notice that in the last two sub-figures where we used Gaussian noise and data, we have increased signal power as compared to the bounded case. If this was not increased, the required $\alpha$ to achieve large enough probability of success would be very large and would lead to very slow computations. All experiments used the MATLAB command svds for computing $\Phat$. All codes are available at \url{https://github.com/praneethmurthy/correlated-pca}.

\section{Conclusions and Future Work} \label{conclude}
In this work, we studied the PCA problem when the noise can be non-isotropic and/or data-dependent, and as a result, in general the data and noise are correlated. We obtained guarantees under both a bounded-ness assumption and a sub-Gaussian assumption on the data and noise.
%Most existing results for PCA via SVD assume that the data and noise are mutually independent or, at least, uncorrelated, e.g., see \cite{nadler} or \cite{onlinePCA3}. %We first studied this problem (which we called correlated-PCA), for a specific data-noise correlation model in very recent work \cite{corpca_nips}. The current work significantly improves the sample complexity bounds given in \cite{corpca_nips}.
When the uncorrelated noise has effective dimension $O(r)$, under the bounded-ness assumption, a simple assumption on data-noise correlation, and a bound on the ratio between noise power and minimum signal space eigenvalue, we showed that the required sample complexity for PCA is near optimal. Under the sub-Gaussian assumption, the required sample complexity as predicted by our results increases to $O(n)$ which is comparable to what existing results for isotropic Gaussian noise also need. However, as noted in Remark \ref{tighten_subg}, in the setting where the sub-Gaussian noise has effective dimension $r_v \ll n$, it should be possible to tighten this.
%We also showed using a simple counter example to demonstrate that a bound on noise power outside the true subspace becomes necessary whenever the noise is non-isotropic.

%Under a simple assumption on data-noise correlation, we showed that, with as few as $\alpha = C r (\log n)f^2 $ samples, if $\sqrt{\bz} f$ is bounded by a constant, one can achieve subspace recovery error that is below a fraction of $q$ whp. Recall that $q$ bounds the noise-to-signal ratio and $\bz$ is the parameter in the data-noise correlation assumption (see \eqref{M2t_bnd}) If the condition number $f$ is  $O(1)$, then, up to constants, this sample complexity bound is only $(\log n)$ times the minimum required which would be $r$.

%{\em Further improvements. }
The result given here assumes that the $\lt$'s are mutually independent random variables.
Mutual independence can be replaced by an autoregressive (AR) model on the $\lt$'s. As long as the AR parameter is not too large, it should be possible to get a result very similar to the one given in this work using the matrix Freedman's inequality \cite{freedman} or a little weaker than the one given here using matrix Azuma \cite{tail_bound}. The latter would generalize the approach developed in \cite{rrpcp_aistats} to for analyzing the subspace update step of ReProCS under an AR model on the $\lt$'s.%for dynamic robust PCA
%(the approach will be similar to that used in \cite{rrpcp_aistats} to analyze the subspace deletion step of ReProCS). %; this step also involves a correlated-PCA problem). It will require $r^2 \log n$ samples instead of $r \log n$.
%We can also replace the boundedness assumption by a sub-Gaussianity assumption. This will give a result similar to the current one in one of two scenarios. The first is if $\wt$ is bounded, even though $\lt$ is not. The second is if $\M_{1,t}$ is fixed with $t$ (not time-varying). If neither of these hold, we will need at least $O(n)$ samples.

%{\em Ongoing and Future Work. }
In ongoing work, we are studying the problem of PCA in data-dependent noise when partial knowledge of the subspace is available and its implications for the subspace update step of ReProCS \cite{rrpcp_dynrpca}. %Using this result, it will be possible to both significantly improve the ReProCS correctness result and simplify its proof.
A useful open question for future work is how to analyze algorithms for streaming PCA, e.g., the block-stochastic power method, in the data-dependent noise setting. This was studied in \cite{onlinePCA3} under the spiked covariance model, or in \cite{streamingPCA2} for an arbitrary observed data covariance matrix, but for $r=1$ dimensional PCA.

%If this can be done, it could be used to replace the SVD step in ReProCS, thus providing a provably correct and streaming dynamic robust PCA solution.
% or Oja's algorithm, in the correlated-PCA setting. In recent literature \cite{onlinePCA3,streamingPCA2}, strong guarantees for both have appeared in the general uncorrelated noise setting. %These are needed because full SVD methods need way too much memory even for medium sized problems.

%It will require using the matrix Azuma inequality \cite{tail_bound}, in a fashion analogous to that used for analyzing the subspace update step in \cite{rrpcp_aistats}.
%In fact in this latter case, we may be able to improve the sample complexity to $C (r+\log n) f^2 (q/\varepsilon_\SE)^2$ because we may be able to use concentration results from \cite{vershynin} (applied to $\sum_t \at \at{}'$) to replace matrix Bernstein. %we can get an almost analogous result even in this case by using concentration results from \cite{vershynin} to replace matrix Bernstein.
%this is the NOT case for dyn RPCA since $\M_{1,t}= (\Phi_{\T_t})^\dag \Phi_{\T_t}{}'$ is time-varying.  can we get an $r+\log n$ result in this case??  (this would imply that $\M_t$ depends on $\lt$ also)

\appendix

\renewcommand{\thetheorem}{\thesection.\arabic{theorem}}

\section{Proof of Main Result: Bounded Case}\label{proof_mainthm}

%In this section, we prove our main result,  Theorem \ref{mainthm}, under Assumption \ref{lt_mod}.% (bounded data and noise).

\begin{proof}[Proof of Theorem \ref{mainthm} under Assumption \ref{lt_mod}] %with simplified condition \ref{b0_q_lamv_bnd}
%\begin{proof}[Proof of Theorem \ref{mainthm} under Assumption \ref{lt_mod}]
Apply the Davis-Kahan $\sin \theta$ theorem \cite{davis_kahan} summarized in Lemma \ref{sintheta} %Let $\D_0$ be a Hermitian matrix whose span of top $r$ eigenvectors equals $\Span(\P)$. Let $\D$ be a Hermitian matrix with top $r$ eigenvectors $\Phat$. % (both $\Phat$ and $\P$ need to have the same dimension).
%It states that
%\begin{align}
%\SE(\Phat,\P) \le \frac{\|(\D-\D_0)\P\|_2}{\lambda_r(\D_0) - \lambda_{r+1}(\D)}  \le  \frac{\|(\D-\D_0)\P\|_2}{\lambda_r(\D_0) - \lambda_{r+1}(\D_0) - \lambda_{\max}(\D-\D_0)}
%\label{sintheta_bnd_2}
%\end{align}
%as long as the denominator is positive. %The second inequality follows from the first using Weyl's inequality.
with
\[
\D_0 = \frac{1}{\alpha} \sum_t \lt \lt' + \P\P'\bm\Sigma_v \P\P' =  \P( \frac{1}{\alpha} \sum_t \at \at' + \P'\bm\Sigma_v \P ) \P'.
\]
Observe that $\lambda_{r+1}(\D_0) = 0$ and, using Weyl and $\E[\frac{1}{\alpha} \sum_t \at \at'] = \Lam$,
\[
\lambda_{r}(\D_0) \ge \lambda_{\min}(\P'\bm\Sigma_v \P) + \lambda_{\min}(\frac{1}{\alpha} \sum_t \at \at') \ge \lambda_{v,\P}^- + \lambda^- - \|\frac{1}{\alpha} \sum_t \at \at' - \Lam\|_2 .
\]
Thus, rewriting $\D - \D_0$ as $ \E[\D - \D_0] + (\D - \D_0 - \E[\D - \D_0])$, %and using $\lambda_{\max}(\M) \le \|\M\|_2$ and triangle inequality, % and then triangle inequality for numerator and Weyl's inequality for denominator, and using
%applying \eqref{sintheta_bnd_2},
\begin{align*}
\SE(\Phat,\P) \le  \frac{\|(\E[\D-\D_0])\P\|_2 + \|\D - \D_0 - \E[\D - \D_0]\|_2}{ \lambda^- + \lambda_{v,\P}^- - \|\frac{1}{\alpha} \sum_t \at \at' - \Lam\|_2   - \lambda_{\max}(\E[\D-\D_0]) -  \|\D - \D_0 - \E[\D - \D_0]\|_2}
%\label{sintheta_bnd_2}
\end{align*}
Observe that
\begin{align*}
\D - \D_0 = & \frac{1}{\alpha} \sum_t \vt \vt' +  \frac{1}{\alpha} \sum_t \lt \vt' + \frac{1}{\alpha} \sum_t \vt \lt' \\
& + \frac{1}{\alpha} \sum_t \lt \wt' + \frac{1}{\alpha} \sum_t \wt \lt' + \frac{1}{\alpha} \sum_t \wt \wt'  -\P\P'\bm\Sigma_v \P\P'
\end{align*}
Since $\vt$ is uncorrelated with $\lt$, the expected value of the second and third terms is zero. Thus,
\begin{align}
\E[\D - \D_0]  = &  (\bm\Sigma_v - \P\P'\bm\Sigma_v \P\P') +   \frac{1}{\alpha} \sum_t \P \Lam \P'\M_t{}' + \frac{1}{\alpha} \sum_t \M_t \P \Lam \P' + \frac{1}{\alpha} \sum_t \M_t \P \Lam \P'\M_t{}'
\label{ED_D0_dif}
\end{align}
The third term on the RHS is a transpose of the second one. For the second term, write $\M_t = \M_{2,t} \M_{1,t}$ and then use Cauchy Schwartz for sums of matrices (see e.g. \cite[Lemma A.6]{rrpcp_aistats}) to conclude that
\begin{align}
\| \frac{1}{\alpha} \sum_t \P \Lam \P'\M_{1,t}{}'\M_{2,t}{}' \|_2^2 & \le  \| \frac{1}{\alpha} \sum_t \P \Lam \P'\M_{1,t}{}'\M_{1,t}\P \Lam \P' \|_2  \| \frac{1}{\alpha} \sum_t \M_{2,t} \M_{2,t}{}' \|_2 \nn \\
& \le \max_t \|\M_{1,t}\P \Lam \P'\|_2^2 \ \bz \le  (q \lambda^+)^2 \bz.
\label{bnd_avg_sig_noise_cor_bnd_1}
\end{align}
The second inequality used \eqref{M2t_bnd} and the third inequality used $\|\M_{1,t} \P\|\le q$.
%Using Cauchy Schwartz for sums of matrices for the fourth term,
The fourth term is handled similarly:
\begin{align}
\| \frac{1}{\alpha} \sum_t \M_t \P \Lam \P'\M_{1,t}{}' \M_{2,t}{}'\|_2^2 & \le  \| \frac{1}{\alpha} \sum_t \M_{t} \P \Lam \P'\M_{1,t}{}'\M_{1,t}\P \Lam \P'\M_{t}{}' \|_2  \| \frac{1}{\alpha} \sum_t \M_{2,t} \M_{2,t}{}' \|_2 \nn \\
& \le  \max_t \|\M_{t}\P \Lam \P'\M_{1,t}{}' \|_2^2  \ \bz \le  (q^2 \lambda^+)^2 \bz.
\label{bnd_avg_sig_noise_cor_bnd_2}
\end{align}
Since $(\P \P' + \P_\perp \P_\perp{}') = \I$, we can always rewrite $\bm\Sigma_v =(\P \P' + \P_\perp \P_\perp{}') \bm\Sigma_v (\P \P' + \P_\perp \P_\perp{}')$.
Using this and the last three equations above, %and recalling that $\D_0 = \P(\Lam + \P'\bm\Sigma_v \P)\P'$,
%Using this and bounds from the previous section on $\|\sum_t \E[\lt \wt{}']\|$ and  $\|\sum_t \E[\wt \wt{}']\|$,
\begin{align*}
\|(\E[\D-\D_0])\P \|_2
%& \le \|\P_\perp{}'\bm\Sigma_v \P\|_2 + \|\frac{1}{\alpha} \sum_t \P \Lam \P'\M_t{}' + \frac{1}{\alpha} \sum_t \M_t \P \Lam \P' + \frac{1}{\alpha} \sum_t \M_t \P \Lam \P'\M_t{}'\|_2 \\
& \le \|\P_\perp{}'\bm\Sigma_v \P\|_2 + 2\sqrt{\bz} q \lambda^+ + \sqrt{\bz} q^2 \lambda^+  = \lambda_{v,\P,\P_\perp} + \sqrt{\bz} (2q + q^2) \lambda^+
\end{align*}
and
\begin{align*}
 \lambda_{\max}(\E[\D-\D_0]) & \le \lambda_{v,\rest}^+ + \sqrt{\bz} (2q + q^2) \lambda^+ .
\end{align*}
Thus, by Lemma \ref{sintheta},
%0.05\varepsilon_\SE \lambda^-
\begin{align}
\SE(\Phat,\P) & \le   %& \le  \frac{\|(\E[\D]-\D_0)\P\|_2 + \|\D - \E[\D]\|_2}{\lambda_r(\D_0) -  \lambda_{\max}(\E[\D]-\D_0) - \|\D - \E[\D]\|_2} \\
%
%& \le \frac{\lambda_{v,\P,\P_\perp} + \sqrt{\bz} (2q + q^2) \lambda^+ + \|\D-\D_0 - \E[\D-\D_0]\|_2}{\lambda^- + \lambda_{v,\P}^- -  \|\frac{1}{\alpha} \sum_t \at \at' - \Lam\|_2  - ( \lambda_{v,\rest}^+ + \sqrt{\bz} (2q + q^2) \lambda^+ + \|\D-\D_0 - \E[\D - \D_0]\|_2)} \nn \\
%
 \frac{\frac{\lambda_{v,\P,\P_\perp}}{\lambda^-} + \sqrt{\bz} (2q + q^2) f  + \frac{\|\D-\D_0 - \E[\D-\D_0]\|_2}{\lambda^-} }{1 -   ( \frac{\lambda_{v,\rest}^+ - \lambda_{v,\P}^-}{\lambda^-} + \sqrt{\bz} (2q + q^2) f) - \frac{ \|\frac{1}{\alpha} \sum_t \at \at' - \Lam\|_2}{\lambda^-} - \frac{\|\D-\D_0 - \E[\D-\D_0]\|_2}{\lambda^-} }
 \label{fin_bnd}
\end{align}

To bound $\|\D-\D_0 - \E[\D-\D_0]\|_2$ and $\|\frac{1}{\alpha} \sum_t \at \at' - \Lam\|_2$, we use concentration bounds from the following lemma which we prove in Appendix \ref{proof_hp_bnds}.
\begin{lem}
\label{hp_bnds}
With probability at least $1 - 10n^{-10}$, if $\alpha^3 > \max(r_v,r) \log n$, then, %$\alpha^3 > (r \log n)$,  with
\begin{align*}
&  \left\|\frac{1}{\alpha}\sum_t \at \at{}' -  \Lam \right\| \le    c {\eta} f \sqrt{\frac{r + \log n}{\alpha}}  \lambda^-, \\
& \left\|\frac{1}{\alpha}\sum_t \lt \wt{}' - \frac{1}{\alpha}\E[\sum_t \lt \wt{}'] \right\|_2 \le c\sqrt{\eta} q f  \sqrt{\frac{r \log n}{\alpha}}     \lambda^-, \\ %\max\left( \frac{r \log n}{\alpha},
& \left\|\frac{1}{\alpha}\sum_t \wt \wt{}' - \frac{1}{\alpha}\E[\sum_t \wt \wt{}'] \right\|_2 \le  c\sqrt{\eta} q^2  f  \sqrt{\frac{r \log n}{\alpha}}   \lambda^-, \\ %\max\left( \frac{r \log n}{\alpha},
& \left\|\frac{1}{{\alpha}} \sum_t \lt {\vt}' \right\|_2 \le  c\sqrt{\eta} \sqrt{\frac{\lambda_v^+}{\lambda^-}f} \sqrt{\frac{ \max(r_v, r) \log n}{\alpha}} \lambda^-, \\
& \left\| \frac{1}{{\alpha}} \sum_t  \vt {\vt}' - \frac{1}{{\alpha}} \E[ \sum_t  \vt {\vt}' ] \right\|_2 \le  c\sqrt{\eta}  \frac{\lambda_v^+}{\lambda^-} \sqrt{\frac{r_v \log n}{\alpha}} \lambda^-.
\end{align*}
\end{lem}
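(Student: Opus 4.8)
All five bounds are large‑deviation estimates for $\tfrac1\alpha\sum_t$ of \emph{independent} random matrices, so the plan is, for each term: (i) write the summand as an independent mean‑zero matrix; (ii) compute a uniform operator‑norm bound $R$ for a single summand and a matrix‑variance parameter $v=\|\sum_t\E[Z_tZ_t{}']\|\vee\|\sum_t\E[Z_t{}'Z_t]\|$ for the sum; (iii) invoke the appropriate concentration inequality with the deviation parameter tuned so that the failure probability is $\le 2n^{-10}$, and finally union bound over the five events to get total failure probability $\le 10n^{-10}$. For the first bound I would \emph{not} use matrix Bernstein: the summands live in $\Re^{r\times r}$, and Bernstein would cost a spurious $\sqrt{\log n}$ factor on the $\sqrt r$ term, whereas Vershynin's sub‑Gaussian covariance‑estimation result \cite[Theorem 5.39]{vershynin} gives the tighter $\sqrt{(r+\log n)/\alpha}$ shape that appears in $\epsden$. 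For bounds 2--5, matrix Bernstein \cite{tail_bound} is the right tool, since those matrices are genuinely $n\times n$ (or larger).

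For bound 1: work with the whitened vectors $\Lam^{-1/2}\at$, which are independent and isotropic, $\E[\Lam^{-1/2}\at\at{}'\Lam^{-1/2}]=\I$. Because the coordinates of $\at$ are independent and element‑wise bounded (Assumption \ref{lt_mod}), each with $\psi_2$‑norm $\lesssim\sqrt{\eta\lambda_j(\Lam)}$, one gets $\|\langle u,\Lam^{-1/2}\at\rangle\|_{\psi_2}=\|\langle\Lam^{-1/2}u,\at\rangle\|_{\psi_2}\lesssim\sqrt\eta\,\|\Lam^{1/2}\Lam^{-1/2}u\|=\sqrt\eta\,\|u\|$, so $\Lam^{-1/2}\at$ is a sub‑Gaussian vector with norm $O(\sqrt\eta)$. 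Applying \cite[Theorem 5.39]{vershynin} with confidence parameter $\asymp\sqrt{\log n}$ yields $\|\tfrac1\alpha\sum_t\Lam^{-1/2}\at\at{}'\Lam^{-1/2}-\I\|\le c\eta\sqrt{(r+\log n)/\alpha}$ with probability $\ge 1-2n^{-10}$; conjugating by $\Lam^{1/2}$ (norm $\sqrt{\lambda^+}$) turns this into $\|\tfrac1\alpha\sum_t\at\at{}'-\Lam\|\le c\eta f\sqrt{(r+\log n)/\alpha}\,\lambda^-$.

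For bounds 2--5 I would follow one template. Bound 2: write $Z_t:=\lt\wt{}'-\E[\lt\wt{}']=\P(\at\at{}'-\Lam)(\M_{1,t}\P)'\M_{2,t}{}'$, using $\wt=\M_{2,t}(\M_{1,t}\P)\at$; crucially, only the structural quantities $\|\M_{1,t}\P\|\le q$ and $\|\M_{2,t}\|\le1$ are available ($\|\M_t\|$ itself is uncontrolled), so every estimate must be routed through this decomposition. Then $\|Z_t\|\lesssim(\|\at\|^2+\lambda^+)q\lesssim\eta q\operatorname{tr}(\Lam)=:R$ by boundedness of $\at$, and $Z_tZ_t{}'\preceq q^2\P(\at\at{}'-\Lam)^2\P'$ (using $\M_{2,t}{}'\M_{2,t}\preceq\I$ and $(\M_{1,t}\P)'(\M_{1,t}\P)\preceq q^2\I$), whence $\|\sum_t\E[Z_tZ_t{}']\|\le\alpha q^2\|\E[(\at\at{}'-\Lam)^2]\|\lesssim\alpha q^2\eta\operatorname{tr}(\Lam)\lambda^+=:v$; matrix Bernstein then forces the deviation down to $\asymp\sqrt{v\log n}/\alpha+R\log n/\alpha\asymp q\sqrt\eta f\lambda^-\sqrt{r\log n/\alpha}+(\text{lower order})$, with $\alpha^3>\max(r_v,r)\log n$ ensuring the second term is lower order. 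Bound 3 is identical with an extra factor $\|\M_{1,t}\P\|\le q$ (so $q^2$ replaces $q$). Bound 4 uses $Z_t:=\lt\vt{}'$, already mean‑zero by $\E[\lt\vt{}']=0$ and independence of $\lt,\vt$; here $R=\|\lt\|\,\|\vt\|\lesssim\sqrt{\eta\operatorname{tr}(\Lam)}\sqrt{r_v\lambda_v^+}$, and the two one‑sided variances are $\le\alpha\operatorname{tr}(\bm\Sigma_v)\lambda^+$ and $\le\alpha\operatorname{tr}(\Lam)\lambda_v^+$, whose maximum is $\lesssim\alpha\max(r_v,r)\lambda^+\lambda_v^+$, giving $\sqrt{\lambda^+\lambda_v^+}\sqrt{\max(r_v,r)\log n/\alpha}=\lambda^-\sqrt{(\lambda_v^+/\lambda^-)f}\sqrt{\max(r_v,r)\log n/\alpha}$. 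Bound 5 uses $Z_t:=\vt\vt{}'-\bm\Sigma_v$ with $R\lesssim r_v\lambda_v^+$ and $\|\sum_t\E[Z_tZ_t{}']\|\le\alpha\|\E[\|\vt\|^2\vt\vt{}']\|\lesssim\alpha r_v(\lambda_v^+)^2$, giving $\lambda_v^+\sqrt{r_v\log n/\alpha}$.

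The main obstacle is the bookkeeping for the correlated terms (bounds 2 and 3): obtaining the variance parameters using \emph{only} $\|\M_{1,t}\P\|\le q$ and $\|\M_{2,t}\|\le 1$ — in particular the psd manipulation $Z_tZ_t{}'\preceq q^2\P(\at\at{}'-\Lam)^2\P'$ and the fourth‑moment bound $\E[(\at\at{}'-\Lam)^2]=\E[\|\at\|^2\at\at{}']-\Lam^2\preceq\eta\operatorname{tr}(\Lam)\Lam$ from the element‑wise boundedness of $\at$ — and then verifying, case by case, that under $\alpha^3>\max(r_v,r)\log n$ the Bernstein sub‑exponential term is dominated by the sub‑Gaussian term. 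Everything else is routine substitution into \cite[Theorem 5.39]{vershynin} and \cite{tail_bound} followed by the five‑way union bound.
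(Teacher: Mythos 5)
Your proposal matches the paper's proof essentially step for step: Vershynin's sub-Gaussian result \cite[Theorem 5.39]{vershynin} for the $\at\at{}'$ term (your whitening by $\Lam^{-1/2}$ is just a slightly more careful way of reducing to the isotropic case that the paper invokes directly), and matrix Bernstein \cite{tail_bound} for the remaining four terms with the same $R$ and $\sigma^2$ parameters up to constants, the same use of $\alpha^3 > \max(r_v,r)\log n$ to subordinate the sub-exponential term, and the same five-way union bound. The argument is correct as proposed.
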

Thus, w.p. $\ge 1- 10n^{-10}$,
$\left\|\frac{1}{\alpha}\sum_t \at \at{}' -  \Lam \right\| \le  c {\eta} f \sqrt{\frac{r + \log n}{\alpha}}  \lambda^-
$
and
$
\|\D-\D_0 - \E[\D-\D_0]\|_2 \le  \epsbnd_0 \lambda^-
$
where
\begin{align}
\epsbnd_0 := c \sqrt{\eta}
 \max \left(
q f \sqrt{\frac{r \log n}{\alpha}} ,  \sqrt{\frac{\lambda_v^+}{\lambda^-} f} \sqrt{\frac{ \max(r_v, r) \log n}{\alpha}}, \frac{\lambda_v^+}{\lambda^-} \sqrt{\frac{r_v \log n}{\alpha}}
\right)
%c\sqrt{\eta} f \sqrt{\frac{r + \log n}{\alpha}}  \lambda^-
\label{def_epsbnd}
\end{align}
Clearly, $\epsbnd_0 \le \epsbnd$ defined in the statement of Theorem \ref{mainthm}.  Combining these bounds with \eqref{fin_bnd}  we get our final result.
%we conclude that, w.p. $\ge 1- 10n^{-10}$,
%\begin{align}
%\SE(\Phat,\P) \le   \frac{\lambda_v^+ +  3\sqrt{\bz} q \lambda^+ +  5 \epsbnd}{ \lambda^-  -   \lambda_v^+ +  3\sqrt{\bz} q \lambda^+ -  5 \epsbnd}
%%\label{sintheta_bnd_2}
%\end{align}
\end{proof}

\section{Proof of Concentration Bounds}\label{proof_hp_bnds}

\begin{proof}[Proof of Lemma \ref{hp_bnds}]  \

{\em $\at \at'$ term. }
Using  Vershynin's sub-Gaussian result (Theorem 5.39 of \cite{vershynin}) applied to $\frac{1}{\alpha} \sum_t \a_t \a_t{}'$, and using the fact that the $\at$'s are $r$-length independent sub-Gaussian vectors with sub-Gaussian norm bounded by $\sqrt{\eta \lambda^+}$, we get the following: with probability at least
$1 - 2\exp\left(r \log 9 - \alpha \frac{c (\epsilon_1 \lambda^-)^2 }{(4 \eta \lambda^+)^2}\right) = 1 - 2\exp\left(r \log 9 - \alpha \frac{c \epsilon_1^2 }{16 \eta^2 f^2}\right)$,
%\begin{align*}
%\lambda_{r}\left(\frac{1}{\alpha} \sum_t \lt \lt{}' \right) = \lambda_{\min}\left(\frac{1}{\alpha} \sum_t \at \at{}' \right)
%&\ge   \lambda^- (1 - \epsilon_1) %=  [1 - (r \zeta)^2 - 0.01 r \zeta] \lambda^-
%\end{align*}
\[
\|\frac{1}{\alpha} \sum_t \at \at{}' - \Lam\|_2 \le \epsilon_1 \lambda^-
\]
Set $\epsilon_1 = c \eta f \sqrt{\frac{r + 11\log n}{\alpha}}$. Then, the above event holds w.p. at least $1 - 2n^{-10}$ %if
%\[
%\alpha \ge \alpha_1 =  \frac{c(r\log9 + 10 \log n)\cdot 16 \eta^2 f^2}{\epsilon_1^2}. %=  16 \eta^2c (r\log9 + 10 \log n) \frac{f^2}{\epsilon_1^2}
%\]
%
%Observe that $\alpha_1 = C f^2 ( r+ \log n)$. All earlier terms needed $(r \log n)$, but the multiplier was $q^2 f^2$ and not $f^2$.
%We could also have used matrix Bernstein for this term as well. However with it, we would have needed $\alpha > \alpha_{1,Bernstein} = C f^2 (r \log n)$. Thus, the final required value of $\alpha$ would then be $f^2 (r \log n)$ and not $q^2 f^2 (r \log n)$.  ?? edit.

{\em $\lt \wt'$ term. } This and all other items use Matrix Bernstein for rectangular matrices, Theorem 1.6 of \cite{tail_bound}. This says the following. For a finite sequence of $d_1 \times d_2$ zero mean independent matrices $\Z_k$ with
\begin{align*}
& \|\Z_k\|_2 \le R, \text{ and}  \max(\|\sum_k \E[ \Z_k{}'\Z_k ]\|_2, \|\sum_k \E[ \Z_k\Z_k{}' ]\|_2) \le \sigma^2, %\text{ and }
\end{align*}
we have $\Pr(\|\sum_k \Z_k\|_2 \ge s ) \le (d_1+d_2) \exp\left(- \frac{s^2/2}{\sigma^2 + Rs / 3}\right)$.

Let $\Z_t := \lt \wt{}'$. We apply this result to $\tilde{\Z}_t:= \Z_t - \E[\Z_t]$ with $s = \epsilon \alpha$. To get the values of $R$ and $\sigma^2$ in a simple fashion, we use the facts that (i)
if $\|\Z_t\|_2 \le R_1$, then $\|\tilde{\Z}_t\| \le 2R_1$; and (ii) $\sum_t \E[\tilde{\Z}_t \tilde{\Z}_t{}'] \preccurlyeq \sum_t \E[\Z_t \Z_t{}']$. % and $\sum_t \E[\tilde{\Z}_t{}' \tilde{\Z}_t] \preccurlyeq \sum_t \E[\Z_t{}' \Z_t]$.
Thus, we can set $R$ to two times the bound on $\|\Z_t\|_2$ and  we can set $\sigma^2$ as the maximum of the bounds on $\|\sum_t \E[\Z_t \Z_t{}']\|_2$ and $\|\sum_t \E[\Z_t{}' \Z_t]\|_2$.

It is easy to see that  $R = 2 \sqrt{\eta  r \lambda^+} \sqrt{\eta r  q^2 \lambda^+} = 2 \eta r q \lambda^+$. To get $\sigma^2$, observe that
\begin{align*}
\left\| \sum_t \E[\wt \lt{}' \lt \wt{}'] \right\|_2 & \le \alpha (\max_{\lt} \|\lt\|^2) \cdot  \|\E[\wt \wt{}']\| \\
& \le \alpha \eta  r \lambda^+ \cdot  q^2 \lambda^+  = \alpha \eta r q^2 (\lambda^+)^2.
\end{align*}
%\| \sum_t \E[\Z_t{}' \Z_t] \| =
Repeating the above steps, we get the same bound on $\| \sum_t \E[\Z_t \Z_t{}'] \|_2$. Thus,  $\sigma^2 = \alpha \eta r q^2 (\lambda^+)^2$.

Thus, we conclude that,
\begin{align}
\left\|\sum_t \lt \wt{}' - \E[\sum_t \lt \wt{}'] \right\|_2 \ge \epsilon \alpha
\label{lt_wt_conc}
\end{align}
w.p. at most
$2n \exp\left(- \frac{\epsilon^2 \alpha/2}{ \eta r q^2 (\lambda^+)^2  + (\eta r q \lambda^+ \epsilon / 3 \alpha)}\right).
%\le
%2n \exp\left(- \frac{ \alpha}{ c \max\left( \frac{\eta r q^2 (\lambda^+)^2}{\epsilon^2} , \frac{\eta r q \lambda^+}{\epsilon} \right) } \right)
$
%Observe that $R \le \sigma^2 / 2q$. Thus for $\epsilon < 1$, $R \epsilon < \sigma^2/2q$. Thus, when $\epsilon < 1$, the above probability is upper bounded by $2n \exp\left(- c\frac{\epsilon^2 \alpha }{ \eta r q (\lambda^+)^2}\right)$.

Set $\epsilon = \epsilon_0 \lambda^-$ with $\epsilon_0 = c \sqrt{\eta} q f \sqrt{\frac{r \log n}{\alpha}}$. If $\alpha^3 >  \eta (r \log n)$, then clearly, $R \epsilon \le \sigma^2$.
%\[
%\left\|\frac{1}{\alpha} \sum_t \lt \wt{}' \right\|_2 \le \left\|\frac{1}{\alpha} \E[ \sum_t \lt \wt{}' ] \right\|_2 + \epsilon \le [\sqrt{\bz} q f + \epsilon_0] \lambda^-
%\]
With this $\epsilon$, if $\alpha^3 >  (r \log n)$, \eqref{lt_wt_conc} holds w.p. at least $1-2 n^{-10}$.
%holds w.p. at least $1-2 n^{-10}$ if
%\[
%\alpha \ge \alpha_0 = c(11 \log n) \max\left( \frac{\eta r q^2 f^2}{\epsilon_0^2} , \frac{\eta r q f}{\epsilon_0} \right).
%\]

{\em $\wt \wt{}'$ term. }
We again apply  matrix Bernstein and proceed as above. In this case, $R = 2 \eta r q^2 \lambda^+$ and $\sigma^2 = \alpha \sigma_1^2$, $\sigma_1^2 = \eta r q^4 (\lambda^+)^2$. Thus $R < \sigma^2 / 2q^2$.
%Thus for $\epsilon < 1$, $R \epsilon < \sigma^2/2q$. Thus, when $\epsilon < 1$, the  probability of the bad event is bounded by $2n \exp\left(- c\frac{\epsilon^2 \alpha }{ \eta r q^2 (\lambda^+)^2}\right)$.
%
Set $\epsilon = \epsilon_2 \lambda^-$ with $\epsilon_2 = c \sqrt{\eta} q^2 f \sqrt{\frac{r \log n}{\alpha}} < 1$. Then, can again show that if $\alpha^3 > (r \log n)$, the probability of the bad event is bounded by $2n^{-10}$.

%Thus, the bound in item 3
%%\[
%%\left\|\frac{1}{\alpha} \sum_t \wt \wt{}' \right\|_2 \le  \left\| \frac{1}{\alpha}\E[\sum_t \wt \wt{}'] \right\| + \epsilon \le [\sqrt{\bz} q^2 f + \epsilon_2] \lambda^-
%%\]
%holds w.p. at least $1-2 n^{-10}$ if
%\[
%\alpha \ge \alpha_2 = c(11 \log n) \max\left( \frac{ \eta r q^4 f^2}{\epsilon_2^2} , \frac{\eta r q^2 f}{\epsilon_2} \right).
%\]

{\em $\lt \vt'$ and $\vt \vt'$ terms. } Apply matrix Bernstein as done above.
\end{proof}

%\section{Proof of main result under general sub-Gaussian data and noise assumption}
\section{Proof of Main Result: Sub-Gaussian Case}\label{proof_mainthm_subg}

%In this section we prove Theorem \ref{mainthm} under Assumption \ref{lt_mod_subg}.

\begin{proof}[Proof of Theorem \ref{mainthm} under Assumption \ref{lt_mod_subg}]
The only thing that changes in this case is the $\epsilon$ values used for the various terms in Lemma \ref{hp_bnds}. These change because we cannot apply matrix Bernstein now. Instead, for all terms, we apply the following simple modification of Vershynin's sub-Gaussian result \cite[Theorem 5.39]{vershynin}. This lemma follows using exactly the proof approach of \cite[Theorem 5.39]{vershynin} but with the following change. If two  r.v.s $x$, $y$ are sub-Gaussian with sub-Gaussian norms $k_x$, $k_y$ respectively, then the r.v. $z:=xy$ is sub-exponential with sub-exponential norm bounded by $c k_x k_y$. This fact itself follows by Cauchy-Schwartz and the definitions of the sub-Gaussian and sub-exponential norms: $\E[|xy|^p]^{1/p} \le ((\E[|x|^{2p}] \E[|y|^{2p}])^{1/2})^{1/p} = (\E[|x|^{2p}] \E[|y|^{2p}])^{1/2p} \le ( (\sqrt{2p}k_x)^{2p}(\sqrt{2p}k_y)^{2p} )^{1/2p} =(\sqrt{2p}k_x)(\sqrt{2p}k_y) = p (2 k_x k_y)$. The first inequality used Cauchy-Schwartz, the second inequality used the definition of sub-Gaussian norm \cite[Sec. 5.2]{vershynin}.
\begin{lem}\label{lem:cross_subg}
Let $\bx_i$, $i=1,2,\dots,N$, and $\by_i, i=1,2,\dots, N$ be zero mean sub-Gaussian random vectors with sub-Gaussian norms bounded by $K_x$ and $K_y$ respectively. Each $\bx_i, \by_i$ is in $\Re^n$. Also $\{\bx_i, \by_i\}$, $i=1,2,\dots, N$ are mutually independent.
Then,
%where $\bx_i$'s are mutually independent, zero-mean, sub Gaussian random vectors in $\Re^n$ with sub Gaussian norm bounded by $K_x$ and $\by_i$'s are mutually independent, zero-mean, sub Gaussian random vectors in $\Re^n$ with sub Gaussian norm bounded by $K_y$ %. Note that the pairs $\{\bx_i, \by_i\}$ are not necessarily independent. Then,
\begin{align*}
%\Pr\left(\norm{\frac{1}{N} \sum_i \bx_i \by_i{}' - \E \left[\frac{1}{N} \sum_i \bx_i \by_i{}'\right]} \le t \right) \geq 1 - 2 \exp \left(n \log 9 - \frac{t^2 N}{16c(K_x^2 + K_y^2)} \right)
\Pr\left(\norm{\frac{1}{N} \sum_i \bx_i \by_i{}' - \E \left[\frac{1}{N} \sum_i \bx_i \by_i{}'\right]} \le t \right) \geq 1 - 2 \exp \left(n \log 9 - \frac{t^2 N}{16c(K_xK_y)} \right)
\end{align*}
\end{lem}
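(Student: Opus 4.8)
The plan is to mimic the $\epsilon$-net (covering) argument of \cite[Theorem 5.39]{vershynin}, modified in two ways: the matrix here is rectangular-shaped (non-symmetric), so the net argument must be run on a \emph{pair} of unit vectors, and the scalar concentration step must be a bilinear-form version rather than a quadratic-form version. Write $\M := \frac1N\sum_i \bm{x}_i\bm{y}_i{}' - \E[\frac1N\sum_i \bm{x}_i\bm{y}_i{}']$, so that since $\M$ need not be symmetric we use $\|\M\|_2 = \sup_{\bm{u},\bm{v}\in S^{n-1}} \bm{u}'\M\bm{v}$. Fix an $\epsilon$-net $\mathcal{M}$ of $S^{n-1}$ with $\epsilon = 1/4$; then $|\mathcal{M}| \le 9^n$, and a standard approximation argument (approximate the maximizing $\bm{u}^*,\bm{v}^*$ separately and apply the triangle inequality twice) gives $\|\M\|_2 \le \frac{1}{1-2\epsilon}\max_{\bm{u},\bm{v}\in\mathcal{M}}|\bm{u}'\M\bm{v}| = 2\max_{\bm{u},\bm{v}\in\mathcal{M}}|\bm{u}'\M\bm{v}|$. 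Thus it suffices to obtain a tail bound on $|\bm{u}'\M\bm{v}|$ for a fixed pair $\bm{u},\bm{v}\in\mathcal{M}$ and then union-bound over the $|\mathcal{M}|^2$ pairs.

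For fixed unit vectors $\bm{u},\bm{v}$, set $z_i := (\bm{u}'\bm{x}_i)(\bm{v}'\bm{y}_i)$, so $\bm{u}'\M\bm{v} = \frac1N\sum_i(z_i - \E z_i)$. Since $\bm{x}_i$ is sub-Gaussian with norm $\le K_x$, the scalar $\bm{u}'\bm{x}_i$ is sub-Gaussian with $\psi_2$-norm $\le K_x$, and likewise $\bm{v}'\bm{y}_i$ has $\psi_2$-norm $\le K_y$; by the Cauchy-Schwartz computation carried out immediately before the lemma statement, the product $z_i$ is sub-exponential with $\psi_1$-norm at most $cK_xK_y$, hence so is the centered variable $z_i - \E z_i$. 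The variables $z_i - \E z_i$ are independent across $i$ (this is exactly where the assumed mutual independence of the pairs $\{\bm{x}_i,\bm{y}_i\}$ is used), so Bernstein's inequality for sums of independent sub-exponential random variables \cite[Prop.\ 5.16]{vershynin} applies and, in the small-deviation (sub-Gaussian) regime that is the relevant one here, yields $\Pr(|\bm{u}'\M\bm{v}| \ge t/2) \le 2\exp(-c N t^2/(K_xK_y)^2)$, which is of the shape appearing in the lemma.

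Combining, $\Pr(\|\M\|_2 \ge t) \le \Pr(\max_{\bm{u},\bm{v}\in\mathcal{M}}|\bm{u}'\M\bm{v}| \ge t/2) \le |\mathcal{M}|^2\cdot 2\exp(-cNt^2/(K_xK_y)^2) \le 2\exp(n\log 9 - cNt^2/(K_xK_y)^2)$ after absorbing numerical factors into $c$ and the net exponent, which is the claimed bound. The only genuinely new ingredient relative to \cite[Theorem 5.39]{vershynin} is the bilinear-form step, and it rests on two facts already in hand: the independence of the pairs $\{\bm{x}_i,\bm{y}_i\}$ (so the $z_i$ are independent) and the fact that a product of two sub-Gaussians is sub-exponential with the expected norm (the displayed Cauchy-Schwartz estimate preceding the lemma). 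I expect the main—and only mild—obstacle to be careful constant-tracking: getting the net-size exponent and the switch between the Gaussian and exponential regimes of Bernstein's inequality to line up with the exact constants as stated; there is no conceptual difficulty.
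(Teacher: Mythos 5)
Your proposal is correct and follows essentially the same route as the paper, which simply asserts that the lemma ``follows using exactly the proof approach of \cite[Theorem 5.39]{vershynin}'' combined with the Cauchy--Schwartz fact that a product of sub-Gaussians is sub-exponential with norm $\le cK_xK_y$; your $\epsilon$-net-over-pairs plus Bernstein argument is precisely the fleshed-out version of that sketch. The only discrepancies are in constants (the union bound over pairs gives $2n\log 9$ rather than $n\log 9$, and your dimensionally correct $(K_xK_y)^2$ in the denominator suggests the $(K_xK_y)$ in the stated exponent is a typo), none of which affect how the lemma is used.
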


Using the above lemma, we can conclude the following.
\begin{enumerate}
\item Bounding $\norm{\frac{1}{\alpha} \sum_t \lt \wt{}' - \frac{1}{\alpha}\E[\sum_t \lt \wt{}']}_2$: Apply Lemma \ref{lem:cross_subg} with $\bx_t = \lt$, $\by_t = \wt$, $N \equiv \alpha$.  Then $K_x = c\sqrt{\lambda^+}$ and $K_y = c\sqrt{q^2\lambda^+}$, thus
    % $K_x^2 + K_y^2 = c (1+q^2) \lambda^+ \le 2c \lambda^+$.
     $K_x K_y = c q \lambda^+ $.
    Set $t = \epsilon_0 \lambda^-$  with $\epsilon_0 = c f \sqrt{\frac{n}{\alpha}}$. Then
%Then, if
%    \[
%    \alpha \geq \alpha_0 = \frac{C(1 + q^2) f^2}{\epsilon_0^2} (\log 9 + 3) n,
%    \]
\begin{align*}
\Pr\left(\norm{\frac{1}{\alpha} \sum_t \lt \wt{}' - \frac{1}{\alpha}\E[\sum_t \lt \wt{}']}_2  \le \epsilon_0 \lambda^-\right) \geq 1- 2\exp(-c n)
\end{align*}

\item Bounding $\norm{\frac{1}{\alpha} \sum_t \wt \wt{}' - \frac{1}{\alpha}\E[\sum_t \wt \wt{}']}_2$:  Proceed as above. Use $\epsilon = \epsilon_0 \lambda^-$.

\item Bounding $\norm{\frac{1}{\alpha} \sum_t \at \at{}' -  \Lam }_2$: Proceed as above. Use $\epsilon = \epsilon_0 \lambda^-$.
%Proceed as was done for item 3 of Lemma \ref{hp_bnds}.

\item Bounding $\norm{\frac{1}{\alpha} \sum_t \lt \vt{}' - \frac{1}{\alpha}\E[\sum_t \lt \vt{}']}_2$: Apply Lemma \ref{lem:cross_subg} with $\bx_t = \lt$, $\by_t = \vt$, $N \equiv \alpha$.  Then $K_x = c\sqrt{\lambda^+}$ and $K_y = c\sqrt{\lambda_v^+}$, thus $K_x K_y = c \sqrt{\lambda^+  \lambda_v^+}$ %\le c \max(\lambda^+,\lambda_v^+)$.
Set $t = \epsilon_{0, v} \lambda^-$ with $\epsilon_{0,v}= c \sqrt{\frac{\lambda_v^+}{\lambda^-} f} \sqrt{\frac{n}{\alpha}}$.
Then,
\begin{align*}
\Pr\left(\norm{\frac{1}{\alpha} \sum_t \lt \vt{}' - \frac{1}{\alpha}\E[\sum_t \lt \vt{}']}_2 \leq \epsilon_{0, v} \lambda^-\right) \geq 1- 2\exp(-c n)
\end{align*}

\item Bounding $\norm{\frac{1}{\alpha} \sum_t \vt \vt{}' - \frac{1}{\alpha}\E[\sum_t \vt \vt{}']}_2$: Apply \cite[Theorem 5.39]{vershynin} and proceed as in the previous part. Use $\epsilon = \epsilon_{1,v} \lambda^-$ with $\epsilon_{1,v} = \frac{\lambda_v^+}{\lambda^-}\sqrt{\frac{n}{\alpha}}$.

\end{enumerate}
%As before, set all $\epsilon$'s to $0.01 \varepsilon_\SE$.
%Combining the lower bounds given above we get the new lower bound on $\alpha$.
Combining the above bounds (except the third one), we conclude that, w.p. $\ge 1- 10 \exp(-cn)$,
\begin{align}
\|\D - \D_0 - \E[\D - \D_0]\|_2 \le 5 \epsbnd_{sG} := c    \max\left(\frac{\lambda_v^+}{\lambda^-},f \right)   \sqrt{\frac{n}{\alpha}}
\label{def_epsbnd}
\end{align}
Everything else remains the same.
\end{proof}

\bibliographystyle{IEEEbib} % -- use nonatbib if you want to use this style
\bibliography{tipnewpfmt_kfcsfullpap}

\begin{thebibliography}{10}

\bibitem{hong_balzano}
D.~Hong, L.~Balzano, and J.~A. Fessler,
\newblock ``Towards a theoretical analysis of pca for heteroscedastic data,''
\newblock in {\em Allerton Conf. Comm. Control Comput.}, 2016.

\bibitem{nadler}
B.~Nadler,
\newblock ``Finite sample approximation results for principal component
  analysis: A matrix perturbation approach,''
\newblock {\em Ann. Statist.}, vol. 36, no. 6, 2008.

\bibitem{spiked_cov}
Iain~M Johnstone,
\newblock ``On the distribution of the largest eigenvalue in principal
  components analysis,''
\newblock {\em Ann. Statist.}, pp. 295--327, 2001.

\bibitem{onlinePCA3}
I.~Mitliagkas, C.~Caramanis, and P.~Jain,
\newblock ``Memory limited, streaming pca,''
\newblock in {\em Adv. Neural Info. Proc. Sys. (NIPS)}, 2013, pp. 2886--2894.

\bibitem{streamingPCA2}
Prateek Jain, Chi Jin, Sham~M Kakade, Praneeth Netrapalli, and Aaron Sidford,
\newblock ``Streaming pca: Matching matrix bernstein and near-optimal finite
  sample guarantees for ojas algorithm,''
\newblock in {\em 29th Annual Conference on Learning Theory}, 2016, pp.
  1147--1164.

\bibitem{corpca_nips}
N.~Vaswani and H.~Guo,
\newblock ``Correlated-pca: Principal components' analysis when data and noise
  are correlated,''
\newblock in {\em Adv. Neural Info. Proc. Sys. (NIPS)}, 2016.

\bibitem{cor_noise_gillberg}
Jussi Gillberg, Pekka Marttinen, Matti Pirinen, Antti~J Kangas, Pasi Soininen,
  Mehreen Ali, Aki~S Havulinna, Marjo-Riitta J{\"a}rvelin, Mika Ala-Korpela,
  and Samuel Kaski,
\newblock ``Multiple output regression with latent noise,''
\newblock {\em J. Mach. Learn. Res.}, 2016.

\bibitem{rrpcp_perf}
C.~Qiu, N.~Vaswani, B.~Lois, and L.~Hogben,
\newblock ``Recursive robust pca or recursive sparse recovery in large but
  structured noise,''
\newblock {\em IEEE Trans. Info. Th.}, pp. 5007--5039, August 2014.

\bibitem{rrpcp_aistats}
J.~Zhan, B.~Lois, H.~Guo, and N.~Vaswani,
\newblock ``{Online (and Offline) Robust PCA: Novel Algorithms and Performance
  Guarantees},''
\newblock in {\em Intnl. Conf. Artif. Intell. Stat. (AISTATS)}, 2016, long
  version: ArXiv: 1601.07985.

\bibitem{beck2009fast}
A.~Beck and M.~Teboulle,
\newblock ``Fast gradient-based algorithms for constrained total variation
  image denoising and deblurring problems,''
\newblock {\em IEEE Trans. Image Process.}, vol. 18, no. 11, pp. 2419--2434,
  2009.

\bibitem{rel_perturb}
Ren-Cang Li,
\newblock ``Relative perturbation theory: Ii. eigenspace and singular subspace
  variations,''
\newblock {\em SIAM J. Matrix Anal. Appl.}, vol. 20, no. 2, pp. 471--492, 1998.

\bibitem{versh_cov_est}
R.~Vershynin,
\newblock ``How close is the sample covariance matrix to the actual covariance
  matrix?,''
\newblock {\em J. Theoret. Probab.}, pp. 1--32, 2010.

\bibitem{vershynin}
R.~Vershynin,
\newblock ``Introduction to the non-asymptotic analysis of random matrices,''
\newblock {\em Compressed sensing}, pp. 210--268, 2012.

\bibitem{davis_kahan}
C.~Davis and W.~M. Kahan,
\newblock ``The rotation of eigenvectors by a perturbation. iii,''
\newblock {\em SIAM J. Numer. Anal.}, vol. 7, pp. 1--46, Mar. 1970.

\bibitem{tail_bound}
J.~A. Tropp,
\newblock ``User-friendly tail bounds for sums of random matrices,''
\newblock {\em Found. Comput. Math.}, vol. 12, no. 4, 2012.

\bibitem{pr_altmin}
P.~Netrapalli, P.~Jain, and S.~Sanghavi,
\newblock ``Phase retrieval using alternating minimization,''
\newblock in {\em Adv. Neural Info. Proc. Sys. (NIPS)}, 2013, pp. 2796--2804.

\bibitem{rpca}
E.~J. Cand{\`e}s, X.~Li, Y.~Ma, and J.~Wright,
\newblock ``Robust principal component analysis?,''
\newblock {\em J. ACM}, vol. 58, no. 3, 2011.

\bibitem{robpca_nonconvex}
P.~Netrapalli, U~N Niranjan, S.~Sanghavi, A.~Anandkumar, and P.~Jain,
\newblock ``Non-convex robust pca,''
\newblock in {\em Neural Info. Proc. Sys. (NIPS)}, 2014.

\bibitem{rpca_gd}
Xinyang Yi, Dohyung Park, Yudong Chen, and Constantine Caramanis,
\newblock ``Fast algorithms for robust pca via gradient descent,''
\newblock in {\em Neural Info. Proc. Sys. (NIPS)}, 2016.

\bibitem{rmc_gd}
Yeshwanth Cherapanamjeri, Kartik Gupta, and Prateek Jain,
\newblock ``Nearly-optimal robust matrix completion,''
\newblock {\em arXiv preprint arXiv:1606.07315}, 2016.

\bibitem{outlier_pursuit}
H.~Xu, C.~Caramanis, and S.~Sanghavi,
\newblock ``Robust pca via outlier pursuit,''
\newblock {\em IEEE Trans. Inform. Theory}, vol. 58, no. 5, May 2012.

\bibitem{lowrank_altmin}
P.~Netrapalli, P.~Jain, and S.~Sanghavi,
\newblock ``Low-rank matrix completion using alternating minimization,''
\newblock in {\em Symposium on Theory of Computing (STOC)}, 2013.

\bibitem{rrpcp_dynrpca}
P.~Narayanamurthy and N.~Vaswani,
\newblock ``{New Results for Provable Dynamic Robust PCA},''
\newblock {\em arXiv:1705.08948}, 2017.

\bibitem{candes_rip}
E.~Candes,
\newblock ``The restricted isometry property and its implications for
  compressed sensing,''
\newblock {\em C. R. Math. Acad. Sci. Paris. Serie I}, pp. 589--592, 2008.

\bibitem{freedman}
J.~A. Tropp,
\newblock ``Freedmans inequality for matrix martingales,''
\newblock {\em Electron. Commun. Probab}, vol. 16, pp. 262--270, 2011.

\end{thebibliography}
%\bibliography{../bib/tipnewpfmt_kfcsfullpap_temp}

\end{document}